
\documentclass{article}

\usepackage{microtype}
\usepackage{graphicx}
\usepackage{subfigure}
\usepackage{booktabs} 

\usepackage{hyperref}



\usepackage[accepted]{icml2025}

\usepackage{amsmath}
\usepackage{amssymb}
\usepackage{mathtools}
\usepackage{amsthm}
\usepackage{thmtools, thm-restate}
\usepackage{natbib}
\usepackage{mywidebar}

\usepackage[capitalize,noabbrev]{cleveref}


\newcommand{\T}{\mathrm{T}}

\newcommand{\G}{\mathcal{G}}
\newcommand{\R}{\mathbb{R}}
\newcommand{\X}{\mathcal{X}}
\newcommand{\Y}{\mathcal{Y}}
\newcommand{\C}{\mathbb{C}}
\newcommand{\rank}{\operatorname*{rank}}
\DeclarePairedDelimiterX{\norm}[1]{\lVert}{\rVert}{#1}

\theoremstyle{plain}
\newtheorem{theorem}{Theorem}[section]
\newtheorem{proposition}[theorem]{Proposition}
\newtheorem{lemma}[theorem]{Lemma}
\newtheorem{corollary}[theorem]{Corollary}
\theoremstyle{definition}
\newtheorem{definition}[theorem]{Definition}

\theoremstyle{remark}
\newtheorem{remark}[theorem]{Remark}
\theoremstyle{remark}
\newtheorem{example}[theorem]{Example}

\usepackage[textsize=tiny]{todonotes}

\allowdisplaybreaks

\icmltitlerunning{Towards a Theoretical Understanding of Learning Invariance in Deep Linear Networks via Loss Landscapes}

\begin{document}

\twocolumn[
\icmltitle{Understanding Learning Invariance in Deep Linear Networks}




\begin{icmlauthorlist}
\icmlauthor{Hao Duan}{stats}
\icmlauthor{Guido Mont\'ufar}{stats,math}
\end{icmlauthorlist}
    
\icmlaffiliation{stats}{Department of Statistics \& Data Science, University of California, Los Angeles, CA}
\icmlaffiliation{math}{Department of Mathematics, University of California, Los Angeles, CA}

\icmlcorrespondingauthor{Hao Duan}{hduan7@ucla.edu}

\icmlkeywords{Invariant Models, Data Augmentation, Deep Linear Networks, Low Rank Approximation, Regularization}

\vskip 0.3in
]



\printAffiliationsAndNotice{}  

\begin{abstract}
    Equivariant and invariant machine learning models exploit symmetries and structural patterns in data to improve sample efficiency. While empirical studies suggest that data-driven methods such as regularization and data augmentation can perform comparably to explicitly invariant models, theoretical insights remain scarce. In this paper, we provide a theoretical comparison of three approaches for achieving invariance: data augmentation, regularization, and hard-wiring. We focus on mean squared error regression with deep linear networks, which parametrize rank-bounded linear maps and can be hard-wired to be invariant to specific group actions. We show that the critical points of the optimization problems for hard-wiring and data augmentation are identical, consisting solely of saddles and the global optimum. By contrast, regularization introduces additional critical points, though they remain saddles except for the global optimum. Moreover, we demonstrate that the regularization path is continuous and converges to the hard-wired solution.
\end{abstract}

\section{Introduction}
Equivariant and invariant models are a class of machine learning models designed to incorporate specific symmetries or invariances that are known to exist in the data. 
An equivariant model ensures that when the input undergoes a certain transformation, the model’s output transforms in a predictable way. Many powerful hard-wired equivariant and invariant structures have been proposed over the recent years \citep[see, e.g.,][]{pmlr-v48-cohenc16, deepset, e3nn_paper, equiformer_v2}. 
Such models are widely employed and have achieved state-of-the-art level performance across various scientific fields, including condensed-matter physics \citep{fang2023phonon}, catalyst design \citep{zitnick2020}, drug discovery \citep{Igashov2024-tq}, as well as several others. 

Given an explicit description of the desired invariance and equivariance structures, a direct way to implement them is by hard-wiring a neural network in a way that constraints the types of functions that it can represent so that they are contained within the desired class. 
Another intuitive method to approximately enforce invariance and equivariance is data augmentation, where one instead supplies additional data in order to guide the network towards selecting functions from the desired class.
Both approaches have shown to be viable for obtaining invariant or equivariant solutions \citep[see, e.g.,][]{gerken_2024, moskalev2023ginvariance}. 
However, it is not entirely clear how the learning processes and in particular the optimization problems compare. 
An obvious drawback of data augmentation is that the number of model parameters as well as the number of training data points may be large. 
On the other hand, it is known that constrained models \citep{finzi_2021}, or underparameterized models, can have a more complex optimization landscape, but the specific interplay between the amount of data and the structure of the data is not well understood. 
We are interested in the following question: how do invariance, regularization, and data augmentation influence the optimization process and the resulting solutions of learning? 
To start developing an understanding, we investigate the simplified setting of invariant linear networks, for which we study the static loss landscape of the three respective optimization problems.

The loss landscapes of neural networks are among the most intriguing and actively studied topics in theoretical deep learning. 
In particular, a series of works has documented the benefits of overparameterization in making the optimization landscape more benevolent \citep[see, e.g.,][]{poston1991local, gori1992problem, soltanolkotabi2019theoretical, simsek2021a, simsek2023, karhadkar2024mildly}. 
This stands at odds with the success of data augmentation, since when using data augmentation as done in practice, even enormous models may no longer be overparameterized and may have fewer parameters than the number of training data points \citep[see, e.g.,][]{Garg22, Belkin19}. 
Beyond overparameterization, the effects of different architecture choices on the loss landscape are of interest \citep[see, e.g.,][]{li2018visualizing}. 
As mentioned above, equivariant and invariant architectures are of particular interest, as they could potentially help dramatically reduce the sample complexity of learning within a clearly defined framework. 
This has been documented theoretically in a recent stream of works \citep[see, e.g.,][]{mei21a, tahmasebi2023}. 
However, the impact of these architecture choices on the optimization landscape is still underexplored. 
Equivariant linear networks have received interest as simplified models to obtain concrete and actionable insights for more complex neural networks \citep[see, e.g.,][]{chen2023implicit, kohn2022geometry, zhao2023symmetries, nordenfors2024optimization}.

Our work advances this line of investigation by considering the optimization problems arising from data augmentation, regularization and hard-wiring. 
We consider linear networks whose end-to-end functions are rank-constrained and thus cannot be simply re-parameterized as linear models. 
The non-convexity of the function space makes it nontrivial to draw conclusions about the impact of the constraints imposed by invariances. 
These models are a natural point of departure to study other networks with nonlinear function space, such as networks with nonlinear activation functions. 
We observe in particular that rank constraints are common in practice. For example, in generative models such as variational autoencoders (VAEs) \citep{kingma_2022}, the hidden layer is usually narrower than the input and output layers with the purpose of capturing a low-dimensional latent representation of the data. In large language models, low-rank adaptation (LoRA) \citep{hu_2021} is also used to reduce the number of trainable parameters for downstream tasks. 
In these and other cases where the architecture has a narrow intermediate linear layer, rank constraints arise.

\subsection{Contributions}
In this work, we study the impact of invariance in learning by considering and comparing the optimization problems that arise in linear invariant neural networks with a non-linear function space. 

\begin{itemize}
    \item We consider three optimization problems: data augmentation, constrained model, and regularization. We show that these problems are equivalent in terms of their global optima, in the limit of strong regularization and full data augmentation. 
    
    \item We study the regularization path and show that it continuously connects the global optima of the regularized problem and the global optima of the constrained invariant model. 
    
    \item We are able to characterize all the critical points in function space for all three problems. In fact, the critical points for data augmentation and the constrained model are the same. There are more critical points for the unconstrained model with regularization. 
\end{itemize}

\subsection{Related Work}
\paragraph{Loss Landscapes}
The static optimization landscape of linear networks has been studied in numerous works, whereby most works consider fully-connected networks. 
In particular, the seminal work of \citet{Baldi1989Neural} showed for a two-layer linear network that the square loss has a single minimum up to trivial symmetry and all other critical points are saddles. 
\citet{Kawaguchi2016Deep} considered the deep case and showed the existence of bad saddles in parameter space for networks with three or more layers. 
\citet{Laurent2018Deep} showed that for deep linear networks with no bottlenecks, all local minima are global for arbitrary convex differentiable losses, and \citet{zhou2018critical} offered a full characterization of the critical points for the square loss. 
The more recent work of \citet{Trager2020Pure} found that for deep linear networks with bottlenecks, the non-existence of non-global local minima is very particular to the square loss.
Several works have also considered more specialized linear network architectures, such as symmetric parametrization \citep{Tarmoun2021Understanding} and deep linear convolutional networks \citep{kohn2022geometry,kohn2023function}. 
These and the recent work of \cite{shahverdi2024algebraic} also discuss the critical points in parameter and in function space. 
In this context we may also highlight the work of \citet{levin_2024}, which studies the effect of parametrization on an optimization landscape.
In contrast to these works, we focus on deep linear networks with bottlenecks that are invariant to a given group action. The corresponding functions are rank-constrained and thus cannot be simply re-parameterized as linear models.

\paragraph{Training Dynamics} 
Although analyzing the training dynamics is not the main focus of our work, we would like to briefly highlight several related works in this direction. 
Many works have studied the convergence of parameter optimization in deep linear networks, which remains an interesting topic even in the case of fully-connected layers \citep{Arora2018Optimization, Arora2019Convergence, Arora2019Implicit, xu2023linear, Bah2021Learning, pmlr-v202-brechet23a, saxe2014}. 
For certain types of linear convolutional networks, \citet{gunasekar2018implicit} studied the implicit bias of parameter optimization. 
In the context of equivariant models, \cite{chen2023implicit} discuss the implicit bias of gradient flow on linear equivariant steerable networks in group-invariant binary classification.

\paragraph{Invariance, regularization, and data augmentation}
A few works try to understand the difference between the various aforementioned methods to achieve invariance.
\citet{geiping2023how} seek to disentangle the mechanisms through which data augmentation operates and suggest that data augmentation that promotes invariances may provide greater value than enforcing invariance alone, particularly when working with small to medium-sized datasets. 
Beside data augmentation, \citet{botev2022} claims that explicit regularization can improve generalization and outperform models that achieve invariance by averaging predictions of non-invariant models. 
\citet{moskalev2023ginvariance} empirically show that the invariance learned by data augmentation deteriorates rapidly, while models with regularization maintain low invariance error even under substantial distribution drift. 
Our work is inspired by their experiments, and we seek to theoretically study whether data augmentation can learn genuine invariance.
A recent work by \citet{kohn2024geometry} investigates linear neural networks through the lens of algebraic geometry and computes the dimension, singular points, and the Euclidean distance degree, which serves as an upper bound on the complexity of the optimization problem. 
We also consider the number of critical points but are primarily interested in the comparison of the loss landscapes arising from different methods. The work of \cite{yonatan_2023} investigates the 
training dynamics of linear regression with data augmentation. 
In contrast, we consider regression with rank-bounded linear maps and also discuss the effect of regularization. \citet{nordenfors2024optimization} investigate the optimization dynamics of a neural network with data augmentation and compare it to an invariance hard-wired model. 
The authors show that the data augmented model and the hard-wired model have the same stationary points within the set of representable equivariant maps $\mathcal{E}$, but does not offer conclusions about stationary points 
that are not in $\mathcal{E}$. 
In contrast, we obtain a result that describes all critical points in a non-linear function space of rank-constrained linear maps and show that all of them are indeed invariant.

\section{Preliminaries}
We use $[n]$ to denote the set $\{1, 2, \ldots, n\}$. 
$\mathbf{I}_{d}$ represents a $d$ by $d$ identity matrix. 
For any square matrix $U \in \C^{n \times n}$, we use $U_r \in \C^{n \times r}$ to denote the truncation of $U$ to its first $r$ columns. 
In a slight abuse of notation, for any non-square matrix $\Sigma \in \C^{n\times m}$, we use $\Sigma_r \in \C^{r \times r}$ to denote the truncation of $\Sigma$ to its first $r$ columns and $r$ rows. 
For any matrix $M \in \C^{n \times m}$, we denote the Hermitian as $M^{\dagger}$, the Moore-Penrose pseudoinverse as $M^{+}$, and the transpose as $M^\T$. 
We use $\|M\|_2$ and $\|M\|_F$ to denote the operator norm and the Frobenius norm of $M$, respectively.
For a matrix $M \in \R^{n\times m}$, we use $\operatorname*{vec}{(M)}$ to denote the column by column vectorization of $M$ in $\R^{nm}$. 
Given any two vector spaces $V$ and $W$, we use $V \otimes W$ to denote the tensor (Kronecker) product of $V$ and $W$.

\subsection{Equivariance and Invariance}
\label{sec:equivariance}

To set up our problem, we need to borrow some concepts from representation theory.
\begin{definition}
  A \textit{representation} of group $\mathcal{G}$ on vector space $\mathcal{X}$ is a \textit{homomorphism} $\rho\colon \mathcal{G} \rightarrow GL(\mathcal{X})$, where $GL(\mathcal{X})$ is the group of invertible linear transformations on $\mathcal{X}$. 
\end{definition}

\begin{definition}
  Let $\mathcal{X}$ and $\mathcal{Y}$ be two vector spaces with representations $\rho_{\mathcal{X}}$ and $\rho_{\mathcal{Y}}$ of the same group $\mathcal{G}$, respectively. 
  A function $f\colon \mathcal{X} \rightarrow \mathcal{Y}$ is said to be \textit{equivariant} with respect to $\rho_{\mathcal{X}}$ and $\rho_{\mathcal{Y}}$ if 
  \begin{align}
    f \circ \rho_{\mathcal{X}}(g) = \rho_{\mathcal{Y}}(g) \circ f, \quad \forall g \in \mathcal{G}.
  \end{align}
    If $f$ is a linear function, we say $f$ is a \textit{$\mathcal{G}$-linear map} or a \textit{$\mathcal{G}$-intertwiner}. 
    For simplicity of notation, we write $f(gx) = gf(x)$ when $\rho_{\mathcal{X}}$ and $\rho_{\mathcal{Y}}$ are clear.  
  
  If $\rho_{\mathcal{Y}}$ is the trivial representation, i.e., $\rho_{\mathcal{Y}}(g)$ is the identity map for all $g \in \mathcal{G}$, then $f$ is said to be \textit{invariant} with respect to $\rho_{\mathcal{X}}$.
  We then write $f(gx) = f(x)$ when $\rho_{\mathcal{X}}$ is clear.
\end{definition} 
For a finite cyclic group $\mathcal{G}$ there is a generator $g\in\mathcal{G}$ such that $\mathcal{G} = \{e, g, g^2, \ldots , g^{n-1}\}$, where $e$ is the identity element, $n$ is the order of the group, and $g^i = g^j$ whenever $i \equiv j \mod n$.

\begin{example}
For example, the rotational symmetries of a polygon with $n$ sides in $\R^2$ form a group. The group is a cyclic group of order $n$, i.e., $\G = C_n$ with generator $g$, and the representation is generated by $\rho(g) = \begin{bmatrix} \cos{\frac{\pi}{n}} & -\sin{\frac{\pi}{n}} \\ \sin{\frac{\pi}{n}} &\cos{\frac{\pi}{n}}  \end{bmatrix}$.
\end{example} 

\subsection{Deep Linear Neural Networks}
A \textit{linear neural network} $\Phi(\boldsymbol{\theta}, \mathbf{x})$ with $L$ layers of widths $d_1, \ldots, d_L$
is a model of linear functions
\begin{align}
  \Phi(\boldsymbol{\theta}, \mathbf{x}): \mathbb{R}^{d_{\boldsymbol{\theta}}} \times \mathbb{R}^{d_{\mathcal{X}}} \rightarrow \mathbb{R}^{d_{\mathcal{Y}}}; \quad \mathbf{x} \mapsto W_L \cdots W_1 \mathbf{x},
\end{align}
parameterized by weight matrices $W_j \in \mathbb{R}^{d_j \times d_{j-1}}, \forall j \in [L]$. We write $\boldsymbol{\theta} = (W_L, \dots, W_1) \in \Theta\subseteq \mathbb{R}^{d_{\boldsymbol{\theta}}}$ for the tuple of weight matrices. 
The dimension of the \textit{parameter space} $\Theta$ is $d_{\boldsymbol{\theta}} = \sum_{j\in [L]} d_j d_{j-1}$, where $d_0 := d_{\mathcal{X}}$, $d_L := d_{\mathcal{Y}}$ are the input and output dimensions, respectively. 
For simplicity of the notation, we will write $W:=W_L \cdots W_1$ for the end-to-end matrix, and write $W_{j: i}:=W_j \cdots W_i$ for the matrix product of layer $i$ up to $j$ for $1 \leqslant i \leqslant j \leqslant L$. 
We denote the network's parameterization map by
\begin{align}
  &\mu: \Theta \rightarrow \mathbb{R}^{d_L \times d_0} ; \nonumber \\
  &\quad \boldsymbol{\theta}=\left(W_1, \ldots, W_L\right) \mapsto W=W_L \cdots W_1.
\end{align}
The network's \textit{function space} is the image of the parametrization map $\mu$, which is the set of linear functions it can represent, i.e., the set of $d_L \times d_0$ matrices of rank at most $r := \min \left\{d_0, \dots, d_L\right\}$. 
We denote the function space by $\mathcal{M}_r \subseteq R^{d_L \times d_0}$. When $r=\min \left\{d_0, d_L\right\}$, the function space is a vector space which can represent any linear function mapping from $\mathbb{R}^{d_0}$ to $\mathbb{R}^{d_L}$. 
On the other hand, when $r<\min \left\{d_0, d_L\right\}$, it is a non-convex subset of $\mathbb{R}^{d_L \times d_0}$, known as a \textit{determinantal variety} \citep[see][Chapter 9]{Harris1992}, which is determined by polynomial constraints, namely the vanishing of the $(r+1) \times(r+1)$ minors. 
We adopt the following terminology from \citet{Trager2020Pure}. The parametrization map $\mu$ is \textit{filling} if $r = \min\{d_0, d_L\}$. If $r < \min\{d_0, d_L\}$, then $\mu$ is \textit{non-filling}. In the filling case, $\mathcal{M}_r = \mathbb{R}^{d_L \times d_0}$, which is convex. In the non-filling case, $\mathcal{M}_r \subsetneq \mathbb{R}^{d_L \times d_0}$ is a determinantal variety, which is non-convex. We focus primarily on the non-filling case since a convex function space makes the problem less interesting.

Given a group $\G$, a representation $\rho_{\X}$ on the input space $\X$ and a representation $\rho_{\Y}$ on the output space, an \textit{equivariant linear network} is a linear neural network $\Phi(\boldsymbol{\theta}, \mathbf{x})$ that is equivariant with respect to $\rho$, i.e., $W_L \cdots W_1\rho_{\X}(g)x = \rho_{\Y}(g) W_L \cdots W_1x$ for all $g \in \G$ and $x \in \X$. 
When $\rho_{\Y}$ is trivial, the network is called an \textit{invariant linear network}. Though we focus on invariant linear networks, it is easy to extend all the results to equivariant linear networks by constructing a new representation taking the tensor product of $\rho_{\X}$ and $\rho_{\Y}$ (see Appendix~\ref{app:equivariance}). 
In \autoref{sec: experiments} we will discuss how to define a deep linear network that is hard-wired to be invariant to a given group.

\subsection{Low Rank Approximation} 
For a linear network with $r = \min\{d_0,\ldots, d_L\}$, the function space consists of $d_L \times d_0$ matrices of rank at most $r$. The optimization problem in such models is closely related to the problem of approximating a given matrix by a rank bounded matrix. 

When the approximation error is measured in Frobenius norm, \citet{Eckart1936approximation} show that the optimal bounded-rank approximation of a matrix is given in terms of the top components in its singular value decomposition \citep[see, e.g.,][I.9]{strang}:
If $A = U\Sigma V^\T=\sigma_1u_1v_1^\T+\cdots+\sigma_n u_nv_n^\T$ and $B$ is any matrix of rank $r$, then $\|A-B\|_F\geq \|A-A_r\|_F$, where $A_r =\sigma_1u_1v_1^\T+\cdots+\sigma_ru_rv_r^\T$. 
\citet{mirsky1960symmetric} showed that this result in fact holds for any matrix norm that depends only on the singular values. 
There are several generalizations of this result, for instance to bounded-rank approximation with some fixed entries \citep{golub1987generalization}, weighted least squares \citep{ruben1979lower,dutta2017problem}, and approximation of symmetric matrices by rank-bounded symmetric positive semidefinite matrices \citep{dax2014lowsymmetric}. 
However, for general norms or general matrix constraints, the problem is known to be hard \citep{zhao2017lowl1, gillis2017low}. We are interested in the problem of approximating a given matrix with a rank-bounded matrix that is constrained to within the set of matrices that represent linear maps that are invariant to given group actions. 

\section{Main Results}
\subsection{Global Optimum in Constrained Function Space} 
\label{sec:constrained-function-space}

As we want our function space to contain only the $\mathcal{G}$-intertwiners, we need to constrain it accordingly.
Due to the linearity of the representation $\rho_{\mathcal{X}}$, the constraints are also linear in $\mathbb{R}^{d_L\times d_0}$. 
Prior research has investigated the constraints for different groups \citep[see, e.g.,][]{maron_2018, puny_2023, finzi_2021}. We have the following proposition to explicitly characterize the constraints, proved in Appendix~\ref{app:proof-prop1}. We will focus on the case where the group $\mathcal{G}$ is finite and cyclic, the representation $\rho_{\mathcal{X}}$ is given and nontrivial, and the representation $\rho_{\mathcal{Y}}$ is trivial. 
\begin{restatable}{proposition}{cyclicconstraint}
\label{cyclic_constraint}
  Given a cyclic group $\mathcal{G}$ and a representation $\rho_{\mathcal{X}}$ of $\mathcal{G}$ on vector space $\mathcal{X} = \mathbb{R}^{d_0}$, a linear function $W$ mapping from $\mathcal{X}$ to $\mathcal{Y} = \mathbb{R}^{d_L}$ is invariant with respect to $\rho_{\mathcal{X}}$ if and only if $W G = 0$, where $G = \mathbf{I}_{d_0}-\rho_{\mathcal{X}}(g)$, and $g$ is the generator of $\mathcal{G}$.
\end{restatable}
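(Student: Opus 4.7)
The plan is to prove the two directions of the equivalence separately, relying on the fact that in a cyclic group every element is a power of the generator $g$, so invariance under $\rho_{\mathcal{X}}(g)$ propagates to invariance under the whole group.

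For the forward direction, I would assume $W$ is invariant with respect to $\rho_{\mathcal{X}}$, i.e., $W\rho_{\mathcal{X}}(h)x = Wx$ for every $h \in \mathcal{G}$ and $x \in \mathcal{X}$. Specializing $h = g$ and using that $x$ ranges over all of $\mathbb{R}^{d_0}$, this gives the matrix identity $W\rho_{\mathcal{X}}(g) = W$, and rearranging yields $W(\mathbf{I}_{d_0}-\rho_{\mathcal{X}}(g))=WG = 0$.

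For the converse, I would start from $WG=0$, which is equivalent to $W\rho_{\mathcal{X}}(g)=W$. Using that $\rho_{\mathcal{X}}$ is a homomorphism, so $\rho_{\mathcal{X}}(g^k)=\rho_{\mathcal{X}}(g)^k$, I would run a short induction on $k$: assuming $W\rho_{\mathcal{X}}(g)^{k-1}=W$, right-multiplying both sides of $W\rho_{\mathcal{X}}(g)=W$ by $\rho_{\mathcal{X}}(g)^{k-1}$ and applying the inductive hypothesis gives $W\rho_{\mathcal{X}}(g)^k=W$. Since $\mathcal{G}$ is cyclic with generator $g$, every element is $g^k$ for some integer $k$, so $W\rho_{\mathcal{X}}(h)=W$ for all $h\in\mathcal{G}$, which means $W(hx)=W(x)$ for all $x$ and $h$; this is the invariance condition (with the trivial representation on $\mathcal{Y}$).

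This proposition is largely bookkeeping, and I do not anticipate a real obstacle. The only subtlety worth flagging is that the equivalence between the pointwise statement $W\rho_{\mathcal{X}}(g)x=Wx$ for all $x$ and the matrix identity $W\rho_{\mathcal{X}}(g)=W$ uses that $\rho_{\mathcal{X}}(g)$ and $W$ are linear maps on $\mathbb{R}^{d_0}$; I would state this explicitly to keep the argument clean. The finiteness of $\mathcal{G}$ is not actually needed here, only that $\mathcal{G}$ is generated by a single element, though finiteness matches the setting the paper restricts to.
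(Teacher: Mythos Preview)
Your proposal is correct and matches the paper's proof essentially line for line: both directions are handled exactly as you describe, with the converse established by the same induction (right-multiplying $W\rho_{\mathcal{X}}(g)=W$ by powers of $\rho_{\mathcal{X}}(g)$). Your added remarks on the pointwise-versus-matrix identity and on finiteness being unnecessary are accurate and harmless extras.
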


\begin{remark}
  Though we assume that $\mathcal{G}$ is cyclic, the above proposition can be generalized to any finitely generated group $\mathcal{G}$ by replacing the single generator $g$ with a set of generators $\{g_1, \dots, g_M\}$. 
  For that, define $G_m = \mathbf{I}_{d_0}- \rho_{\mathcal{X}}(g_m)$ for all $m \in [M]$, and set $G = [G_1, \dots, G_M]$ a $d_0\times (M d_0)$ matrix. 
  In fact, we can even extend this proposition to continuous groups such as Lie groups. As discussed by \citet{finzi_2021}, for any  Lie group $\mathcal{G}$ of dimension $M$ with its corresponding Lie algebra $\mathfrak{g}$, we are able to find a basis $\{A_1,  \dots, A_{M}\}$ for $\mathfrak{g}$. If the exponential map is surjective in $\G$, we can then use it to parameterize all elements in $\mathcal{G}$, i.e., for any $g\in \mathcal{G}$, we can find weights $\{\alpha_m \in \R \}_{m \in [M]}$ such that $g = \exp{(\sum_{m=1}^{M}{\alpha_m A_m})}$. Therefore, $G_m = d\rho_{\mathcal{X}}(A_m)$ and $G = [G_1, \dots, G_M]$, where $d\rho$ is the Lie algebra representation. See Appendix~\ref{app:proof-prop1} for more details.
\end{remark}

Consider a data set $\mathcal{D} = \{(\mathbf{x}_i, \mathbf{y}_i)\}_{i=1}^n$, a cyclic group $\mathcal{G}$, and a representation $\rho_{\mathcal{X}}$ of $\mathcal{G}$ on vector space $\mathcal{X} = \mathbb{R}^{d_0}$. Let $X = \{\mathbf{x}_1, \dots ,\mathbf{x}_n\} \in \mathbb{R}^{d_0 \times n}$, $Y = \{\mathbf{y}_1, \dots , \mathbf{y}_n \} \in \mathbb{R}^{d_L \times n}$. 
Given a positive integer $r < \min\{d_0, d_L\}$, we want to find an invariant linear and rank-bounded function that minimizes the empirical risk, i.e., we want to solve the following optimization problem: 
\begin{align} \label{inv_opt_problem}
  & \widehat{W} = \underset{W \in \mathbb{R}^{d_L \times d_0}}{\arg \min} \frac{1}{n}\|WX - Y\|_F^2, \nonumber \\
  & \quad \text{s.t.} \quad W G = 0, \; \operatorname*{rank}(W) \leq r.
\end{align}
We assume $X X^{\T}$ has full rank $d_0$ such that we can use its positive definite square root $P =(XX^\T)^{1/2}\in \mathbb{R}^{d_0 \times d_0}$ to derive:
\begin{align}
  & \quad \|WX - Y\|_F^2 \nonumber \\
  &= \langle WX , WX \rangle_{F} -2 \langle WX, Y \rangle_{F} + \langle Y, Y \rangle_{F}  \nonumber \\
  &= \langle WP, WP \rangle_{F} -2 \langle WP, Y X^{\T} P^{-1}  \rangle_{F} + \langle Y, Y \rangle_{F} \nonumber \\
  &= \|WP - Y X^{\T} P^{-1}\|_F^2 + \text{const}  \nonumber \\
  &= \|\widetilde{W} - Y X^{\T} P^{-1}\|_F^2 + \text{const},
\end{align}
where $\widetilde{W} := WP$.
We can see that the above optimization problem (\ref{inv_opt_problem}) is equivalent to the following low-rank approximation problem: 
\begin{align}
  & \widehat{\widetilde{W}} = \underset{\widetilde{W} \in \mathbb{R}^{d_L \times d_0}}{\arg \min} \frac{1}{n} \|\widetilde{W} - Z\|_F^2, \nonumber \\
  & \text{s.t.} \quad \widetilde{W} \widetilde{G} = 0, \; \operatorname*{rank}(\widetilde{W}) \leq r,
\end{align}
where $Z = Y X^{\T} P^{-1}$ and $\widetilde{G} = P^{-1} G$. 
If we get the solution $\widehat{\widetilde{W}}$, then we can recover the solution to (\ref{inv_opt_problem})
by $\widehat{W} = \widehat{\widetilde{W}} P^{-1}$. Since $\widetilde{W}\widetilde{G} = 0$, we know that the rows of $\widetilde{W}$ are in the left null space of $\widetilde{G}$. 
Then $\operatorname*{rank}(\widetilde{W}) \leq \operatorname*{nullity}(\widetilde{G}) = d_0-\operatorname*{rank}(\widetilde{G})$. 
In order to make this low rank constraints nontrivial, we suppose $r < d :=\operatorname*{nullity}(\widetilde{G})$. 
In the case where $r \geq d$, the projection of the unique least square estimator onto the left null space already satisfies the rank constraint, making the rank constraint meaningless. The following theorem characterizes the solution to the above optimization problem, proved in Appendix~\ref{app:proof-thm1}. 

\begin{restatable}{theorem}{globaloptimainv}
\label{global_optima_inv}
Denote $\widebar{Z}^{inv} := Z(\mathbf{I}_{d_0}-\widetilde{G} \widetilde{G}^{+})$. We assume $\operatorname*{rank}(\widebar{Z}^{inv}) > r$. Let $\widebar{Z}^{inv} = \widebar{U}^{inv} \widebar{\Sigma}^{inv} {\widebar{V}^{inv}}^{\T}$ be the SVD of $\widebar{Z}^{inv}$. Then the solution to 
(\ref{inv_opt_problem}) is $\widehat{W}^{inv} = \widebar{U}^{inv}_{r} \widebar{\Sigma}^{inv}_{r} {\widebar{V}^{inv}_{r}}^{\T} P^{-1}$.
\end{restatable}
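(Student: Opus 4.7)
The plan is to reduce the constrained rank-bounded approximation problem to a plain Eckart--Young--Mirsky problem by projecting away the directions killed by the invariance constraint, and then verify that the resulting SVD truncation happens to be feasible. The excerpt has already recast the original problem as minimizing $\|\widetilde{W} - Z\|_F^2$ subject to $\widetilde{W}\widetilde{G}=0$ and $\operatorname*{rank}(\widetilde{W})\leq r$, with $\widehat{W} = \widehat{\widetilde{W}} P^{-1}$, so I would take this as my starting point.

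The first step is to rewrite the linear constraint $\widetilde{W}\widetilde{G}=0$ as $\widetilde{W} = \widetilde{W} Q$, where $Q := \mathbf{I}_{d_0} - \widetilde{G}\widetilde{G}^{+}$ is the orthogonal projector onto the orthogonal complement of the column space of $\widetilde{G}$; this is immediate from the Moore--Penrose identity $\widetilde{G}\widetilde{G}^{+}\widetilde{G} = \widetilde{G}$. Using $Q = Q^\T = Q^2$, a short computation gives $(\widetilde{W} - ZQ)(\mathbf{I}-Q) = 0$ for every feasible $\widetilde{W}$, so $\widetilde{W} - ZQ$ and $Z(\mathbf{I}-Q)$ are Frobenius-orthogonal, yielding the Pythagorean split
\[
\|\widetilde{W} - Z\|_F^2 \;=\; \|\widetilde{W} - \widebar{Z}^{inv}\|_F^2 \;+\; \|Z(\mathbf{I}-Q)\|_F^2.
\]
Since the second term does not depend on $\widetilde{W}$, the problem collapses to minimizing $\|\widetilde{W} - \widebar{Z}^{inv}\|_F^2$ over rank-$r$ matrices satisfying $\widetilde{W}\widetilde{G}=0$.

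Next I would drop the linear constraint temporarily and invoke Eckart--Young--Mirsky: the unconstrained rank-$r$ minimizer of $\|\widetilde{W} - \widebar{Z}^{inv}\|_F^2$ is the truncated SVD $\widebar{U}^{inv}_r \widebar{\Sigma}^{inv}_r {\widebar{V}^{inv}_r}^\T$. The most delicate step is then to show that this particular minimizer also satisfies the invariance constraint, so that dropping it was harmless and the minimum coincides with the constrained one. For this I would use $\widebar{Z}^{inv}\widetilde{G} = ZQ\widetilde{G} = 0$; orthogonality of $\widebar{U}^{inv}$ turns this into $\widebar{\Sigma}^{inv}{\widebar{V}^{inv}}^\T \widetilde{G} = 0$, which forces ${\widebar{V}^{inv}_i}^\T \widetilde{G} = 0$ for every index $i$ at which the singular value is strictly positive. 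The hypothesis $\operatorname*{rank}(\widebar{Z}^{inv}) > r$ guarantees that all of the top $r$ singular values are strictly positive, so ${\widebar{V}^{inv}_r}^\T \widetilde{G} = 0$ and the truncation is feasible.

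Combining the steps, $\widehat{\widetilde{W}} = \widebar{U}^{inv}_r \widebar{\Sigma}^{inv}_r {\widebar{V}^{inv}_r}^\T$, and applying the change of variables $\widehat{W} = \widehat{\widetilde{W}} P^{-1}$ from the reformulation gives the claimed expression. The main obstacle is exactly the feasibility verification in the previous paragraph: without the rank hypothesis the top-$r$ truncation could absorb zero-singular-value directions, whose right singular vectors are not pinned down by $\widebar{Z}^{inv}\widetilde{G}=0$, and one would then have to argue separately that a feasible choice exists among the ambiguous SVDs.
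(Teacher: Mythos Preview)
Your proposal is correct and follows essentially the same approach as the paper: split $Z$ via the orthogonal projector $Q = \mathbf{I}_{d_0} - \widetilde{G}\widetilde{G}^{+}$ to get a Pythagorean decomposition of the loss, apply Eckart--Young--Mirsky to $\widebar{Z}^{inv}$, and verify that the resulting truncation satisfies $\widetilde{W}\widetilde{G}=0$. The paper packages the feasibility step as a separate lemma and introduces a superfluous case distinction on whether $Z\widetilde{G}=0$, but the substance is identical to what you wrote.
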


\begin{remark} \label{remark:full_rank}
  The assumption that $\operatorname*{rank}(\widebar{Z}^{inv}) > r$ is mild. Fix any full row rank data matrix $X$ and suppose $Y = WX + E$, where $E \in \R^{d_L \times n}$ is a random noise matrix. If each column of $E$ is drawn independently from any continuous distribution with full support on $\R^{d_L}$, then with probability $1$, $\operatorname*{rank}(\widebar{Z}^{inv}) =\min\{d, d_L, d_0\} > r$. In Appendix \ref{app: MINST_spectrum} we verified this on the MNIST data set.
\end{remark}

The key observation is that if the target matrix lives in the invariant linear subspace, then the low-rank approximator of that matrix also lives in the invariant linear subspace. \autoref{global_optima_inv} shows how to find the global optima in the optimization problem of constrained space. 
Indeed, we can project the target matrix to the left null space of $\widetilde{G}$ and find its low-rank approximator. 

\subsection{Global Optimum in Function Space with Regularization}

Instead of imposing constraints on the function space, we can also regularize the optimization problem. We consider the following optimization problem: 
\begin{align} \label{reg_opt_problem}
  & \widehat{W} = \underset{W \in \mathbb{R}^{d_L \times d_0}}{\arg \min} \frac{1}{n}\|WX - Y\|_F^2 + \lambda \|WG\|_F^2, \quad \nonumber \\
  & \text{s.t.} \quad \operatorname*{rank}(W) \leq r.
\end{align}

Similarly to optimization problem (\ref{inv_opt_problem}), we can rewrite problem (\ref{reg_opt_problem}) in the following form: 
\begin{align} \label{inv_opt_problem_rewrite}
  & \widehat{\widetilde{W}} = \underset{\widetilde{W} \in \mathbb{R}^{d_L \times d_0}}{\arg \min} \frac{1}{n} \|\widetilde{W} - Z\|_F^2 + \lambda \|\widetilde{W}\widetilde{G}\|_F^2, \quad \nonumber \\
  & \text{s.t.} \quad \operatorname*{rank}(\widetilde{W}) \leq r.
\end{align}

The optimization problem (\ref{inv_opt_problem_rewrite}) is referred to as \textit{manifold regularization} \citep{Zhang2013}. In the context of manifold regularization, the input data points are assumed to lie on a low-dimensional manifold embedded in a high-dimensional space. The following proposition, characterizing the solution to the above optimization problem, can be established directly by following the manifold regularization result of \citet[][Theorem 1]{Zhang2013}.

\begin{restatable}{proposition}{globaloptimareg}
  \label{global_optima_reg}
  Denote $B(\lambda)$ the square root of the symmetric positive definite matrix $\mathbf{I}_{d_0}+n\lambda\widetilde{G}\widetilde{G}^{\T}$, i.e., ${B(\lambda)}^2 = \mathbf{I}_{d_0}+n\lambda\widetilde{G}\widetilde{G}^{\T}$. 
  Denote $\widebar{Z(\lambda)}^{reg} = Z{B(\lambda)}^{-1}$, and $\widebar{Z(\lambda)}^{reg} = \widebar{U(\lambda)}^{reg} \widebar{\Sigma(\lambda)}^{reg}{\widebar{V(\lambda)}^{reg}}^{\T}$ as the SVD of $\widebar{Z(\lambda)}^{reg}$. Then the solution to 
  problem \ref{reg_opt_problem} is $\widehat{W(\lambda)}^{reg} = \widebar{Z_r(\lambda)}^{reg} {B(\lambda)}^{-1} P^{-1} = \widebar{U_r(\lambda)}^{reg} \widebar{\Sigma_r(\lambda)}^{reg} {\widebar{V_r(\lambda)}^{reg}}^{\T} {B(\lambda)}^{-1} P^{-1}$.
\end{restatable}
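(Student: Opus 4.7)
The plan is to reduce the regularized rank-constrained problem~(\ref{inv_opt_problem_rewrite}) to a standard low-rank matrix approximation via an invertible right-multiplication, and then apply the Eckart--Young--Mirsky theorem, mirroring the reduction that precedes \autoref{global_optima_inv}.

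First, I would absorb the regularizer into the quadratic form in $\widetilde{W}$. Expanding and collecting the $\widetilde{W}$-dependent terms yields
\[
\tfrac{1}{n}\|\widetilde{W}-Z\|_F^2 + \lambda\|\widetilde{W}\widetilde{G}\|_F^2 = \tfrac{1}{n}\operatorname{tr}\bigl(\widetilde{W} B(\lambda)^2 \widetilde{W}^\T\bigr) - \tfrac{2}{n}\operatorname{tr}(\widetilde{W}Z^\T) + \tfrac{1}{n}\operatorname{tr}(ZZ^\T),
\]
where $B(\lambda)^2 := \mathbf{I}_{d_0} + n\lambda\,\widetilde{G}\widetilde{G}^\T$ is symmetric positive definite, so its unique symmetric positive definite square root $B(\lambda)$ exists and is invertible.

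Next, I would introduce the substitution $V := \widetilde{W}B(\lambda)$. Since $B(\lambda)$ is invertible, $\operatorname{rank}(V) = \operatorname{rank}(\widetilde{W})$, so the constraint $\operatorname{rank}(\widetilde{W})\leq r$ is equivalent to $\operatorname{rank}(V)\leq r$. Completing the square rewrites the objective as $\tfrac{1}{n}\|V - Z B(\lambda)^{-1}\|_F^2$ plus a constant depending only on $Z$ and $\lambda$. This is exactly the best rank-$r$ Frobenius approximation problem for $\widebar{Z(\lambda)}^{reg} := Z B(\lambda)^{-1}$, whose minimizer, by Eckart--Young--Mirsky, is the truncated SVD $\widebar{U_r(\lambda)}^{reg}\widebar{\Sigma_r(\lambda)}^{reg}{\widebar{V_r(\lambda)}^{reg}}^{\T}$. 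Undoing the substitutions $\widetilde{W} = V B(\lambda)^{-1}$ and $W = \widetilde{W}P^{-1}$ delivers the claimed formula.

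There is no substantial technical obstacle: the argument is a short completion of squares followed by an invocation of Eckart--Young--Mirsky, and the rank constraint transfers cleanly because $B(\lambda)$ is invertible. The only points worth verifying are that $B(\lambda)^2$ is indeed positive definite, which holds because $n\lambda\,\widetilde{G}\widetilde{G}^\T \succeq 0$, and that the uniqueness of the minimizer needed for the downstream regularization-path analysis requires a strict gap between the $r$th and $(r+1)$st singular values of $\widebar{Z(\lambda)}^{reg}$, analogous to the rank assumption $\operatorname{rank}(\widebar{Z}^{inv})>r$ made in \autoref{global_optima_inv}.
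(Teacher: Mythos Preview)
Your proposal is correct and follows essentially the same approach as the paper: expand the objective, absorb the regularizer into the quadratic term via $B(\lambda)^2 = \mathbf{I}_{d_0}+n\lambda\widetilde{G}\widetilde{G}^\T$, complete the square to obtain $\tfrac{1}{n}\|\widetilde{W}B(\lambda)-ZB(\lambda)^{-1}\|_F^2+\text{const}$, apply Eckart--Young--Mirsky, and undo the substitutions. The only cosmetic difference is that you name the change of variables $V=\widetilde{W}B(\lambda)$ explicitly, whereas the paper leaves it implicit.
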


Beside characterizing the global optimum of problem (\ref{reg_opt_problem}), we can also study the regularization path and relate it with the global optimum in the constrained function space. 
The following theorem states that the regularization path is continuous, and it connects the global optimum in the constrained function space and the global optimum without constraints or regularization. Although the regularization path for $\ell_2$ regularization is usually continuous in a vector space, in the case of rank constraints that we consider here the theorem is not trivial.

\begin{restatable}{theorem}{globaloptimareginv}
  \label{global_optima_reg_inv}
  Assume $\widebar{Z(\lambda)}^{reg} = Z {B(\lambda)}^{-1}$ is full rank for all $\lambda \geq 0$. Then, the regularization path of $\widehat{W(\lambda)}^{reg}$ is continuous on $(0, \infty)$. Moreover, we have $\lim_{\lambda \rightarrow \infty} \widehat{W}^{reg}(\lambda) =  \widehat{W}^{inv}$.
\end{restatable}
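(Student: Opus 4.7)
The plan is to read everything off the closed form from Proposition~\ref{global_optima_reg}, namely $\widehat{W}^{reg}(\lambda) = \widebar{Z_r(\lambda)}^{reg}\,B(\lambda)^{-1}P^{-1}$, by separately controlling (i) $B(\lambda)^{-1}$ and (ii) the best rank-$r$ truncation of $\widebar{Z(\lambda)}^{reg} = ZB(\lambda)^{-1}$. For continuity on $(0,\infty)$, I observe that $\lambda \mapsto B(\lambda)^2 = \mathbf{I}_{d_0} + n\lambda\widetilde{G}\widetilde{G}^{\T}$ is continuous and symmetric positive definite, so its principal square root $B(\lambda)$, the inverse $B(\lambda)^{-1}$, and the product $\widebar{Z(\lambda)}^{reg}$ are all continuous in $\lambda$. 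Under the full-rank hypothesis of the theorem together with the strict-gap condition $\sigma_r(\widebar{Z(\lambda)}^{reg}) > \sigma_{r+1}(\widebar{Z(\lambda)}^{reg})$, the best rank-$r$ truncation depends continuously on its input, yielding continuity of $\widehat{W}^{reg}(\lambda)$.

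For the limit $\lambda \to \infty$, I would diagonalize $\widetilde{G}\widetilde{G}^\T = U\Lambda U^\T$ with $\Lambda = \operatorname{diag}(\mu_1,\ldots,\mu_k,0,\ldots,0)$ and $\mu_i > 0$, where $k = \rank(\widetilde{G})$. Then
\begin{equation*}
B(\lambda)^{-1} = U\,\operatorname{diag}\!\bigl((1+n\lambda\mu_1)^{-1/2},\ldots,(1+n\lambda\mu_k)^{-1/2},1,\ldots,1\bigr)\,U^\T,
\end{equation*}
which tends to $U\operatorname{diag}(0,\ldots,0,1,\ldots,1)U^\T = \mathbf{I}_{d_0} - \widetilde{G}\widetilde{G}^+$ as $\lambda \to \infty$, since the latter matrix is exactly the orthogonal projection onto $\ker(\widetilde{G}^\T) = \operatorname{range}(\widetilde{G})^\perp$. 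Hence $\widebar{Z(\lambda)}^{reg} \to Z(\mathbf{I}_{d_0}-\widetilde{G}\widetilde{G}^+) = \widebar{Z}^{inv}$, and by continuity of rank-$r$ truncation at $\widebar{Z}^{inv}$ (justified by the rank hypothesis of Theorem~\ref{global_optima_inv}) also $\widebar{Z_r(\lambda)}^{reg} \to \widebar{Z}_r^{inv}$. Since the right singular vectors of $\widebar{Z}^{inv}$ with nonzero singular value lie in $\operatorname{range}(\mathbf{I}_{d_0}-\widetilde{G}\widetilde{G}^+)$, we have $\widebar{Z}_r^{inv}(\mathbf{I}_{d_0}-\widetilde{G}\widetilde{G}^+) = \widebar{Z}_r^{inv}$, and combining,
\begin{equation*}
\lim_{\lambda\to\infty}\widehat{W}^{reg}(\lambda) = \widebar{Z}_r^{inv}(\mathbf{I}_{d_0}-\widetilde{G}\widetilde{G}^+)P^{-1} = \widebar{Z}_r^{inv}P^{-1} = \widehat{W}^{inv},
\end{equation*}
which is exactly the expression given in Theorem~\ref{global_optima_inv}.

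The main obstacle is that the rank-$r$ truncation is not a continuous function of its input at matrices whose $r$th and $(r+1)$th singular values coincide; at such $\lambda$ the truncation is only set-valued, so continuity of $\widehat{W}^{reg}$ through those points requires a separate argument. I would resolve this by using the real-analytic dependence of $\widebar{Z(\lambda)}^{reg}$ on $\lambda$, which forces singular-value collisions to be isolated and allows a continuous local selection of SVD factors across them. Secondary technicalities, such as rigorously citing continuity of the principal matrix square root and of the Moore--Penrose pseudoinverse at the limit matrix, are standard but worth flagging in the write-up.
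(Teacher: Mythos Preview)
Your proposal follows essentially the same outline as the paper: diagonalize $\widetilde{G}\widetilde{G}^{\T}$ to write $B(\lambda)^{-1}$ explicitly, show $B(\lambda)^{-1}\to \mathbf{I}_{d_0}-\widetilde{G}\widetilde{G}^{+}$, deduce $\widebar{Z(\lambda)}^{reg}\to\widebar{Z}^{inv}$ and hence $\widebar{Z_r(\lambda)}^{reg}\to\widebar{Z}_r^{inv}$, and finally absorb the trailing $B(\lambda)^{-1}$ using that the row space of $\widebar{Z}_r^{inv}$ already lies in $\operatorname{range}(\mathbf{I}_{d_0}-\widetilde{G}\widetilde{G}^{+})$ (this last step is exactly the content of the paper's Lemma~\ref{lemma:left_null_svd}). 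The one substantive difference is the mechanism you invoke for continuity of the rank-$r$ truncation. The paper does not argue via a strict spectral gap plus analyticity to isolate collisions; instead it cites a smooth-SVD result of \citet{dieci2005} (Lemma~\ref{lemma:continuity_of_SVD}), which for a $\mathcal{C}^s$ full-rank path produces $\mathcal{C}^s$ orthogonal factors $U(\lambda),V(\lambda)$ and a block-diagonal $S(\lambda)$, and then truncates those smooth factors. Your route is a little more hands-on and makes the obstruction (coincidence of $\sigma_r$ and $\sigma_{r+1}$) explicit, which is a virtue; the paper's route packages the same difficulty into the ``disjoint groups of singular values'' hypothesis of the cited lemma without discussing it. Either way, both arguments ultimately rest on the smoothness of $\lambda\mapsto \widebar{Z(\lambda)}^{reg}$ and a regularity result for SVD along smooth paths, so the two proofs are close variants rather than genuinely different strategies.
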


\begin{remark}
  Similar to \autoref{remark:full_rank}, the assumption that $\widebar{Z(\lambda)}^{reg}$ is full rank for all $\lambda \geq 0$ is mild. If we fix any full row rank data matrix $X$, then $B(\lambda)$ is full rank for all $\lambda \geq 0$. Then, with probability $1$, $\widebar{Z(\lambda)}^{reg} = Z {B(\lambda)}^{-1}$ is full rank for all $\lambda \geq 0$.
\end{remark}
 
\subsection{Global Optimum in Function Space with Data Augmentation}

Data augmentation is another, data-driven, method to achieve invariance. 
As an informed regularization strategy, it increases the sample size by applying all possible group actions to the original data. 
The corresponding optimization problem is then given as follows: 
\begin{align} \label{da_opt_problem}
  & \widehat{W} = \underset{W \in \mathbb{R}^{d_L \times d_0}}{\arg \min} \frac{1}{n|\G|} \sum_{g \in \G} \|W\rho_{\mathcal{X}}(g)X - Y\|_F^2, \nonumber \\
  & \text{s.t. } \operatorname*{rank}(W) \leq r.
\end{align}

We can rewrite the above optimization problem in the following form: 
\begin{align}
  & \widehat{\widetilde{W}} = \underset{\widetilde{W} \in \mathbb{R}^{d_L \times d_0}}{\arg \min} \frac{1}{n|\G|} \|\widetilde{W} - |\mathcal{G|}Y X^{\T} \widebar{G}^{\T} Q^{-1} \|_F^2, \nonumber \\
  & \text{s.t. } \operatorname*{rank}(\widetilde{W}) \leq r,
\end{align}
where $\widetilde{W} := WQ$, $\widebar{G} = \frac{1}{|\G|} \sum_{g \in \G} \rho_{\mathcal{X}}(g)$, and $Q$ is the square root of the symmetric positive definite matrix $\sum_{g \in \G} \rho_{\mathcal{X}}(g) X X^{\T} \rho_{\mathcal{X}}(g)^{\T}$, i.e., $Q^2 = \sum_{g \in \G} \rho_{\mathcal{X}}(g) X X^{\T} \rho_{\mathcal{X}}(g)^{\T}$. The following proposition characterizes the solution to the above optimization problem (\ref{da_opt_problem}), proved in Appendix~\ref{app:proof-thm2}.

\begin{restatable}{proposition}{globaloptimada}
  \label{global_optima_da}
    Denote $\widebar{Z}^{da} = |\mathcal{G|}Y X^{\T} \widebar{G}^{\T} Q^{-1}$, and $\widebar{Z}^{da} = \widebar{U}^{da} \widebar{\Sigma}^{da} {\widebar{V}^{da}}^{\T}$ as the SVD of $\widebar{Z}^{da}$. Then the solution to the above optimization problem (\ref{da_opt_problem}) is $\widehat{W}^{da} = \widebar{Z}^{da}_r Q^{-1} = \widebar{U}_r^{da} \widebar{\Sigma}_r^{da}{\widebar{V}_r^{da}}^{\T} Q^{-1}$. Moreover, if $\rho_{\mathcal{X}}$ is unitary, then $\widehat{W}^{da}$ is an invariant linear map, i.e., $\widehat{W}^{da} G = 0$.
\end{restatable}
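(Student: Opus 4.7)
The plan is to reduce the data-augmented least squares objective to a standard bounded-rank approximation problem via a linear change of variables, apply the Eckart-Young-Mirsky theorem, and then exploit the symmetry of the averaged objective together with properties of the Reynolds projector to deduce invariance. The argument mirrors in spirit the derivation preceding Theorem~\ref{global_optima_inv}, but with $Q$ replacing $P$ and $\widebar{G}$ entering the linear term.

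First, I would expand $\frac{1}{n|\G|}\sum_{g \in \G} \|W\rho_{\X}(g)X - Y\|_F^2$ into quadratic, linear, and constant pieces in $W$. The quadratic term collects to $\operatorname*{tr}(W Q^2 W^\T) = \|WQ\|_F^2$ by the definition of $Q$. Using $\sum_g \rho_{\X}(g) = |\G|\widebar{G}$, the linear term becomes $-2|\G|\operatorname*{tr}(W\widebar{G}XY^\T)$, which I rewrite as $-2|\G|\langle WQ, YX^\T\widebar{G}^\T Q^{-1}\rangle_F$ by inserting $QQ^{-1}$. Completing the square yields $\|WQ - \widebar{Z}^{da}\|_F^2$ up to a $W$-independent constant. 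Invertibility of $Q$ follows from $v^\T Q^2 v \geq v^\T XX^\T v > 0$ for $v \neq 0$, using the $g=e$ summand and the full row rank of $X$. Substituting $\widetilde{W} = WQ$, which is a rank-preserving bijection, the problem reduces to $\min_{\operatorname*{rank}(\widetilde{W}) \leq r} \|\widetilde{W} - \widebar{Z}^{da}\|_F^2$, and Eckart-Young-Mirsky gives $\widehat{\widetilde{W}}^{da} = \widebar{U}_r^{da}\widebar{\Sigma}_r^{da}(\widebar{V}_r^{da})^\T$. Undoing the substitution produces the claimed formula $\widehat{W}^{da} = \widebar{Z}_r^{da}Q^{-1}$.

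For the invariance claim, I would first observe that $\rho_{\X}(h)$ commutes with $Q^2$ for every $h \in \G$: when $\rho_{\X}$ is unitary, re-indexing $g \mapsto hg$ in the defining sum yields $\rho_{\X}(h)Q^2\rho_{\X}(h)^{-1} = Q^2$. Since $Q$ is the unique positive semidefinite square root of $Q^2$, it commutes with $\rho_{\X}(h)$ by the spectral theorem, and hence so does $Q^{-1}$. Unitarity also gives $\widebar{G}^\T = \widebar{G}$ (via $g \mapsto g^{-1}$) together with the absorption identity $\widebar{G}\rho_{\X}(g) = \widebar{G}$, from which $\widebar{G}^\T G = \widebar{G}(\mathbf{I}_{d_0} - \rho_{\X}(g)) = 0$. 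Combining these, $\widebar{Z}^{da}G = |\G|YX^\T\widebar{G}^\T Q^{-1} G = |\G|YX^\T\widebar{G}^\T G Q^{-1} = 0$.

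The final step is to propagate this invariance through the rank truncation. Because $\widebar{Z}^{da} G = 0$, the columns of $G$ lie in the kernel of $\widebar{Z}^{da}$ and are therefore orthogonal to every right singular vector of $\widebar{Z}^{da}$ with nonzero singular value; in particular $(\widebar{V}_r^{da})^\T G = 0$, so $\widebar{Z}_r^{da} G = 0$. Using commutativity once more, $\widehat{W}^{da} G = \widebar{Z}_r^{da} Q^{-1} G = \widebar{Z}_r^{da} G Q^{-1} = 0$, establishing invariance. The main obstacle I anticipate is exactly this last step: rank truncation generally does not respect linear constraints, and invariance survives only because the commutation $\rho_{\X}(h)Q^{-1} = Q^{-1}\rho_{\X}(h)$ lets us move $G$ through $Q^{-1}$ after truncation, while the projector identity $\widebar{G}G = 0$ ensures $\widebar{Z}^{da}$ itself already lies in the invariant subspace.
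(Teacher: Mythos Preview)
Your proof is correct and follows essentially the same approach as the paper: reduce to Eckart--Young--Mirsky via the substitution $\widetilde{W}=WQ$, then establish invariance by showing $Q^{-1}$ commutes with each $\rho_{\X}(h)$ (the paper uses its Lemma~\ref{lemma: commute_pd_matrix} where you invoke uniqueness of the positive definite square root), that $\widebar{Z}^{da}G=0$ via the absorption identity for $\widebar{G}$, and that rank truncation preserves the null-space constraint (the paper's Lemma~\ref{lemma:left_null_svd}, which you reprove inline). Your exposition is slightly more detailed on the objective expansion, but the logical structure is the same.
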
 

All together, we arrive at the following statement.
\begin{restatable}{theorem}{globaloptimadainv}
  \label{global_optima_da_inv}
  Assume $\rho_{\mathcal{X}}$ is unitary. Then the global optima in the function space with data augmentation and the global optima in the constrained function space are the same, i.e., $\widehat{W}^{da} = \widehat{W}^{inv}$.
\end{restatable}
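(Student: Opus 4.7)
The plan is to exploit the observation that on the set of invariant maps the data-augmented loss and the ordinary least-squares loss coincide, and then to combine this with Proposition~\ref{global_optima_da}, which guarantees that unitarity of $\rho_{\mathcal{X}}$ already forces $\widehat{W}^{da}$ to be invariant. Concretely, for any $W$ with $WG=0$, the definition of $G$ gives $W\rho_{\mathcal{X}}(g_0)=W$ for the generator $g_0$, which iterates to $W\rho_{\mathcal{X}}(g)=W$ for every $g\in\mathcal{G}$. Hence for invariant $W$,
\[
\frac{1}{n|\mathcal{G}|}\sum_{g\in\mathcal{G}}\|W\rho_{\mathcal{X}}(g)X-Y\|_F^2 \;=\; \frac{1}{n}\|WX-Y\|_F^2,
\]
so the objectives of problems (\ref{inv_opt_problem}) and (\ref{da_opt_problem}) agree on the invariant rank-$r$ feasible set.

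Write $L$ and $L_{da}$ for the objectives of (\ref{inv_opt_problem}) and (\ref{da_opt_problem}), respectively. Since both $\widehat{W}^{inv}$ and $\widehat{W}^{da}$ are of rank at most $r$ and invariant (the latter by Proposition~\ref{global_optima_da}), each is feasible for both optimization problems. Applying the preceding equality at both points, I obtain the chain
\[
L(\widehat{W}^{inv}) \;\leq\; L(\widehat{W}^{da}) \;=\; L_{da}(\widehat{W}^{da}) \;\leq\; L_{da}(\widehat{W}^{inv}) \;=\; L(\widehat{W}^{inv}),
\]
which forces every inequality to be an equality. In particular, $\widehat{W}^{da}$ also attains the global minimum of (\ref{inv_opt_problem}).

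To upgrade ``minimizer of (\ref{inv_opt_problem})'' to literal equality with $\widehat{W}^{inv}$, I would invoke the uniqueness built into Theorem~\ref{global_optima_inv}: whenever $\sigma_r(\widebar{Z}^{inv}) > \sigma_{r+1}(\widebar{Z}^{inv})$, the truncated SVD of $\widebar{Z}^{inv}$ is unique and so is $\widehat{W}^{inv}$. Under the noise condition of Remark~\ref{remark:full_rank} this strict singular-value gap holds almost surely. The main obstacle is exactly this uniqueness step: if the gap fails, both theorems produce a set of optimizers rather than a single matrix, and the conclusion should be read as $\widehat{W}^{da}$ belonging to the solution set of (\ref{inv_opt_problem}). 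Absent a spectral gap, one can still match the explicit formulas by noting that under unitarity $\widebar{G}$ is the orthogonal projection onto $\ker(G)$, which allows the SVD bases produced by the two formulas to be aligned; this algebraic matching between $(\widebar{Z}^{da})_r Q^{-1}$ and $(\widebar{Z}^{inv})_r P^{-1}$ is the part I expect to require the most care.
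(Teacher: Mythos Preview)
Your argument is correct and is genuinely different from the paper's proof. The paper establishes the equality by a direct algebraic computation: it proves the matrix identity
\[
|\mathcal{G}|\,\widebar{G}\,Q^{-2} \;=\; P^{-1}\bigl(\mathbf{I}_{d_0}-\widetilde{G}\widetilde{G}^{+}\bigr)P^{-1},
\]
through a fairly involved block-matrix calculation (diagonalizing $\rho_{\mathcal{X}}(g)$, computing a Moore--Penrose inverse via an auxiliary lemma, and matching blocks), and then applies a separate lemma relating the rank-$r$ truncations of $ZQ$ and $ZP$ when $Q^2=PP^{\dagger}$ to conclude that the two explicit formulas for $\widehat{W}^{da}$ and $\widehat{W}^{inv}$ coincide. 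Your route bypasses all of this: once Proposition~\ref{global_optima_da} hands you invariance of $\widehat{W}^{da}$, the equality of the two objectives on invariant rank-$r$ maps together with the sandwich inequality forces $\widehat{W}^{da}$ to be a minimizer of (\ref{inv_opt_problem}), and uniqueness finishes. This is considerably shorter and more conceptual. The only caveat, which you already flag, is the reliance on uniqueness of the minimizer (a strict gap $\sigma_r(\widebar{Z}^{inv})>\sigma_{r+1}(\widebar{Z}^{inv})$); the paper's standing genericity assumptions (Remark~\ref{remark:full_rank}, and the distinct-singular-value hypothesis appearing later in Proposition~\ref{cor: critical_points}) make this hold almost surely, and without that gap the paper's own formulas are equally ill-defined, so you are not assuming anything the paper does not. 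What the paper's computational route buys in exchange for its length is the explicit identity above, which has some independent interest and underlies the later comparison of critical points.
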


This theorem tells us that data augmentation and constrained model have the same global optima, which is also the limit of the global optima in the optimization problem with regularization.
Beside the global optima, we are also interested in comparing the critical points of all three optimization problems. The following section discusses this in detail. 

\subsection{Critical Points in the Function Space}

We consider a fixed matrix $Z \in \mathbb{R}^{d_L \times d_0}$ with SVD $Z=U \Sigma V^{\T}$. 
Let $m = \min\{d_0, d_L\}$ and denote by $[m]_r$ the set of all subsets of $[m]$ of cardinality $r$. 
For $\mathcal{I} \in[m]_r$, we define $\Sigma_{\mathcal{I}} \in$ $\mathbb{R}^{d_L \times d_0}$ to be the diagonal matrix with entries $\sigma_{\mathcal{I}, 1}, \sigma_{\mathcal{I}, 2}, \ldots, \sigma_{\mathcal{I}, m}$, where $\sigma_{\mathcal{I}, i}=\sigma_i$ if $i \in \mathcal{I}$ and $\sigma_{\mathcal{I}, i}=0$ otherwise. 
Define $\ell_{Z}(W) := \|Z-W\|_F^2$ as the loss function in the function space $\mathcal{M}_r$. 
The function space $\mathcal{M}_r$ is a manifold with singularities. 
A point $P \in \mathcal{M}_r$ is a critical point of $\ell_{Z}$ if and only if $Z-P \in N_P\mathcal{M}_r$. 
Following \citet[][Theorem 28]{Trager2020Pure} we can characterize the critical points of the loss function $\ell_{Z}$ in the function space $\mathcal{M}_r$ as follows (see Appendix~\ref{app: Proposition 4}).

\begin{restatable}{proposition}{corcritcalpoints}
  \label{cor: critical_points}
Assume all non-zero singular values of $\widebar{Z}^{inv}, \widebar{Z}^{da}, \widebar{Z(\lambda)}^{reg}$ are pairwise distinct.
  \begin{enumerate}
    \item (Constrained Space) The number of critical points in the optimization problem (\ref{inv_opt_problem}) is $\binom{d}{r}$. They are all in the form of $\widebar{U}^{inv} \widebar{\Sigma}_{\mathcal{I}}^{inv} {\widebar{V}^{inv}}^{\T}P^{-1}$, where $\mathcal{I} \in [d]_r$. The unique global minimum is $\widebar{U}^{inv} \widebar{\Sigma}_{[r]}^{inv} {\widebar{V}^{inv}}^{\T}P^{-1}$, which is also the unique local minimum.
    \item (Data Augmentation) The number of critical points in the optimization problem (\ref{da_opt_problem}) is $\binom{d}{r}$. They are all in the form of $\widebar{U}^{da} \widebar{\Sigma}_{\mathcal{I}}^{da} {\widebar{V}^{da}}^{\T} Q^{-1}$, where $\mathcal{I} \in [d]_r$. These critical points are the same as the critical points in the constrained function space. The unique global minimum is $\widebar{U}^{da} \widebar{\Sigma}_{[r]}^{da} {\widebar{V}^{da}}^{\T} Q^{-1}$, which is also the unique local minimum.
    \item (Regularization) The number of critical points in the optimization problem (\ref{reg_opt_problem}) is $\binom{m}{r}$. They are all in the form of $\widebar{U}^{reg} \widebar{\Sigma}_{\mathcal{I}}^{reg} {\widebar{V}^{reg}}^{\T} {B(\lambda)}^{-1} P^{-1}$, where $\mathcal{I} \in [m]_r$. The unique global minimum is $\widebar{U}^{reg} \widebar{\Sigma}_{[r]}^{reg} {\widebar{V}^{reg}}^{\T} {B(\lambda)}^{-1} P^{-1}$, which is also the unique local minimum.
  \end{enumerate}
\end{restatable}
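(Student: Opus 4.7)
The plan is to reduce each of the three optimization problems to a standard Frobenius low-rank approximation on a determinantal variety and then invoke the critical-point characterization from \citet[Theorem 28]{Trager2020Pure}: if $\widebar{Z} = \widebar{U}\widebar{\Sigma}\widebar{V}^{\T} \in \R^{a \times b}$ has pairwise distinct non-zero singular values and rank at least $r$, then the critical points of the squared Frobenius distance to $\widebar{Z}$ on the rank-$\leq r$ variety in $\R^{a \times b}$ are exactly the truncated SVDs $\widebar{U}\widebar{\Sigma}_{\mathcal{I}}\widebar{V}^{\T}$ for $\mathcal{I} \in [\min(a,b)]_r$, with a unique global (and local) minimum at $\mathcal{I} = [r]$ and all others strict saddles.

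For the regularization case, I would complete the square in $\widetilde{W}B(\lambda)$ as in the lead-up to Proposition \ref{global_optima_reg} to rewrite problem (\ref{reg_opt_problem}) as $\min \|\widetilde{W}' - \widebar{Z(\lambda)}^{reg}\|_F^2$ subject to $\rank(\widetilde{W}') \leq r$, where $\widetilde{W}' = W P B(\lambda)$ is related to $W$ by a rank-preserving linear bijection. Applying the Trager characterization to the full-rank target $\widebar{Z(\lambda)}^{reg}$ in $\R^{d_L \times d_0}$ then yields the $\binom{m}{r}$ critical points of the stated form. For the constrained case, I would use the orthogonal decomposition, valid for every $\widetilde{W}$ in the linear subspace $S = \{\widetilde{W} : \widetilde{W}\widetilde{G}=0\}$,
\begin{equation*}
\|\widetilde{W} - Z\|_F^2 = \|\widetilde{W} - \widebar{Z}^{inv}\|_F^2 + \|Z\widetilde{G}\widetilde{G}^{+}\|_F^2,
\end{equation*}
to reduce problem (\ref{inv_opt_problem}) to rank-$r$ Frobenius approximation of $\widebar{Z}^{inv}$ within $S$, and then identify $S$ isometrically with $\R^{d_L \times d}$ via restriction of each row to the $d$-dimensional left null space of $\widetilde{G}$. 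Applying Trager in this ambient space produces $\binom{d}{r}$ critical points since $\widebar{Z}^{inv}$ has rank $d$ by the assumption of the theorem.

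For the data augmentation case, I would first note that unitarity of $\rho_{\mathcal{X}}$ makes the Reynolds operator $\widebar{G}$ the orthogonal projection onto the invariant subspace $\ker G$, so the rows of $\widebar{Z}^{da}$ are supported on the $Q^{-1}$-image of this $d$-dimensional subspace. The Trager characterization applied to $\widebar{Z}^{da}$ in $\R^{d_L \times d_0}$ then yields $\binom{d}{r}$ critical points $\widebar{U}^{da}\widebar{\Sigma}_{\mathcal{I}}^{da}{\widebar{V}^{da}}^{\T} Q^{-1}$, each invariant by exactly the argument used for the global optimum in Proposition \ref{global_optima_da}. To identify these with the constrained critical set, I would observe that on invariant $W$ the DA empirical loss equals $|\G|$ times the original loss; hence an invariant rank-$r$ map is critical for the DA problem if and only if it is critical for the constrained problem, and the matching count $\binom{d}{r}$ gives set equality. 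Uniqueness of the global and local minimum in all three parts is inherited directly from Trager's theorem.

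The main obstacle will be establishing the invariance of every DA critical point, not only the global optimum, which requires tracking carefully how the row space of $\widebar{Z}^{da}$ translates back to $\ker G$ in the original $W$-coordinates after the substitution $\widetilde{W} = WQ$. The key ingredients are the identities $\widebar{G} G = 0$ and $\widebar{G}^2 = \widebar{G} = \widebar{G}^{\T}$ that hold when $\rho_{\mathcal{X}}$ is unitary, together with the fact that SVD truncation preserves the row space of the target, so that every truncated SVD of $\widebar{Z}^{da}$ retains invariance upon passing back through $Q^{-1}$ to $W$-coordinates.
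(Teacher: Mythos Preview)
Your proposal is correct and follows essentially the same route as the paper: reduce each of the three problems to a Frobenius low-rank approximation of an appropriate target matrix, then invoke the critical-point characterization of \citet[Theorem~28]{Trager2020Pure} together with the rank of that target ($d$ for $\widebar{Z}^{inv}$ and $\widebar{Z}^{da}$, $m$ for $\widebar{Z(\lambda)}^{reg}$). The paper's own proof is a single sentence citing exactly these two ingredients.

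Where you go further than the paper is in the claim that the data-augmentation critical set \emph{equals} the constrained critical set. The paper does not spell this out; it relies implicitly on the machinery built for Theorem~\ref{global_optima_da_inv} (the key identity~(\ref{eq: key_eq_da_inv}) together with Lemma~\ref{lemma: SVD_half}, whose proof in fact works for any index set $\mathcal{I}$, not just $[r]$). Your alternative argument---show every DA critical point is invariant via the Reynolds-operator identities, observe that on invariant $W$ the DA loss is a positive multiple of the constrained loss so DA-criticality implies constrained-criticality, then use the matching cardinality $\binom{d}{r}$ on both sides to force equality---is a clean and self-contained way to get the same conclusion without re-invoking the algebraic identity from Theorem~\ref{global_optima_da_inv}. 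One small caution: your counting step tacitly uses that $\operatorname{rank}(\widebar{Z}^{inv})=\operatorname{rank}(\widebar{Z}^{da})=d$, which is not part of the distinct-singular-value hypothesis of the proposition but is taken as a standing assumption elsewhere in the paper (cf.\ Remark~\ref{remark:full_rank}); make that dependence explicit.
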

According to this result, we can say that the critical points in the constrained function space are the same as the critical points in the function space with data augmentation. 
Furthermore, the number of critical points in function space for a model trained with regularization is larger than the number of critical points in the other two cases. 
We observe that fully-connected linear networks have no spurious local minima, meaning that each local minimum in parameter space corresponds to a local minimum in function space \citep{Trager2020Pure}. 
This is a consequence of the geometry of determinantal varieties that also holds in our cases, suggesting that also for our three optimization problems there are no spurious local minima.

\section{Experiments} \label{sec: experiments}
\subsection{Convergence to an Invariant Critical Point via Data Augmentation}
The following experiment demonstrates that \textit{gradient descent} on the optimization problem (\ref{da_opt_problem}) converges to a critical point that parameterizes an invariant function. 
The training data, consisting of 1000 samples before data augmentation, is a subset of the MNIST dataset. 
For computational efficiency, the images are downsampled to $14 \times 14$ pixels, resulting in a vectorized representation of dimension 196 for each image. 
The classification task involves 9 classes, and we aim to train a linear model mapping from $\mathbb{R}^{196}$ to $\mathbb{R}^{9}$ that is invariant under 90-degree rotations. 
Since digits 6 and 9 are rotationally equivalent, we exclude digit 9 from the dataset. 
The group associated with this invariance is the cyclic group of order 4, denoted as $\mathcal{G} = C_4$, where the representation $\rho_{\mathcal{X}}$ of $\mathcal{G}$ on $\mathbb{R}^{196}$ is the rotation operator. 
We employ a data augmentation technique that applies all possible group actions to the original data, yielding a total of 4000 training samples. 
We also conducted experiments with the same setup but with cross-entropy loss, and the results are similar. Details can be found in Appendix~\ref{app: experiments_ce}.

\begin{figure}[htbp]
    \centering
    \includegraphics[clip=true, trim=.5cm .2cm .2cm .5cm, width=0.44\textwidth]{./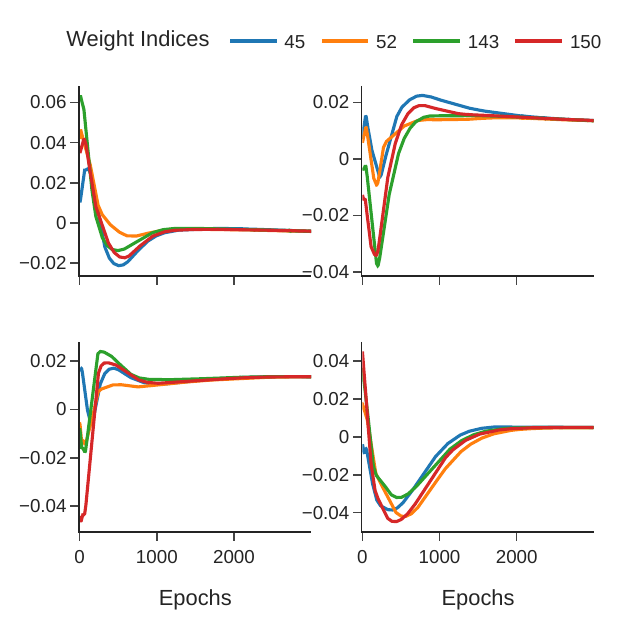}
    \caption{Weights in a two-layer linear neural network trained using data augmentation with mean squared error loss.}
    \label{fig:weights_trajectory_mse}
\end{figure}

We train a two-layer linear neural network with 5 hidden units using the \textit{mean squared error} (MSE) loss function, which parameterizes all $\mathbb{R}^{9\times 196}$ matrices with rank at most 5.
The targets are the one-hot encoded labels.
The model is trained using the \textit{Adam} optimizer \citep{Kingma2015Adam} with a learning rate of 0.001 and Adam parameter  $\boldsymbol{\beta} =(0.9, 0.999)$, which is the default value in PyTorch \citep{paszke_2019}.
The following \autoref{fig:weights_trajectory_mse} depicts the evolution of certain entries in the end-to-end matrix $W$. 
In our setup, the learned linear map is invariant if and only if specific columns are identical. 
For example, according to the linear constraints in $W$ (see \autoref{cyclic_constraint}), columns 45, 52, 143, and 150 of $W$ should be exactly the same to achieve invariance. 
From \autoref{fig:weights_trajectory_mse}, we observe that the entries in $W$ converge to approximately the same values, indicating that the learned map is nearly invariant.

\subsection{Training Curves of Three Approaches}
In the same setup as the previous experiment, we compare the performance of the model trained with all three approaches: data augmentation, hard-wiring, and regularization with different choices of the penalty parameter $\lambda$. 
In practice, we parameterize the model in the constrained function space by multiplying a basis matrix $B$ to the weight matrix of the linear model, i.e., $f(x) = W_2W_1Bx$, where $W_2 \in \R^{9 \times 5}$ and $W_1 \in \R^{5 \times 49}$ are the learnable weight matrices of the linear model, and the basis matrix $B\in \R^{49 \times 196}$ is a matrix that satisfies $BG = 0$. 
It is worth noting that it is equivalent to perform feature-averaging before feeding the data to the model if we parameterize the invariant function space in this way. 
Regarding the regularization method, $\lambda \in \{0.001, 0.01, 0.1 \}$ when using MSE as the loss.

\begin{figure}[htbp]
    \centering
    \includegraphics[clip=true, trim=.5cm .75cm .5cm 2cm, width=0.48\textwidth]{./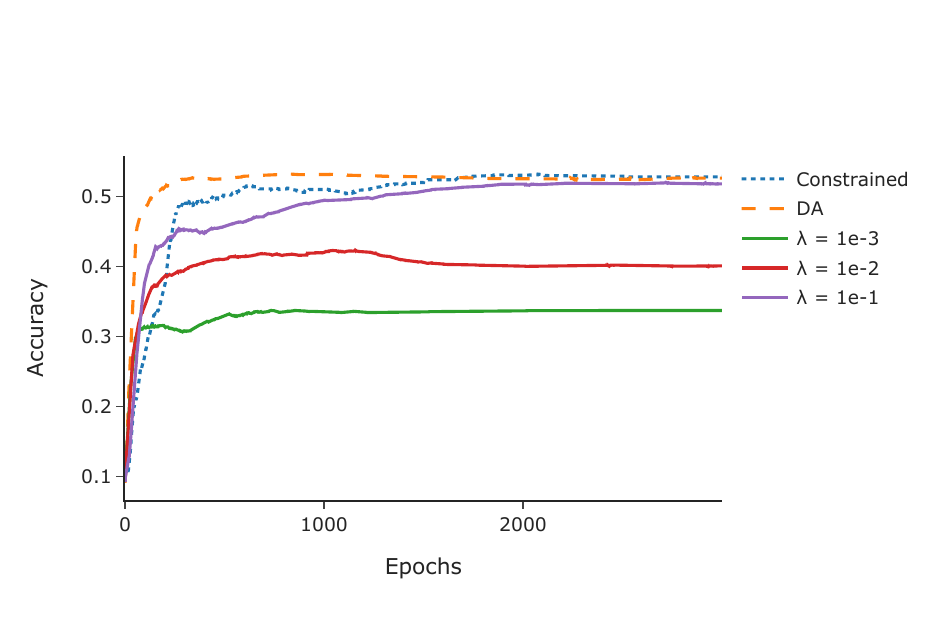}
    \caption{Training curves for data augmentation (DA), regularization ($\lambda$), and constrained model under mean squared error loss.}
    \label{fig:training_curve_mse}
\end{figure}
\autoref{fig:training_curve_mse} shows the training curves of all three methods under MSE loss. In terms of regularization, though the models are trained without data augmentation, the curves we show here are accuracy for the augmented dataset to make a fair comparison.
We can see that data augmentation and hard-wiring have similar performance in the late stage of training. When $\lambda$ is suitable, regularization can also achieve very similar performance to the previous two methods. All three methods converge to the critical point at a similar rate (around 500 epochs). In fact, when trained with MSE, the hard-wired model converges to the same global optimum as the model trained with data augmentation. This result is consistent with the theoretical analysis in \autoref{global_optima_da_inv}.
Regarding the amount of time required for training, training with data augmentation is computationally much more expensive than hard-wiring. This is because the model trained with data augmentation requires more samples ($4$ times more in this case) and more parameters (about $4$ times more in this case) than the hard-wired model. Regularization is in between the other two methods since it only requires more parameters but not more samples. 

\subsection{Comparison between Data Augmentation and Regularization}
\label{sec: experiment 4.3}
In this section, we empirically study the training dynamics in both data augmentation and regularization. Using the same setup as the above experiments, we are showing the evolution of the non-invariant part of the learned end-to-end matrix $\widehat{W}$. For any $\widehat{W}$, we can decompose it into two parts, an invariant part and a non-invariant part, i.e., $\widehat{W} = (\widehat{W} - \widehat{W}_{\perp}) + \widehat{W}_{\perp}$. A similar decomposition has been used by \citet{yonatan_2023}.
In \autoref{fig:mse_Frob}, we track the evolution of $\widehat{W}_{\perp}$ by computing $\|\widehat{W}_{\perp}\|_F$ and $\|\widehat{W}-\widehat{W}_{\perp}\|_F^2/\|\widehat{W}\|_F^2$ after each training epoch. When $\widehat{W}$ is very close to an invariant function, $\|\widehat{W}_{\perp}\|_F$ should be close to $0$ and $\|\widehat{W}-\widehat{W}_{\perp}\|_F^2/\|\widehat{W}\|_F^2$ should be close to $1$ (see \autoref{fig:mse_Frob_ratio}). 
\autoref{fig:mse_Frob} shows that with data augmentation  $\|\widehat{W}_{\perp}\|_F$ first increases and then tends to decrease to zero. 
For regularization, since the penalty coefficient $\lambda$ is finite, the critical points are actually not invariant. Therefore, in this case we can see that $\|\widehat{W}_{\perp}\|_F$ does not converge to zero. 

\begin{figure}[htbp]
    \centering
    \includegraphics[clip=true, trim=.3cm .75cm .5cm 2.5cm, width=0.48\textwidth]{./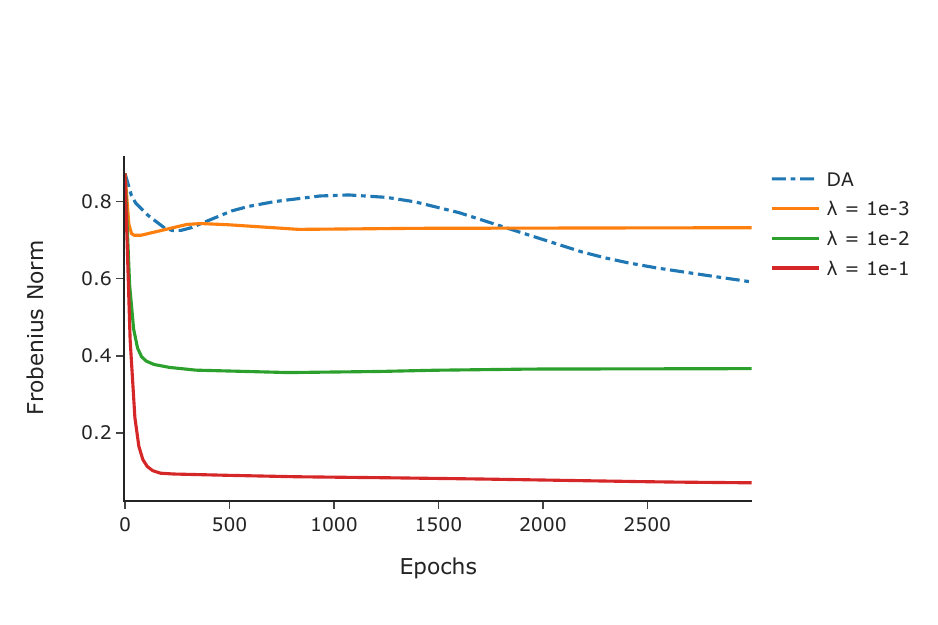}
    \caption{$\|W_{\perp}\|_F$, where $W_{\perp}$ is the non-invariant part of $W$.}
    \label{fig:mse_Frob}
\end{figure}

Interestingly, we can see that for both data augmentation and regularization, $\|\widehat{W}_{\perp}\|_F$ displays a ``double descent''. 
Our conjecture is that the loss may also be decomposed into two parts, one controlling the error from invariance, and the other one controlling the error from the target. 
Therefore, the gradient of the weights during training can be decomposed into two directions as well, resulting in this phenomenon. 
This intuition could help us better understand the training dynamics and identify methods to accelerate training. 
Further research needs to be done to investigate this both theoretically and empirically. 

\begin{figure}[htbp]
    \centering
    \includegraphics[clip=true, trim=.3cm .75cm .5cm 2.5cm, width=0.48\textwidth]{./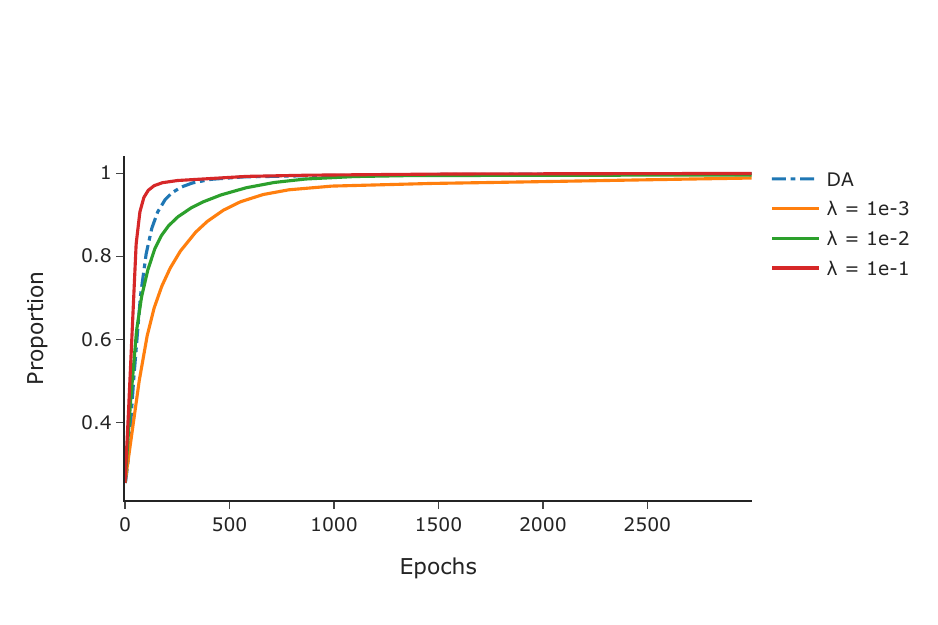}
    \caption{$\|W-W_{\perp}\|_F^2/\|W\|_F^2$.}
    \label{fig:mse_Frob_ratio}
\end{figure}

\section{Conclusion}
This work explores learning with invariances from the perspective of the associated optimization problems. 
We investigate the loss landscape of linear invariant neural networks across the settings of data augmentation, constrained models, and explicit regularization, for which we characterized the form of the global optima (\autoref{global_optima_da}, \autoref{global_optima_inv}, \autoref{global_optima_reg}). 
We find that data augmentation and constrained models share the same global optima (\autoref{global_optima_da_inv}), which also correspond to the limit of the global optima in the regularized problem (\autoref{global_optima_reg_inv}). 
Additionally, the critical points in both data augmentation and constrained models are identical, while regularization generally introduces more critical points (\autoref{cor: critical_points}).
Though our theoretical results are for linear networks, since we consider a non-convex function space, it is natural to conjecture that some of the conclusions might carry over to other models with non-convex function space.
Empirical results in Appendix~\ref{app: nonlinear} indicate that data augmentation and a constrained model indeed achieve a similar loss in the late phase of training for two-layer neural networks with different activation functions. 
At the same time, we observe that for models with a higher expressive power it is more difficult to learn invariance from the data.
Based on our theoretical results, we suggest that data augmentation may have similar performance to constrained models, but will incur higher data and computing costs. 
The regularized model does not require more data and should have a performance close to the constrained model, but it may induce more critical points. 
The constrained model should have the best performance though one might need to design the invariant architecture carefully before feeding the data to the model.

\section{Limitations and future work}
We are focusing on deep linear networks, which are a simplified model of neural networks. Nonetheless, we considered the interesting case of rank-bounded end-to-end maps, which is a non-convex function space. 
Owing to the geometry of this model and the mean squared error loss, the global optima in all three optimization problems are the same. 
However, this is generally not true when the function class is more complicated or the loss is not the MSE. \citet{moskalev2023ginvariance} empirically suggest that data-driven methods fail to learn genuine invariance as in weight-tying shallow ReLU networks for classification tasks with the cross-entropy loss.
Experiments suggest that nonlinear networks may still learn invariance via data augmentation when trained with enough data. It would be interesting to investigate this phenomenon theoretically. Furthermore, as mentioned in Section~\ref{sec: experiment 4.3}, the training dynamics of our setup is also worth studying. 

\section*{Impact Statement}
Our work provides a theoretical foundation for incorporating invariances in neural networks, comparing hard-wired constraints, data augmentation, and regularization. We show that constrained models and data augmentation lead to the same global optima, while regularization introduces additional critical points. These findings inform network design by clarifying when to enforce symmetry explicitly versus learning it from data. This has practical implications for models in physics, molecular modeling, and vision, where leveraging symmetry improves efficiency and generalization.

\section*{Acknowledgement} 
This project has been supported by NSF DMS-2145630, NSF CCF-2212520, DFG SPP 2298 project 464109215, and BMBF in DAAD project 57616814.

\bibliography{icml2025}
\bibliographystyle{icml2025}

\newpage
\appendix
\onecolumn

\section{Appendix}
\subsection{Proof of \autoref{cyclic_constraint}} 
\label{app:proof-prop1}
\cyclicconstraint*
\begin{proof}
  Suppose $\mathcal{G}$ is a cyclic group of order $k$ with generator $g$, i.e., $\mathcal{G} = \langle g\rangle$, $g^k = e$. If $W$ is invariant with respect to $\rho_{\mathcal{X}}$, then $W \rho_{\mathcal{X}}(h) = W$ for all $h \in \mathcal{G}$. Then we have $W(\mathbf{I}_{d_0}-\rho_{\mathcal{X}}(g)) = 0$ for the generator $g$.

  Conversely, if $W(\mathbf{I}_{d_0}-\rho_{\mathcal{X}}(g)) = 0$ for the generator $g$, then we have $W \rho_{\mathcal{X}}(g) = W$. Multiplying both sides by $\rho_{\mathcal{X}}(g)$, we have $W \rho_{\mathcal{X}}(g^2) = W \rho^2_{\mathcal{X}}(g) = W\rho_{\mathcal{X}}(g) = W$. By induction, we can see that $W \rho_{\mathcal{X}}(g^j) = W$ for all $j \in [k]$.
\end{proof}

The following proposition extends the above proposition to cases when the group is continuous. 
The key point is that we can parameterize any element in the continuous group in terms of basis in its corresponding Lie algebra, along with a discrete set of generators.

\begin{restatable}{proposition}{Lie_constraint}[Theorem 1 in \citet{finzi_2021}]
  \label{Lie_constraint}
  Let $\G$ be a real connected Lie group of dimension $M$ with finitely many connected components. Given a representation $\rho$ on vector space $V$ of dimension $D$, the constraint equations 
  \begin{equation}
    \rho(g)v = v, \forall v \in V, g \in \G
  \end{equation}
  holds if and only if 
  \begin{align}
      d \rho(A_m)v = 0, &\quad \forall m \in [M], \\
      (\rho(h_p)-\mathbf{I}_{D})v = 0, &\quad \forall p \in [P],
  \end{align}
  where $\{A_m\}_{m=1}^{M}$ are $M$ basis vectors for the $M$ dimensional Lie Algebra $\mathfrak{g}$ with induced representation $d\rho$, and for some finite collection $\{h_p\}_{p=1}^{P}$ of discrete generators.
\end{restatable}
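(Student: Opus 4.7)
The plan is to establish the two directions of the equivalence separately by reducing everything to the identity component of $\G$. For the forward direction ($\Rightarrow$): substituting $g = \exp(tA_m)$ into $\rho(g)v = v$ and differentiating at $t=0$ via the standard identity $\tfrac{d}{dt}\rho(\exp(tA))\big|_{t=0} = d\rho(A)$ yields $d\rho(A_m)v = 0$ for each $m$, while plugging in $g = h_p$ gives $(\rho(h_p) - \mathbf{I}_D)v = 0$ directly.

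For the reverse direction ($\Leftarrow$), the hypothesis $d\rho(A_m)v = 0$ extends by linearity of $d\rho$ to $d\rho(A)v = 0$ for all $A \in \mathfrak{g}$. Combining this with the intertwining relation $\rho(\exp(A)) = \exp(d\rho(A))$ and the power series of the matrix exponential gives
\begin{equation}
\rho(\exp(A))v = \sum_{k \geq 0} \frac{d\rho(A)^k}{k!}\, v = v,
\end{equation}
so $\rho(g)v = v$ for every $g$ in the image of $\exp$. Since the identity component $\G_0$ is generated as a group by $\exp(\mathfrak{g})$ (a classical fact for connected Lie groups), multiplicativity of $\rho$ propagates this to $\rho(g_0) v = v$ for every $g_0 \in \G_0$.

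To extend to all of $\G$, I would choose $\{h_p\}_{p=1}^{P}$ as representatives of the finitely many cosets of $\G_0$ in $\G$, so that every $g \in \G$ factors as $g = g_0 h_p$ for some $g_0 \in \G_0$ and some $p \in [P]$. Then $\rho(g)v = \rho(g_0)\rho(h_p)v = v$ using both the identity-component result and the discrete hypothesis $\rho(h_p)v = v$.

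The main obstacle is largely expository rather than technical: it lies in invoking the two classical Lie-theoretic inputs — the intertwining $\rho \circ \exp = \exp \circ\, d\rho$ and the fact that $\G_0$ is generated by the image of $\exp$ — with sufficient care, since the exponential map itself need not be surjective onto $\G_0$ in general (only its image generates the connected component). Once these two facts are granted, the remainder is a short bookkeeping argument combining the linearity of $d\rho$ with the coset decomposition induced by the finite component group $\G/\G_0$.
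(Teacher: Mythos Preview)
Your argument is correct and follows the standard route: differentiate one-parameter subgroups for the forward direction, then use $\rho\circ\exp = \exp\circ\, d\rho$ together with the fact that $\exp(\mathfrak g)$ generates the identity component $\G_0$, and finish via the coset decomposition $\G = \bigcup_p \G_0 h_p$. There is nothing to compare against here: the paper does not supply its own proof of this proposition but simply quotes it as Theorem~1 of \citet{finzi_2021}. Your write-up is essentially the proof one finds in that reference, with the appropriate care taken to note that $\exp$ need only generate $\G_0$ rather than surject onto it.
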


\subsection{Extension from invariance to equivariance} \label{app:equivariance}

Extension from invariance to equivariance is straightforward due to the fact that the constraints are still linear in the vector space of linear maps from $\X$ to $\Y$. The following proposition shows how to find the linear constraints.

\begin{restatable}{proposition}{equivariance_constraint}
  \label{equivariance_constraint}
  Given a group $\mathcal{G}$, an input vector space $\X$ with representation $\rho_{\X}$  of $\G$ and an output space $\Y$ with representation $\rho_{\Y}$ of $\G$, a linear function $f: \X \rightarrow \Y, x \mapsto Wx$ is equivariant with respect to $\rho_{\X}$ and $\rho_{\Y}$ if and only if $\operatorname*{vec}(W) \in \bigcap_{g \in \G} \ker \left({\rho_{\X}(g) \otimes {\rho_{\Y}(g^{-1})}}^{\T} - \mathbf{I}_{d_{\X}d_{\Y}}\right)$, where $d_{\X}$ is the dimension of $\X$ and $d_{\Y}$ is the dimension of $\Y$.
\end{restatable}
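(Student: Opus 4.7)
The statement is a straightforward application of the vec trick, so the plan is to translate equivariance, which is a priori an infinite family of conditions on vectors, into a finite family of linear equations on $\operatorname*{vec}(W)$.

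First I would unpack the definitions. Since $f(x) = Wx$ is linear and $\rho_{\X}(g)$, $\rho_{\Y}(g)$ act linearly, the equivariance condition $f(\rho_{\X}(g)x) = \rho_{\Y}(g) f(x)$ for all $x\in \X$ is equivalent to the matrix identity $W \rho_{\X}(g) = \rho_{\Y}(g) W$ for every $g \in \G$, simply by varying $x$ over a basis of $\X$. Using the fact that each $\rho_{\Y}(g)$ is invertible (it lies in $GL(\Y)$), I can rearrange this to
\begin{equation*}
\rho_{\Y}(g^{-1})\, W\, \rho_{\X}(g) \;=\; W, \qquad \forall g\in\G.
\end{equation*}

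Second, I would apply the vectorization identity $\operatorname*{vec}(ABC) = (C^{\T}\otimes A)\operatorname*{vec}(B)$ (or its row-major analog $\operatorname*{vec}(ABC) = (A\otimes C^{\T})\operatorname*{vec}(B)$, matching the convention used in the proposition's statement). With $A = \rho_{\Y}(g^{-1})$, $B = W$, $C = \rho_{\X}(g)$, this converts the displayed equation into the single linear condition
\begin{equation*}
\bigl(\rho_{\X}(g) \otimes \rho_{\Y}(g^{-1})^{\T} - \mathbf{I}_{d_{\X} d_{\Y}}\bigr)\, \operatorname*{vec}(W) \;=\; 0,
\end{equation*}
which is exactly the statement that $\operatorname*{vec}(W)$ lies in the kernel of the indicated operator for that particular $g$. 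Intersecting over $g\in\G$ yields the ``only if'' direction.

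For the converse, I would run the same chain of equivalences in reverse: membership in the intersection of kernels gives $\rho_{\Y}(g^{-1}) W \rho_{\X}(g) = W$ for each $g$ (using injectivity of $\operatorname*{vec}$), and multiplying by $\rho_{\Y}(g)$ recovers $W \rho_{\X}(g) = \rho_{\Y}(g) W$, whence $f$ is equivariant. I do not anticipate any real obstacle: the only subtle point is to be careful with the Kronecker product convention (column-major versus row-major vectorization) and the placement of transposes, so that the operator matches $\rho_{\X}(g) \otimes \rho_{\Y}(g^{-1})^{\T}$ exactly as written in the proposition rather than the conjugate form $\rho_{\X}(g)^{\T} \otimes \rho_{\Y}(g^{-1})$. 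Once the convention is fixed, the proof is a one-line identity.
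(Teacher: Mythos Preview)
Your proposal is correct and follows essentially the same approach as the paper: rewrite equivariance as $\rho_{\Y}(g^{-1}) W \rho_{\X}(g) = W$, vectorize using the Kronecker identity, and read off the kernel condition. If anything, your write-up is slightly more careful than the paper's own proof, since you explicitly address the converse direction and flag the Kronecker/vectorization convention issue that the paper glosses over.
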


\begin{proof}
    By definition, $f$ is equivariant if and only if $W \rho_{\X}(g) = \rho_{\Y}(g) W$ for all $g \in \G$. We can then get $\rho_{\Y}(g^{-1}) W \rho_{\X}(g) = W$. By vectorizing both sides, we can see that 
    $$\operatorname*{vec}(\rho_{\Y}(g^{-1}) W \rho_{\X}(g)) = \left({\rho_{\X}(g)}^{\T} \otimes \rho_{\Y}(g^{-1})\right) \operatorname*{vec}(W) = \operatorname*{vec}(W),$$ 
    implying that $\operatorname*{vec}(W) \in \bigcap_{g \in \G} \ker \left({\rho_{\X}(g) \otimes {\rho_{\Y}(g^{-1})}}^{\T} - \mathbf{I}_{d_{\X}d_{\Y}}\right)$.
\end{proof}

\subsection{Proof of \autoref{global_optima_inv}} \label{app:proof-thm1}
  The following lemma proves a key observation that if a matrix lives in a left null space of another matrix, then the low rank approximator remains in the left null space of the other matrix. 
  \begin{lemma} \label{lemma:left_null_svd}
    Given a matrix $A \in \mathbb{R}^{n\times m}$ and a matrix $B \in \mathbb{R}^{m \times p}$, $AB = 0$, where $d = \operatorname*{nullity}(B)$. Let $A = U\Sigma V^{\T}$ be the SVD of $A$, where $U \in \mathbb{R}^{n \times n}$, $\Sigma \in \mathbb{R}^{n \times m}$, and $V \in \mathbb{R}^{m \times m}$. Then for any $r \leq \operatorname*{rank}(A) \leq d$ , $V_r^{\T}$ lives in the left null space of $B$, namely, $V_r^{\T} B = 0$, and $A_r B = 0$.
  \end{lemma}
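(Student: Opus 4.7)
The plan is to translate $AB = 0$ into a containment of subspaces and then read off both conclusions directly from the SVD. First I would observe that $AB = 0$ is equivalent to saying that every row of $A$, regarded as a row vector in $\mathbb{R}^{1 \times m}$, is annihilated on the right by $B$; equivalently, the row space $\operatorname{row}(A) \subseteq \mathbb{R}^{m}$ is contained in the left null space of $B$, namely $\{u \in \mathbb{R}^{m} : u^{\T} B = 0\}$.

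Next, using the full SVD $A = U \Sigma V^{\T}$, let $k = \operatorname{rank}(A)$, so that exactly the first $k$ singular values are nonzero and $A = U_{k} \Sigma_{k} V_{k}^{\T}$. A standard consequence of the SVD is that the first $k$ columns $v_{1}, \ldots, v_{k}$ of $V$ form an orthonormal basis of $\operatorname{row}(A)$. Because $r \leq k$, the columns of $V_{r}$ are a subcollection of this basis, and combining with the first step yields $v_{i}^{\T} B = 0$ for every $i \leq r$. Stacking these row identities gives the first claim $V_{r}^{\T} B = 0$.

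The second claim then follows by plugging the truncated SVD $A_{r} = U_{r} \Sigma_{r} V_{r}^{\T}$ into $A_{r} B$ and factoring out the already-established zero: $A_{r} B = U_{r} \Sigma_{r} (V_{r}^{\T} B) = 0$.

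As for obstacles, I do not anticipate a serious one — the argument is essentially geometric and very short. The only subtle point is the identification $\operatorname{row}(A) = \operatorname{span}(v_{1}, \ldots, v_{k})$, which requires invoking orthogonality of $V$ together with the fact that the trailing $m - k$ singular values vanish; this is standard but worth stating explicitly. The auxiliary inequality $\operatorname{rank}(A) \leq d$ from the hypothesis plays no role in the proof itself; it only ensures that the ambient left null space of $B$ is large enough to contain $\operatorname{row}(A)$, so that the setting is non-vacuous for the range of $r$ considered in the application (\autoref{global_optima_inv}).
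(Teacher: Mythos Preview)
Your proof is correct and follows essentially the same approach as the paper: both use the SVD to identify the right singular vectors corresponding to nonzero singular values with the row space of $A$, observe that $AB=0$ places this row space inside the left null space of $B$, and then conclude $V_r^{\T} B = 0$ and hence $A_r B = U_r \Sigma_r V_r^{\T} B = 0$. The paper phrases the key step algebraically (left-multiply $AB=0$ by $U^{\T}$ and cancel the invertible diagonal block of $\Sigma$) whereas you phrase it geometrically via $\operatorname{row}(A)=\operatorname{span}(v_1,\ldots,v_k)$, but the mathematical content is identical.
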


  \begin{proof}
    \begin{align*}
      \quad &A = U\Sigma V^{\T}, \quad AB = 0 \\
      \Rightarrow \quad &U\Sigma V^{\T} B = 0 \\
      \Rightarrow \quad &\Sigma V^{\T} B = 0 \\
      \Rightarrow \quad &\Sigma_d V_d^{\T} B = 0, \quad d = \operatorname*{nullity}(B).
      \end{align*}
      Since $\Sigma_d$ is a diagonal matrix, and the diagonal entries are non-zero, we have that $V_d^{\T} B = 0$. And $V_d = \begin{bmatrix} V_r & V_{d-r} \end{bmatrix}$, we have $V_r^{\T} B= 0$. We can now see that $A_r B = U_r \Sigma_r V_r^{\T} B = 0$.
  \end{proof}

  \globaloptimainv*
  \begin{proof}
    As stated in the main text, we can rewrite the optimization problem \ref{inv_opt_problem} as the following form:
    \begin{align}
      \widehat{\widetilde{W}} = \underset{\widetilde{W} \in \mathbb{R}^{d_L \times d_0}}{\arg \min} \frac{1}{n} \|\widetilde{W} - Z\|_F^2, \quad \text{s.t.} \quad \widetilde{W} \widetilde{G} = 0, \; \operatorname*{rank}(\widetilde{W}) \leq r,
    \end{align}
    where $Z = Y X^{\T} P^{-1}$, and $\widetilde{G} = P^{-1} G$. There are two cases to consider. \\
    \textbf{Case 1}: $Z\widetilde{G} = 0$. We assume $Z$ has rank $d$. Then we can perform SVD on $Z = U \Sigma V^{\T} = U_d \Sigma_d V_d^{\T}$. \citet{eckart1936} have shown that the best rank-$r$ approximation of $Z$ is given by $Z_r = U_r \Sigma_r V_r^{\T}$. According to Lemma \autoref{lemma:left_null_svd}, we can see that $Z_r \widetilde{G} = 0$. Therefore, the solution to the above optimization problem is $\widehat{\widetilde{W}} = Z_r$.
        
    \textbf{Case 2}: $Z\widetilde{G} \neq \mathbf{0}$. We can then decompose $Z = \widebar{Z} + Z_{\perp}$, where $\widebar{Z} \widetilde{G} = 0$, $\langle \widebar{Z}, Z_{\perp} \rangle_F = 0$ . Therefore, we can see that 
    \begin{align}
      \|\widetilde{W} - Z\|_F^2 &= \|(\widetilde{W} - \widebar{Z}) - Z_{\perp}\|_F^2 \nonumber \\
      &= \|\widetilde{W} - \widebar{Z}\|_F^2 + \|Z_{\perp}\|_F^2 - 2\langle \widetilde{W} - \widebar{Z}, Z_{\perp} \rangle_F \nonumber \\
      &= \|\widetilde{W} - \widebar{Z}\|_F^2 + \|Z_{\perp}\|_F^2
    \end{align}
        
    Thus, the solution to the above optimization problem is $$\widehat{\widetilde{W}} = \underset{\widetilde{W} \in \mathbb{R}^{d_L \times d_0}}{\arg\min} \| \widetilde{W}-Z\|_F^2 = \underset{\widetilde{W} \in \mathbb{R}^{d_L \times d_0}}{\arg\min} \|\widetilde{W}-\widebar{Z}\|_F^2.$$ 
    This is then reduced to the low-rank approximation problem of $\widebar{Z}$, which is the same as in \textbf{Case 1}. Let $\widebar{Z} = \widebar{U} \widebar{\Sigma} \widebar{V}^{\T}$ be the SVD of $\widebar{Z}$. Then the solution is $\widehat{\widetilde{W}} = \widebar{U}_r \widebar{\Sigma}_r {\widebar{V}_r}^{\T}$.
        
    Note that $\widebar{Z}$ can be found by projecting $Z$ onto the left null space of $\widetilde{G}$. An easy construction is $\widebar{Z} = Z(\mathbf{I}_{d_0}-\widetilde{G} \widetilde{G}^{+})$. To see this, we can check that $\widebar{Z} \widetilde{G} = 0$ and $\langle \widebar{Z}, Z_{\perp} \rangle_F = 0$. We have
    \begin{align}
      \widebar{Z} \widetilde{G} = Z(\mathbf{I}_{d_0}-\widetilde{G} \widetilde{G}^{+}) \widetilde{G} = Z \widetilde{G} - Z \widetilde{G} \widetilde{G}^{+} \widetilde{G} = Z \widetilde{G} - Z \widetilde{G} = 0.
    \end{align}

    To check $\langle \widebar{Z}, Z_{\perp} \rangle_F = 0$, we have
    \begin{align}
      \langle \widebar{Z}, Z_{\perp} \rangle_F &= \operatorname*{tr} \left[ \widebar{Z}^{\T} Z_{\perp} \right] = \operatorname*{tr} \left[\widebar{Z}^{\T} (Z-\widebar{Z}) \right] \nonumber \\
        &= \operatorname*{tr} \left[(\mathbf{I}_{d_0}-\widetilde{G} \widetilde{G}^{+})^{\T} Z^{\T} Z\widetilde{G} \widetilde{G}^{+} \right] \nonumber \\
        &= \operatorname*{tr} \left[Z^{\T} Z\widetilde{G} \widetilde{G}^{+} \right] - \operatorname*{tr} \left[\widetilde{G} \widetilde{G}^{+} Z^{\T} Z\widetilde{G} \widetilde{G}^{+} \right] \nonumber \\
        &= \operatorname*{tr} \left[Z^{\T} Z\widetilde{G} \widetilde{G}^{+} \right] - \operatorname*{tr} \left[Z^{\T} Z\widetilde{G} \widetilde{G}^{+}\widetilde{G} \widetilde{G}^{+} \right] \nonumber \\
        &= \operatorname*{tr} \left[Z^{\T} Z\widetilde{G} \widetilde{G}^{+} \right] - \operatorname*{tr} \left[Z^{\T} Z\widetilde{G} \widetilde{G}^{+} \right] = 0.
    \end{align}
  \end{proof}
  
\subsection{Proof of \autoref{global_optima_reg}}
\globaloptimareg*
\begin{proof}
  The loss function is defined as: 
  \begin{align}
    \mathcal{L}(\widetilde{W}) &=  \frac{1}{n} \|\widetilde{W}-Z\|_F^2+\lambda \|\widetilde{W}\widetilde{G}\|_F^2 \\
    &=  \frac{1}{n} \operatorname*{tr}[(\widetilde{W}-Z)^{\T} (\widetilde{W}-Z)] + \lambda \operatorname*{tr}[(\widetilde{W}\widetilde{G})^{\T}(\widetilde{W}\widetilde{G})] \nonumber \\ 
    &= \frac{1}{n} \operatorname*{tr}[\widetilde{W}\widetilde{W}^{\T} - 2\widetilde{W}^{\T}Z + Z^{\T}Z] + \lambda \operatorname*{tr}[\widetilde{W}(\widetilde{G}\widetilde{G}^{\T})\widetilde{W}^{\T}] \nonumber \\
    &= \frac{1}{n} \operatorname*{tr}[\widetilde{W}(\mathbf{I}_{d_0}+n\lambda \widetilde{G}\widetilde{G}^{\T})\widetilde{W}^{\T} - 2\widetilde{W}^{\T}Z + Z^{\T}Z] \nonumber \\
    &= \frac{1}{n} \operatorname*{tr}[\widetilde{W}B(\lambda)B(\lambda)^{\T}\widetilde{W}^{\T} - 2B(\lambda)^{\T}\widetilde{W}^{\T}ZB(\lambda)^{-1} + Z^{\T}Z] \nonumber \\
    &= \frac{1}{n} \|\widetilde{W}B(\lambda)-ZB(\lambda)^{-1}\|_F^2+ \text{const}.
  \end{align}
  Therefore, the optimization problem is equivalent to the following low rank approximation problem:
  \begin{align}
    \widehat{\widetilde{W(\lambda)}} &:=\underset{\widetilde{W} \in \mathbb{R}^{d_L \times d_0}}{\arg\min} \frac{1}{n} \|\widetilde{W}B(\lambda)-Z{B(\lambda)}^{-1}\|_F^2, \quad  \operatorname*{rank}(\widetilde{W}) \leq r  \\
    &= \widebar{Z_r(\lambda)}^{reg} {B(\lambda)}^{-1} \\
    &= \widebar{U_r(\lambda)}^{reg} \widebar{\Sigma_r(\lambda)}^{reg} {\widebar{V_r(\lambda)}^{reg}}^{\T} {B(\lambda)}^{-1}
  \end{align}
  Since $  \widehat{W} = \widehat{\widetilde{W}}P^{-1}$, we have $\widehat{W(\lambda)}^{reg} = \widebar{U_r(\lambda)}^{reg} \widebar{\Sigma_r(\lambda)}^{reg} {\widebar{V_r(\lambda)}^{reg}}^{\T} {B(\lambda)}^{-1} P^{-1}$.
\end{proof}

\subsection{Proof of \autoref{global_optima_reg_inv}} 
  To prove the theorem, we need the following lemma:
  \begin{lemma} [Theorem 2.1 in \citet{dieci2005}] \label{lemma:continuity_of_SVD}
    Let $A$ be a $\mathcal{C}^s, s \geq 1$, matrix valued function, $t \in[0,1] \rightarrow A(t) \in$ $\mathbb{R}^{m \times n}, m \geq n$, of rank $n$, having $p$ disjoint groups of singular values ( $p \leq n$ ) that vary continuously for all $t: \Sigma_1, \ldots, \Sigma_p$. Let $z=m-n$. Consider the function $M \in \mathcal{C}^s\left([0,1], \mathbb{R}^{(m+n) \times(m+n)}\right)$ given by
    \begin{align}
      M(t)=\left[\begin{array}{cc}
      0 & A(t) \\
      A^{\T}(t) & 0
      \end{array}\right] .
    \end{align}
    Then, there exists orthogonal $Q \in \mathcal{C}^s\left([0,1], \mathbb{R}^{(m+n) \times(m+n)}\right)$ of the form
    \begin{align}
      Q(t)=\left[\begin{array}{ccc}
        U_2(t) & U_1(t) / \sqrt{2} & U_1(t) / \sqrt{2} \\
        0 & V(t) / \sqrt{2} & -V(t) / \sqrt{2}
        \end{array}\right],
    \end{align}
    such that
    \begin{align}
      Q^{\T}(t) M(t) Q(t)=\left[\begin{array}{ccc}
        0 & 0 & 0 \\
        0 & S(t) & 0 \\
        0 & 0 & -S(t)
        \end{array}\right],
    \end{align}
    where $S$ is $S=\operatorname{diag}\left(S_i, i=1, \ldots, p\right)$, and each $S_i$ is symmetric positive definite, and its eigenvalues coincide with the $\Sigma_i, i=1, \ldots, p$. We have $U_2 \in$ $\mathcal{C}^s\left([0,1], \mathbb{R}^{m \times z}\right), U_1 \in \mathcal{C}^s\left([0,1], \mathbb{R}^{m \times n}\right)$, and $V \in \mathcal{C}^s\left([0,1], \mathbb{R}^{n \times n}\right)$. Equivalently, if we let $U=\left[\begin{array}{ll}U_1 & U_2\end{array}\right]$, then
    $$
    U^{\T}(t) A(t) V(t)=\left[\begin{array}{c}
    S(t) \\
    0
    \end{array}\right],
    $$
    with the previous form of $S$.
  \end{lemma}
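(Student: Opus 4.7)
The plan is to reduce the SVD-smoothness statement to a smooth eigendecomposition of the symmetric companion matrix $M(t)$ and then exploit the special block structure of $M$ to recover the prescribed form of $Q(t)$. Observe that $M(t)\in\mathbb{R}^{(m+n)\times(m+n)}$ is symmetric and depends on $t$ in the same $\mathcal{C}^s$ manner as $A(t)$. An elementary calculation shows that for any singular triple $(\sigma,u,v)$ of $A(t)$ with $\sigma>0$, the vectors $\frac{1}{\sqrt{2}}(u,\pm v)^{\T}$ are eigenvectors of $M(t)$ with eigenvalues $\pm\sigma$; and the subspace $\{(u,0)^{\T}:u\in\ker A^{\T}(t)\}$ furnishes a $z$-dimensional kernel of $M(t)$ (using $\operatorname{rank}A(t)=n$). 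So the spectrum of $M(t)$ is $\{0\}\cup\bigcup_{i=1}^p(\Sigma_i\cup -\Sigma_i)$ with multiplicities doubled appropriately, and the hypothesized disjointness and continuity of the $\Sigma_i$'s gives spectral gaps separating the $2p+1$ clusters $\{-\Sigma_p,\dots,-\Sigma_1,\{0\},\Sigma_1,\dots,\Sigma_p\}$ for each $t\in[0,1]$.

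Given these persistent gaps, the first key step is to construct, for each cluster $C_j(t)$, a spectral projector via Kato's contour formula
\begin{equation*}
P_j(t)=-\frac{1}{2\pi\mathrm{i}}\oint_{\Gamma_j(t)}\bigl(M(t)-z I\bigr)^{-1}\,dz,
\end{equation*}
where $\Gamma_j(t)$ is a small circle enclosing $C_j(t)$ and nothing else. By compactness of $[0,1]$ and continuity of the cluster positions, we can cover $[0,1]$ by finitely many subintervals on each of which a single contour $\Gamma_j$ works for all $t$; on each such subinterval the resolvent is $\mathcal{C}^s$ in $t$, hence so is $P_j(t)$. A standard partition-of-unity/patching argument then gives globally $\mathcal{C}^s$ projectors of locally constant rank.

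Next, I would turn each smooth projector into a smooth orthonormal frame. The classical Kato device is to solve the linear ODE $\dot W(t)=[\dot P_j(t),P_j(t)]W(t)$, $W(0)=I$, which produces an orthogonal transformation intertwining $P_j(0)$ with $P_j(t)$; applying it to any fixed orthonormal basis of $\operatorname{range}P_j(0)$ yields $\mathcal{C}^s$ orthonormal columns spanning $\operatorname{range}P_j(t)$. Doing this for the zero cluster produces $(u^{(k)}(t),0)^{\T}$ columns whose upper blocks assemble into $U_2(t)\in\mathbb{R}^{m\times z}$. For the positive cluster $\Sigma_i(t)$, the frame consists of vectors $\frac{1}{\sqrt{2}}(a^{(k)}(t),b^{(k)}(t))^{\T}$ with $\|a^{(k)}\|^2=\|b^{(k)}\|^2=1$ (forced by the eigen-equation $M\xi=\sigma\xi$ with $\sigma\ne 0$); the upper blocks give columns of $U_1(t)$ and the lower blocks give columns of $V(t)$. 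The negative cluster $-\Sigma_i(t)$ need not be treated independently: taking a sign-flip involution on the lower block sends any eigenvector of $M$ for $+\sigma$ to an eigenvector for $-\sigma$, so I can \emph{define} the frame on $-\Sigma_i(t)$ to be $\frac{1}{\sqrt{2}}(a^{(k)}(t),-b^{(k)}(t))^{\T}$ using the already-built frame on $\Sigma_i(t)$. This guarantees the exact block pattern of $Q(t)$ in the statement and inherits $\mathcal{C}^s$ smoothness automatically.

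Finally, the diagonal form $Q^{\T}MQ=\operatorname{diag}(0,S,-S)$ and the equivalent SVD identity $U^{\T}AV=\bigl[{S\atop 0}\bigr]$ follow by direct block multiplication and the eigen-equations above, with each $S_i(t)$ being the $\mathcal{C}^s$ symmetric positive definite matrix $P_j(t)\,M(t)\,P_j(t)$ restricted to $\operatorname{range}P_j(t)$ and expressed in the smooth frame. The principal obstacle is the patching/smoothness step: even with persistent spectral gaps one must rule out holonomy and frame-ambiguity as $t$ varies, which is exactly what the Kato ODE $\dot W=[\dot P,P]W$ is designed to eliminate; the sign/pairing construction for $\pm\Sigma_i$ is what delivers the nontrivial $1/\sqrt{2}$ block structure demanded by the lemma and is, conceptually, the most delicate bookkeeping in the argument.
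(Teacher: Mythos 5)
The paper does not prove this lemma at all: it is imported verbatim as Theorem~2.1 of \citet{dieci2005} and used as a black box in the proof of \autoref{global_optima_reg_inv}. So there is no ``paper proof'' to compare against; what you have done is reconstruct the argument behind the cited result.

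Your reconstruction is sound and follows the standard route — symmetrize $A(t)$ into the Jordan--Wielandt companion $M(t)$, use Kato's contour integral to obtain $\mathcal{C}^s$ spectral projectors onto the clusters $\{0\}\cup\bigcup_i(\pm\Sigma_i)$, transport a fixed orthonormal frame along the Kato ODE $\dot W = [\dot P, P]W$, and read off $U_1,U_2,V$ from the block structure of the resulting frame. Two small remarks. First, the ``partition-of-unity/patching'' step is unnecessary: because the spectral projector onto a fixed cluster is unique (independent of the choice of admissible contour), the locally defined $P_j(t)$ automatically agree on overlaps and are globally $\mathcal{C}^s$; there is nothing to glue. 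Second, you assert but do not verify that the upper halves $u^{(k)}=\sqrt{2}\,a^{(k)}$ of the positive-cluster frame are themselves orthonormal, which is what makes $U_1$ have orthonormal columns. This follows from the involution $J=\operatorname{diag}(I_m,-I_n)$: since $JMJ=-M$, $J$ maps the $+\Sigma_i$ invariant subspace isometrically onto the $-\Sigma_i$ one, so for $w=(a,b)$, $w'=(a',b')$ in the $+\Sigma_i$ subspace one has both $\langle a,a'\rangle+\langle b,b'\rangle=\langle w,w'\rangle$ and $\langle a,a'\rangle-\langle b,b'\rangle=\langle w,Jw'\rangle=0$, hence $\langle a,a'\rangle=\tfrac12\langle w,w'\rangle$ and the claimed orthonormality and the $1/\sqrt2$ normalization both drop out. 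With that gap filled, the sketch is a correct proof of the cited statement.
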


  \globaloptimareginv*
  \begin{proof}
    Let $U^{\widetilde{G}}\Sigma^{\widetilde{G}} {V^{\widetilde{G}}}^{\T}$ be the SVD of $\widetilde{G}$. Since $\operatorname*{nullity}(\widetilde{G}) = d$, then $\operatorname*{rank}(\widetilde{G}) = d_0-d$, suggesting that only the first $d_0 - d$ elements of $\Sigma^{\widetilde{G}}$ are non-zero. Denote $\Sigma^{\widetilde{G}} = \operatorname*{diag}(\sigma^{\widetilde{G}}_1, \dots, \sigma^{\widetilde{G}}_{d_0-d}, 0, \dots, 0)$, then we have $\widetilde{G}^{+} = V^{\widetilde{G}} \operatorname*{diag}(1/{\sigma^{\widetilde{G}}_1}, \dots, 1/{\sigma^{\widetilde{G}}_{d_0-d}}, 0, \dots, 0) U^{\widetilde{G}^{\T}}$ according to the property of Moore-Penrose pseudoinverse. Therefore, we have 
      \begin{align}
        \mathbf{I}_{d_0}+n\lambda \widetilde{G}\widetilde{G}^{\T} &= \mathbf{I}_{d_0}+n\lambda U^{\widetilde{G}} {\Sigma^{\widetilde{G}}}^2 {U^{\widetilde{G}}}^{\T}= U^{\widetilde{G}} \left(\mathbf{I}_{d_0}+n\lambda {\Sigma^{\widetilde{G}}}^2 \right) {U^{\widetilde{G}}}^{\T} \nonumber \\
        &= U^{\widetilde{G}} \operatorname*{diag}(1+n\lambda {\sigma_1^{\widetilde{G}}}^2, \dots, 1+n\lambda {\sigma_{d_0-d}^{\widetilde{G}}}^2,  1, \dots, 1) {U^{\widetilde{G}}}^{\T},
      \end{align}
      \begin{align}
        B(\lambda) :&= (\mathbf{I}_{d_0}+n\lambda \widetilde{G}\widetilde{G}^{\T})^{\frac{1}{2}} \nonumber \\
        &= U^{\widetilde{G}} \operatorname*{diag}(\sqrt{1+n\lambda {\sigma_1^{\widetilde{G}}}^2}, \dots, \sqrt{1+n\lambda {\sigma_{d_0-d}^{\widetilde{G}}}^2},  1, \dots, 1) {U^{\widetilde{G}}}^{\T},
      \end{align}
      and
      \begin{align}
        \lim_{\lambda \rightarrow \infty}{B(\lambda)}^{-1} &= \lim_{\lambda \rightarrow \infty} (\mathbf{I}_{d_0}+n\lambda \widetilde{G}\widetilde{G}^{\T})^{-\frac{1}{2}} \nonumber \\
        &= \lim_{\lambda \rightarrow \infty} U^{\widetilde{G}} \operatorname*{diag}(1/\sqrt{1+n\lambda {\sigma_1^{\widetilde{G}}}^2}, \dots, 1/\sqrt{1+n\lambda {\sigma_{d_0-d}^{\widetilde{G}}}^2},  1, \dots, 1) {U^{\widetilde{G}}}^{\T}, \nonumber \\
        &= U^{\widetilde{G}} \operatorname*{diag}(0, \dots, 0,  1, \dots, 1) {U^{\widetilde{G}}}^{\T}
      \end{align}
      
      On the other hand, we have 
      \begin{align*}
        \quad \mathbf{I}_{d_0}-\widetilde{G}\widetilde{G}^{+} &= \mathbf{I}_{d_0} - U^{\widetilde{G}} \operatorname*{diag}(1, \dots, 1, 0, \dots, 0) U^{\widetilde{G}^{\T}} \\
        &= U^{\widetilde{G}} \operatorname*{diag}(0, \dots, 0,  1, \dots, 1) {U^{\widetilde{G}}}^{\T}.
      \end{align*}

      Thus, we can see that $\lim_{\lambda \rightarrow \infty} {B(\lambda)}^{-1} = \mathbf{I}_{d_0}-\widetilde{G}\widetilde{G}^{+}$.

      Recall that $\widehat{W(\lambda)}^{reg} = \widebar{Z_r(\lambda)}^{reg} {B(\lambda)}^{-1} P^{-1} = \widebar{U_r(\lambda)}^{reg} \widebar{\Sigma_r(\lambda)}^{reg} {{\widebar{V_r(\lambda)}}^{reg}}^{\T} {B(\lambda)}^{-1} P^{-1}$ and $\widehat{W}^{inv} = \widebar{U}^{inv}_r \widebar{\Sigma}^{inv}_r \widebar {V}_{r}^{inv^{\T}}$.

      First, we want to show that the regularization path is continuous on $(0, \infty)$. According to Weyl's inequality for singular values, we have the following inequalities:
      \begin{align}
        |\sigma_k(\widebar{Z(\lambda+\delta)}^{reg})-\sigma_k(\widebar{Z(\lambda)}^{reg})| \leq \|\widebar{Z(\lambda+\delta)}^{reg}-\widebar{Z(\lambda)}^{reg}\|_{2}, \quad \forall k \in [\min{\{d_0, d_L\}}].
      \end{align}
      On the other hand, we have,
      \begin{align}
        &\quad \|\widebar{Z(\lambda+\delta)}^{reg}-\widebar{Z(\lambda)}^{reg}\|_{2} \\
        &= \|Z{B(\lambda+\delta)}^{-1}-Z{B(\lambda)}^{-1}\|_{2} \\
        &=  \|Z U^{\widetilde{G}} \operatorname*{diag}\left(\frac{1}{\sqrt{1+n\lambda {\sigma_1^{\widetilde{G}}}^2}}-\frac{1}{\sqrt{1+n(\lambda+\delta){\sigma_1^{\widetilde{G}}}^2}}, \dots, \right.\\
        & \left. \frac{1}{\sqrt{1+n\lambda {\sigma_{d_0-d}^{\widetilde{G}}}^2}}-\frac{1}{\sqrt{1+n(\lambda+\delta){\sigma_{d_0-d}^{\widetilde{G}}}^2}} , 0, \dots, 0\right) {U^{\widetilde{G}}}^{\T} \|_{2} \\
        &\leq \|Z\|_{2} \max_{i \in [d_0-d]} \left|\frac{1}{\sqrt{1+n\lambda {\sigma_i^{\widetilde{G}}}^2}}-\frac{1}{\sqrt{1+n(\lambda+\delta){\sigma_i^{\widetilde{G}}}^2}}\right| \rightarrow 0, \quad \text{as} \; \delta \rightarrow 0.
      \end{align}
      Therefore, the singular values of $\widebar{Z(\lambda)}^{reg}$ are continuous with respect to $\lambda$ on $(0, \infty)$. It is also easy to check that the function $f(\lambda) = \frac{1}{\sqrt{1+c\lambda}}$ is smooth on $[0, \infty)$ for any constant $c > 0$. Applying \autoref{lemma:continuity_of_SVD} to $\widebar{Z(\lambda)}^{reg}$, we find that there exist smooth $\widebar{U(\lambda)}^{reg}$ and $\widebar{V(\lambda)}^{reg}$ such that $\widebar{Z(\lambda)}^{reg} = \widebar{U(\lambda)}^{reg} \widebar{\Sigma(\lambda)}^{reg} {\widebar{V(\lambda)}^{reg}}^{\T}$. Thus, by truncating $\widebar{U(\lambda)}^{reg}$ and $\widebar{V(\lambda)}^{reg}$, $\widebar{U_r(\lambda)}^{reg}$ and $\widebar{V_r(\lambda)}^{reg}$ are also smooth functions of $\lambda$ on $(0, \infty)$. Since the singular values are continuous with respect to $\lambda$, we have that $\widebar{\Sigma_r(\lambda)}^{reg}$ is also continuous on $(0, \infty)$. Then $B(\lambda)$ is continuous on $(0, \infty)$. Since the product of continuous functions is continuous, the regularization path is continuous on $(0, \infty)$.

      Finally, we want to show that $\lim_{\lambda \rightarrow \infty} \widehat{W(\lambda)}^{reg} =  \widehat{W}^{inv}$ holds. Taking the limit of $\lambda$ to infinity, we have that $\lim_{\lambda \rightarrow \infty} \widebar{Z_r(\lambda)}^{reg} = \lim_{\lambda \rightarrow \infty} Z{B(\lambda)}^{-1} = Z(\mathbf{I}_{d_0}-\widetilde{G}\widetilde{G}^{+}) = \widebar{Z}^{inv}$. According to the continuity of the regularization path, we get $\lim_{\lambda \rightarrow \infty} \widebar{U_r(\lambda)}^{reg} \widebar{\Sigma_r(\lambda)}^{reg} {\widebar{V_r(\lambda)}^{reg}}^{\T} = \widebar{U}^{inv}_r \widebar{\Sigma}^{inv}_r \widebar {V}_{r}^{inv^{\T}}$.

      Due to the fact that $\lim_{\lambda \rightarrow \infty} Z{B(\lambda)}^{-1}$ lives in the left null space of $\widetilde{G}$, \autoref{lemma:left_null_svd} tells us that the limit $\lim_{\lambda \rightarrow \infty} \widebar{U_r(\lambda)}^{reg} \widebar{\Sigma_r(\lambda)}^{reg} {\widebar{V_r(\lambda)}^{reg}}^{\T}$ also lives in the left null space of $\widetilde{G}$. Thus, we have that
      \begin{align}
        \lim_{\lambda \rightarrow \infty} \widebar{U_r(\lambda)}^{reg} \widebar{\Sigma_r(\lambda)}^{reg} {\widebar{V_r(\lambda)}^{reg}}^{\T} {B(\lambda)}^{-1} = \lim_{\lambda \rightarrow \infty} \widebar{U_r(\lambda)}^{reg} \widebar{\Sigma_r(\lambda)}^{reg} {\widebar{V_r(\lambda)}^{reg}}^{\T}.
      \end{align}
      The proof is complete.
    \end{proof}

\subsection{Proof of \autoref{global_optima_da}} \label{app:proof-thm2}
  To prove the proposition, we need the following lemma:
  \begin{lemma} \label{lemma: commute_pd_matrix}
    $M$ and $G$ are both real $d$ by $d$ matrices. $G$ is diagonalizable, and $M$ is positive definite. If $MG = GM$, then $M^{\frac{1}{2}} G = GM^{\frac{1}{2}}$, where $M^{\frac{1}{2}}$ is the positive definite square root of $M$.
  \end{lemma}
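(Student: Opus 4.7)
The plan is to show that $M^{1/2}$ can be written as a polynomial in $M$, since anything commuting with $M$ then automatically commutes with every polynomial in $M$, and hence with $M^{1/2}$. The hypothesis that $G$ is diagonalizable is actually not needed for this argument; only the commutation relation $MG=GM$ matters.

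First I would invoke the spectral theorem for the symmetric positive definite matrix $M$: write $M=\sum_{i=1}^{k}\lambda_i P_i$, where $\lambda_1,\dots,\lambda_k>0$ are the distinct eigenvalues of $M$ and $P_1,\dots,P_k$ are the orthogonal projectors onto the corresponding eigenspaces, satisfying $P_iP_j=\delta_{ij}P_i$ and $\sum_i P_i=\mathbf{I}_d$. Because $M$ is positive definite, its unique positive definite square root is
\begin{equation}
M^{1/2}=\sum_{i=1}^{k}\sqrt{\lambda_i}\,P_i.
\end{equation}

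Next I would express each spectral projector $P_i$ as a polynomial in $M$ via Lagrange interpolation:
\begin{equation}
P_i \;=\; \prod_{j\neq i}\frac{M-\lambda_j\mathbf{I}_d}{\lambda_i-\lambda_j}.
\end{equation}
Substituting back, $M^{1/2}=p(M)$ for the polynomial $p(t)=\sum_{i}\sqrt{\lambda_i}\prod_{j\neq i}\frac{t-\lambda_j}{\lambda_i-\lambda_j}$.

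Finally, from $MG=GM$ I conclude by a trivial induction that $M^{\ell}G=GM^{\ell}$ for every $\ell\geq 0$, so $q(M)G=Gq(M)$ for every polynomial $q$. Applying this to $q=p$ yields $M^{1/2}G=GM^{1/2}$, as desired. There is no real obstacle here: the only subtlety worth flagging is confirming that the polynomial $p$ does represent the unique positive definite square root (which follows because $p(M)$ is symmetric and its eigenvalues on the $i$-th eigenspace of $M$ are $\sqrt{\lambda_i}>0$).
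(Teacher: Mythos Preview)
Your proof is correct and, as you note, does not need the diagonalizability of $G$ at all: the key observation that $M^{1/2}$ is a polynomial in $M$ (obtained by Lagrange-interpolating $\sqrt{\cdot}$ at the distinct eigenvalues of $M$) immediately transfers the commutation $MG=GM$ to $M^{1/2}G=GM^{1/2}$.

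The paper argues differently. It invokes a simultaneous-diagonalization result (Theorem~1.3.12 in Horn--Johnson) to say that because $M$ and $G$ commute and are both diagonalizable, there is a single basis in which both are diagonal; writing $M=P\Lambda P^{\T}$ and $G=PDP^{\T}$ in that basis, one then reads off $M^{1/2}G=GM^{1/2}$ from the commutativity of diagonal matrices. This is where the hypothesis on $G$ is used. Your route is more elementary and strictly more general, and it also sidesteps a small subtlety in the paper's presentation: simultaneous diagonalizability only guarantees the existence of \emph{some} common eigenbasis, not that an arbitrary orthogonal eigenbasis $P$ of $M$ will diagonalize $G$, so a careful version of the paper's argument would need to justify that a suitable $P$ (orthogonal for $M$, diagonalizing for $G$) can be chosen. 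Your polynomial argument avoids this issue entirely.
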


  \begin{proof}
    Let $M = P \Lambda P^{\T}$ be the eigen decomposition of $M$. Since $M$ is positive definite, we have that $P$ is orthogonal, and $\Lambda$ is a diagonal matrix with positive entries. According to theorem 1.3.12 in \citet{horn_matrix_2017}, we know that $P G P^{\T}$ is also diagonal since $M$ and $G$ commute. Write $G = P D P^{\T}$, then $GM^{\frac{1}{2}} = P^{\T} D PP^{\T}\Lambda P = P^{\T} D \Lambda P = P^{\T} \Lambda PP^{\T} D P = M^{\frac{1}{2}} G$.
  \end{proof}

  \begin{lemma} \label{lemma:group_average_properties}
    Let $(\mathcal{G}, \mathcal{A}, \lambda)$ be a measure space. Consider a nontrivial representation $\rho_{\mathcal{X}}$ of a compact group $\mathcal{G}$, let $\lambda$ be the normalized Haar measure on $\mathcal{G}$. The existence of the Haar measure is guaranteed by the compactness of $\mathcal{G}$ \citep{bourbaki04}. Define $\widebar{G} := \int_{\mathcal{G}}\rho_{\mathcal{X}}(g) d\lambda(g)$. Then we have the following properties:
    \begin{enumerate}
      \item $\widebar{G} \rho_{\mathcal{X}}(h) =  \widebar{G}$ for all $h \in \mathcal{G}$.
      \item $\widebar{G}$ is idempotent, i.e., $\widebar{G}^2 = \widebar{G}$. That is to say, $\widebar{G}$ is a projection operator from $\mathcal{X}$ to the subspace all $\mathcal{G}$-fixed points.
      \item If $\rho_{\mathcal{X}}$ is unitary, i.e., $\rho_{\mathcal{X}}(h)^{\dagger} \rho_{\mathcal{X}}(h) = \mathbf{I}_d$ for all $h \in \mathcal{G}$, then $\widebar{G}$ is Hermitian.
    \end{enumerate}
  \end{lemma}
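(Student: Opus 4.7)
The plan is to establish the three properties in order, each building on the previous one, using the translation-invariance properties of the Haar measure on a compact group.

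For the first property, I would use right-invariance of the Haar measure. The key step is to pull $\rho_{\mathcal{X}}(h)$ inside the integral and apply the homomorphism property of $\rho_{\mathcal{X}}$:
\begin{align*}
\widebar{G}\,\rho_{\mathcal{X}}(h) \;=\; \int_{\mathcal{G}} \rho_{\mathcal{X}}(g)\rho_{\mathcal{X}}(h)\, d\lambda(g) \;=\; \int_{\mathcal{G}} \rho_{\mathcal{X}}(gh)\, d\lambda(g).
\end{align*}
A change of variables $g' = gh$, together with the right-invariance of the Haar measure on the compact group $\mathcal{G}$, then gives $\int_{\mathcal{G}} \rho_{\mathcal{X}}(g')\, d\lambda(g') = \widebar{G}$. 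One should note that pulling the linear map $\rho_{\mathcal{X}}(h)$ outside a Bochner-type integral of matrix-valued functions is justified by linearity because $\mathcal{G}$ is compact and $\rho_{\mathcal{X}}$ is continuous (so the integrand is bounded).

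For the second property, I would combine the first property with the fact that $\lambda$ is the normalized Haar measure, i.e., $\lambda(\mathcal{G})=1$. Writing $\widebar{G}^2 = \widebar{G} \int_{\mathcal{G}} \rho_{\mathcal{X}}(h)\,d\lambda(h) = \int_{\mathcal{G}} \widebar{G}\,\rho_{\mathcal{X}}(h)\,d\lambda(h)$, and applying property 1 inside the integrand, this reduces to $\int_{\mathcal{G}} \widebar{G}\,d\lambda(h) = \widebar{G}$. This immediately shows that $\widebar{G}$ is idempotent, and together with property 1 identifies it as the projection onto the subspace of $\mathcal{G}$-fixed points.

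For the third property, the plan is to compute $\widebar{G}^{\dagger}$ by pulling the Hermitian conjugate inside the integral and invoking unitarity: $\rho_{\mathcal{X}}(g)^{\dagger} = \rho_{\mathcal{X}}(g)^{-1} = \rho_{\mathcal{X}}(g^{-1})$. Then $\widebar{G}^{\dagger} = \int_{\mathcal{G}} \rho_{\mathcal{X}}(g^{-1})\, d\lambda(g)$, and I would change variables via $g \mapsto g^{-1}$. The main subtlety, and the step most worth flagging, is that this change of variables requires the Haar measure to be inversion-invariant; this holds here because every compact group is unimodular, so the pushforward of the Haar measure under inversion equals itself. After this step the integral collapses back to $\widebar{G}$, completing the argument. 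The only real obstacle across the whole proof is the careful appeal to these Haar measure facts (right-invariance, normalization, unimodularity), each of which is standard for compact groups but must be invoked explicitly to make the chain of equalities rigorous.
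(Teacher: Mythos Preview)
Your proposal is correct and follows essentially the same approach as the paper's proof: translation-invariance of the Haar measure for property~1, idempotence via integration for property~2, and unitarity plus inversion-invariance for property~3. Your treatment is in fact slightly tidier—you derive property~2 directly from property~1 rather than rewriting the double integral from scratch, and you explicitly name the right-invariance and unimodularity facts that the paper invokes implicitly (the paper even says ``left-invariant'' where the computation actually uses right-invariance, which is harmless on a compact group but which you handle more carefully).
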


   \begin{proof} 
    ~\begin{enumerate}
      \item Here, we need to use the fact that the Haar measure is left-invariant, i.e., $\lambda(gA) = \lambda(A)$ for all $g \in \mathcal{G}$ and $A \in \mathcal{A}$. We have
      \begin{align}
        \widebar{G} \rho_{\mathcal{X}}(h) &= \int_{\mathcal{G}} \rho_{\mathcal{X}}(g) d\lambda(g)  \rho_{\mathcal{X}}(h)= \int_{\mathcal{G}} \rho_{\mathcal{X}}(gh) d\lambda(g) = \int_{\mathcal{G}} \rho_{\mathcal{X}}(gh) d\lambda(gh) = \widebar{G}.
      \end{align}
  
      \item To show that $\widebar{G}$ is idempotent, we have
      \begin{align}
        \widebar{G}^2 &= \left(\int_{\mathcal{G}}  \rho_{\mathcal{X}}(g) d\lambda(g)\right)\left(\int_{\mathcal{G}}  \rho_{\mathcal{X}}(h) d\lambda(h)\right) = \int_{\mathcal{G}} \int_{\mathcal{G}} \rho_{\mathcal{X}}(g)  \rho_{\mathcal{X}}(h) d\lambda(g) d\lambda(h) \nonumber \\
        &= \int_{\mathcal{G}} \int_{\mathcal{G}} \rho_{\mathcal{X}}(gh) d\lambda(g) d\lambda(h) = \int_{\mathcal{G}} \int_{\mathcal{G}} \rho_{\mathcal{X}}(gh) d\lambda(gh) d\lambda(h) = \int_{\mathcal{G}} \widebar{G} d\lambda(h) = \widebar{G}.
      \end{align}
        
      \item To see the last property, we have
      \begin{align}
        \widebar{G}^{\dagger} &= \int_{\mathcal{G}}  \rho_{\mathcal{X}}(g)^{\dagger} d\lambda(g) = \int_{\mathcal{G}}  \rho_{\mathcal{X}}(g)^{-1} d\lambda(g) = \int_{\mathcal{G}} \rho_{\mathcal{X}}(g) d\lambda(g) = \widebar{G}.
      \end{align}
    \end{enumerate}
  \end{proof}

    \begin{lemma} \label{lem:diagonalization_finite}
      Given a finite group $\mathcal{G}$ with order $n$ and a representation $\rho$ of $\mathcal{G}$ on vector space $V$ over field $\mathbb{C}$, then for every $g \in \mathcal{G}$, there exists a basis $P_g$ in which the matrix of $\rho(g)$ is diagonal for all $g \in \mathcal{G}$, with $n$-th roots of unity on the diagonal.
    \end{lemma}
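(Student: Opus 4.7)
The plan is to prove the standard fact that every group-element matrix, for a representation of a finite group over $\mathbb{C}$, is diagonalizable with eigenvalues that are roots of unity. The statement interprets as follows: for each $g \in \mathcal{G}$ individually, there is a basis $P_g$ (depending on $g$) in which $\rho(g)$ is diagonal, with $n$-th roots of unity on the diagonal. The argument uses elementary group theory together with the criterion that a matrix over $\mathbb{C}$ is diagonalizable iff its minimal polynomial has only simple roots.

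First I would fix any $g \in \mathcal{G}$ and invoke Lagrange's theorem: the order $k$ of $g$ in $\mathcal{G}$ divides $n = |\mathcal{G}|$. Hence $g^n = (g^k)^{n/k} = e$. Since $\rho$ is a group homomorphism, it follows that
\begin{equation}
\rho(g)^n \;=\; \rho(g^n) \;=\; \rho(e) \;=\; \mathbf{I}.
\end{equation}
Thus $\rho(g)$ annihilates the polynomial $p(x) = x^n - 1 \in \mathbb{C}[x]$.

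Next, I would use that over $\mathbb{C}$ the polynomial $p$ factors as $p(x) = \prod_{j=0}^{n-1}(x - \omega^j)$ with $\omega = e^{2\pi i/n}$, and these $n$ roots are pairwise distinct. The minimal polynomial $m_{\rho(g)}(x)$ of $\rho(g)$ must divide $p(x)$, and therefore it also splits over $\mathbb{C}$ into distinct linear factors. By the standard diagonalizability criterion (a matrix is diagonalizable over $\mathbb{C}$ iff its minimal polynomial has no repeated roots), $\rho(g)$ is diagonalizable.

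Finally, I would note that the eigenvalues of $\rho(g)$ are roots of $m_{\rho(g)}$, hence roots of $p$, hence $n$-th roots of unity. Choosing a basis $P_g$ of $V$ consisting of eigenvectors of $\rho(g)$ yields the required diagonal form with $n$-th roots of unity on the diagonal. I do not anticipate a genuine obstacle: the only subtle point is being clear that the basis $P_g$ depends on $g$ (no simultaneous diagonalization is asserted since $\mathcal{G}$ need not be abelian), and that the $n$-th-roots-of-unity claim follows from Lagrange rather than from the specific order of $g$.
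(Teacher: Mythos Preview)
Your proof is correct, and it takes a genuinely different route from the paper's. The paper argues via the Jordan canonical form: it writes $\rho(g) = P_g^{-1} J P_g$, then uses $\rho(g)^n = \mathbf{I}$ to force each Jordan block $J_i = \lambda_i \mathbf{I} + N_i$ to satisfy $J_i^n = \mathbf{I}$, and expands $(\lambda_i\mathbf{I}+N_i)^n$ binomially to conclude that the nilpotent part $N_i$ must vanish and $\lambda_i^n = 1$. You instead use the minimal-polynomial criterion: since $\rho(g)^n = \mathbf{I}$, the minimal polynomial of $\rho(g)$ divides $x^n - 1$, which has $n$ distinct roots over $\mathbb{C}$, hence $\rho(g)$ is diagonalizable with eigenvalues among the $n$-th roots of unity. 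Your argument is shorter and invokes a cleaner structural fact; the paper's is more hands-on but reaches the same conclusion.

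One further difference worth noting: the paper's proof tacitly assumes $\mathcal{G}$ is cyclic (it begins ``let $g$ be the generator of $\mathcal{G}$'') and then observes that the \emph{same} basis $P_g$ diagonalizes every power $\rho(g^m)$, i.e., it actually establishes simultaneous diagonalization of all group elements. Your proof, by contrast, handles an arbitrary finite group via Lagrange and produces a basis $P_g$ that may vary with $g$; you correctly flag that no simultaneous diagonalization is being claimed. For the lemma as stated (general finite $\mathcal{G}$), your version is the one that matches the hypotheses; the paper's extra conclusion about a common basis is what is actually used downstream, but only in the cyclic setting where it is valid.
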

    
    \begin{proof}
    Since $\mathcal{G}$ is finite with order $n$, let $g$ be the generator of $\mathcal{G}$, i.e., $g^n = e$ and $\rho(g)^n = \rho(g^n) = \rho(e) = \mathbf{I}$. We can write $\rho(g)$ in the form of Jordan canonical form, i.e., $\rho(g) = P_g^{-1} J P_g$, where $P_g \in GL(V)$,  $J$ is a block diagonal matrix in the following form
    \begin{equation*}
      J=\left[\begin{array}{ccc}
        J_1 & & \\
        & \ddots & \\
        & & J_p
        \end{array}\right],
    \end{equation*}
    and each block $J_i$ is a square matrix of the form
    \begin{equation*}
    J_i=\left[\begin{array}{cccc}
    \lambda_i & 1 & & \\
    & \lambda_i & \ddots & \\
    & & \ddots & 1 \\
    & & & \lambda_i
    \end{array}\right].
    \end{equation*}
    
    We know that $\rho(g)^n = \mathbf{I}$, then $J^n = \mathbf{I}$, which implies that $J_i^n = \mathbf{I}$ for all $i \in [p]$. Let $N_i$ be the Jordan block matrix with $\lambda_i = 0$. Then 
    \begin{equation*}
      J_i^n = (\lambda_i\mathbf{I}+N_i)^n = \sum_{k=0}^{n} \binom{n}{k} \lambda_i^{n-k} N_i^k = \mathbf{I}.
    \end{equation*}
    Notice that $N_i^q$ is the matrix with zeros and ones only, with the ones in index position $(a,b)$ with $a=b+q$. Therefore, the sum can be $\mathbf{I}$ if and only if $\lambda_i^n = 1$ and $N_i = \mathbf{0}$ for all $i \in [p]$. Therefore, $\lambda_i$ is an $n$-th root of unity for all $i \in [p]$, and $J_i$ is diagonal with $n$-th roots of unity on the diagonal.
    Let $m \in [n]$, then $\rho(g^m) = \rho(g)^m = P_g^{-1} J^m P_g$. Clearly, $J^m$ is also a diagonal matrix with $n$-th roots of unity on the diagonal. Therefore, the basis $P_g$ is the same for all $\rho(g^m)$.
    \end{proof}

    \globaloptimada*
    \begin{proof}
      It is easy to see that $\widehat{W}^{da} =  \widebar{U}_r^{da} \widebar{\Sigma}_r^{da} {\widebar{V}_r^{da^{\T}}} Q^{-1}$ is the solution to the optimization problem \ref{da_opt_problem} since it is in the exact form of a low-rank approximation, and we can apply the Eckart-Young-Mirsky theorem \citet{eckart1936} to get the solution directly. We still need to check that $\widehat{W}^{da}$ is an invariant linear map, i.e., $\widehat{W}^{da} G = 0$. We have 
      First, we observe that $\left(\sum_{g \in \mathcal{G}} \rho_{\mathcal{X}}(g)XX^{\T} {\rho_{\mathcal{X}}(g)}^{\T}\right)^{-1}\rho_{\mathcal{X}}(h) = {\rho_{\mathcal{X}}(h)} \left(\sum_{g \in \mathcal{G}} \rho_{\mathcal{X}}(g)XX^{\T} {\rho_{\mathcal{X}}(g)}^{\T}\right)^{-1}$ for all $h \in \mathcal{G}$.
      To see this, we have
      \begin{align}
        &\quad \left(\sum_{g \in \mathcal{G}} \rho_{\mathcal{X}}(g)XX^{\T} {\rho_{\mathcal{X}}(g)}^{\T}\right)^{-1}\rho_{\mathcal{X}}(h) \nonumber \\
        &= \left(\sum_{g \in \mathcal{G}} \rho_{\mathcal{X}}(h^{-1}) \rho_{\mathcal{X}}(g)XX^{\T} {\rho_{\mathcal{X}}(g)}^{\T}\right)^{-1} \nonumber \\
        &= {\rho_{\mathcal{X}}(h^{-1})}^{\T} \left(\sum_{g \in \mathcal{G}} \rho_{\mathcal{X}}(h^{-1}) \rho_{\mathcal{X}}(g)XX^{\T} {\rho_{\mathcal{X}}(g)}^{\T} {\rho_{\mathcal{X}}(h^{-1})}^{\T}\right)^{-1} && \text{unitarity of $\rho_{\mathcal{X}}$} \nonumber \\
        &= {\rho_{\mathcal{X}}(h)} \left(\sum_{g \in \mathcal{G}} \rho_{\mathcal{X}}(h^{-1}g)XX^{\T} {\rho_{\mathcal{X}}(h^{-1}g)}^{\T}\right)^{-1}.
      \end{align}

      Then by \autoref{lemma: commute_pd_matrix}, we have $Q^{-1} \rho_{\mathcal{X}}(h) = \rho_{\mathcal{X}}(h) Q^{-1}$. And, we have $\widebar{G} = \widebar{G} \rho_{\mathcal{X}}(h)$ for all $h \in \mathcal{G}$ by \autoref{lemma:group_average_properties}. Therefore, we have
      \begin{align}
        \widebar{Z}^{da} \rho_{\mathcal{X}}(h) &= |\mathcal{G}|Y X^{\T} \widebar{G}^{\T} Q^{-1} \rho_{\mathcal{X}}(h) \nonumber \\
        &= |\mathcal{G}|Y X^{\T} \widebar{G}^{\T}  \rho_{\mathcal{X}}(h) Q^{-1} \nonumber \\
        &= |\mathcal{G}|Y X^{\T} \widebar{G}^{\T}  Q^{-1} = \widebar{Z}^{da}.
      \end{align}
      
      Thus, we can say that $\widebar{Z}^{da} G = 0$. Based on \autoref{lemma:left_null_svd}, we can get that $\widebar{Z}_r^{da} G = \widebar{U}_r^{da} \widebar{\Sigma}_r^{da} {\widebar{V}_r^{da^{\T}}} G = 0$. Therefore, 
      \begin{align}
        \widehat{W}^{da} \rho_{\mathcal{X}}(h) &= \widebar{U}_r^{da} \widebar{\Sigma}_r^{da} {\widebar{V}_r^{da^{\T}}} Q^{-1} \rho_{\mathcal{X}}(h) \nonumber \\
        &= \widebar{U}_r^{da} \widebar{\Sigma}_r^{da} {\widebar{V}_r^{da^{\T}}}\rho_{\mathcal{X}}(h) Q^{-1} \nonumber \\
        &= \widebar{U}_r^{da} \widebar{\Sigma}_r^{da} {\widebar{V}_r^{da^{\T}}} Q^{-1} = \widehat{W}^{da}.
      \end{align}
    \end{proof}

\subsection{Proof of \autoref{global_optima_da_inv}}
  To prove the theorem, we need the following lemma:
  \begin{lemma} \label{lemma:pseudoinverse_product}
    Let $A = \begin{bmatrix} A_{11} & A_{21}^{\dagger} \\ A_{21} & A_{22} \end{bmatrix} \in \mathrm{GL}(n+m, \mathbb{C})$ be Hermitian and positive definite and $B \in \mathrm{GL}(n, \mathbb{C})$, where $A_{11} \in \mathrm{GL}(n, \mathbb{C})$ and $A_{22} \in \mathrm{GL}(m, \mathbb{C})$ are both Hermitian and positive definite. Define $E = A \times \begin{bmatrix} B & 0_{n,m} \\ 0_{m,n} & 0_{m,m} \end{bmatrix} = \begin{bmatrix} A_{11} B & 0_{n,m} \\ A_{21} B & 0_{m,m} \end{bmatrix}$. Then $E^{+} = \begin{bmatrix} E_{11} & E_{12} \\ 0_{m,n} & 0_{m,m} \end{bmatrix}$, where $E_{11} = B^{-1} \left(A_{11}^2 + A_{21}^{\dagger}A_{21}\right)^{-1} A_{11}$, and $E_{12} = B^{-1} \left(A_{11}^2 + A_{21}^{\dagger}A_{21}\right)^{-1} A_{21}^{\dagger}$.
  \end{lemma}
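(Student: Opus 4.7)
My plan is to exploit the block structure of $E$: its last $m$ columns vanish, so a full-rank factorization reduces the problem to pseudoinverses of matrices with full column or row rank, each of which admits a closed-form expression.

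First, I would write the factorization
\[
  E = C D, \qquad C = \begin{bmatrix} A_{11} \\ A_{21} \end{bmatrix} B, \qquad D = \begin{bmatrix} I_{n} & 0_{n,m} \end{bmatrix}.
\]
Here $B \in \mathrm{GL}(n, \mathbb{C})$ by hypothesis and $A_{11}$ is invertible (being a positive-definite principal block of $A$), so $C$ has full column rank $n$; and $D$ plainly has full row rank $n$. The classical identity $(CD)^{+} = D^{+} C^{+}$ therefore applies, with $D^{+} = D^{\dagger}(D D^{\dagger})^{-1}$ and $C^{+} = (C^{\dagger} C)^{-1} C^{\dagger}$.

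Next, I would evaluate each factor. Since $D D^{\dagger} = I_{n}$, one immediately obtains $D^{+} = \begin{bmatrix} I_{n} \\ 0_{m,n} \end{bmatrix}$. Using $A_{11}^{\dagger} = A_{11}$, the Gram matrix is
\[
  C^{\dagger} C = B^{\dagger}\bigl(A_{11}^{2} + A_{21}^{\dagger} A_{21}\bigr) B,
\]
and the middle factor $A_{11}^{2} + A_{21}^{\dagger} A_{21}$ is invertible because it is the sum of a positive-definite and a positive-semidefinite Hermitian matrix. Combining this with $C^{\dagger} = B^{\dagger}\begin{bmatrix} A_{11} & A_{21}^{\dagger} \end{bmatrix}$ yields
\[
  C^{+} = B^{-1}\bigl(A_{11}^{2} + A_{21}^{\dagger} A_{21}\bigr)^{-1} \begin{bmatrix} A_{11} & A_{21}^{\dagger} \end{bmatrix}.
\]
The product $D^{+} C^{+}$ then embeds $C^{+}$ as the top $n$ rows of an $(n+m)\times(n+m)$ block matrix whose bottom $m$ rows vanish; reading off entries recovers exactly the claimed formulas for $E_{11}$ and $E_{12}$.

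The only step beyond routine computation is the pseudoinverse-of-a-product identity $(CD)^{+} = D^{+} C^{+}$ under the full-rank conditions just established. I would treat this as a classical fact, but if a self-contained argument is preferred it suffices to verify the four Moore--Penrose equations for the candidate block matrix directly, which is mechanical once $D^{+}$ and $C^{+}$ are in hand. I do not anticipate any genuine obstacle; the main care required is bookkeeping of the two distinct one-sided formulas (short-fat versus tall-thin) and of the positions at which $B^{-1}$ and $B^{\dagger}$ appear and cancel.
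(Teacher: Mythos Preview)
Your proof is correct and takes a genuinely different route from the paper. The paper proceeds by direct verification: it posits the candidate $E^{+}$ and mechanically checks the four Moore--Penrose axioms $EE^{+}E=E$, $E^{+}EE^{+}=E^{+}$, and Hermiticity of $EE^{+}$ and $E^{+}E$, using the identity $E_{11}A_{11}+E_{12}A_{21}=B^{-1}$ as the key algebraic simplification. Your approach instead \emph{derives} the formula via a full-rank factorization $E=CD$ with $C$ tall-thin and $D$ short-fat, then invokes the classical identity $(CD)^{+}=D^{+}C^{+}$ and the closed forms $C^{+}=(C^{\dagger}C)^{-1}C^{\dagger}$, $D^{+}=D^{\dagger}(DD^{\dagger})^{-1}$. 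This is more conceptual: it explains where the expression $(A_{11}^{2}+A_{21}^{\dagger}A_{21})^{-1}$ comes from (it is the Gram-matrix inverse of the nonzero columns of $E$, stripped of $B$) rather than presenting it as an ansatz to be checked. The paper's route is more self-contained since it avoids citing the product-pseudoinverse identity, but yours is shorter and would generalize more readily to variants of the lemma. Amusingly, the fallback you mention at the end---direct verification of the four axioms---is exactly the paper's proof.
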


  \begin{proof}
    We need to verify that our solution satisfies the properties of the Moore-Penrose pseudoinverse. Notice the following property: 
    \begin{equation}
      E_{11}A_{11}+E_{12}A_{21} = B^{-1}
    \end{equation}
    First, we need to show that $E E^{+} E = E$ and $E^{+} E E^{+} = E^{+}$. We have
    \begin{align}
      EE^{+}E &= 
      \begin{bmatrix} 
        A_{11} B & 0_{n,m} \\ 
        A_{21} B & 0_{m,m} 
      \end{bmatrix} 
      \begin{bmatrix} 
        E_{11} & E_{12} \\ 
        0_{m,n} & 0_{m,m} 
      \end{bmatrix} 
      \begin{bmatrix} 
        A_{11} B & 0_{n,m} \\ 
        A_{21} B & 0_{m,m} 
      \end{bmatrix}  \nonumber \\
      &= 
      \begin{bmatrix} 
        A_{11} B E_{11} & A_{11} B E_{12} \\
        A_{21} B E_{11} & A_{21}BE_{12} 
      \end{bmatrix} 
      \begin{bmatrix} 
        A_{11} B & 0_{n,m} \\
        A_{21} B & 0_{m,m} 
      \end{bmatrix} \nonumber \\
      &= 
      \begin{bmatrix} 
        A_{11} B (E_{11} A_{11} + E_{12} A_{21}) B & 0_{n,m} \\ 
        A_{21} B (E_{11} A_{11} + E_{12} A_{21}) B & 0_{m,m} 
      \end{bmatrix} \nonumber \\
      &= 
      \begin{bmatrix} A_{11} B & 0_{n,m} \\ 
        A_{21} B & 0_{m,m} 
      \end{bmatrix} = E.
    \end{align}
    Similarly, we want to show that $E^{+} E E^{+} = E^{+}$. We have
    \begin{align}
      E^{+} E E^{+} &= 
      \begin{bmatrix} 
        E_{11} & E_{12} \\ 
        0_{m,n} & 0_{m,m} 
      \end{bmatrix} 
      \begin{bmatrix} 
        A_{11} B & 0_{n,m} \\ 
        A_{21} B & 0_{m,m} 
      \end{bmatrix} 
      \begin{bmatrix} 
        E_{11} & E_{12} \\ 
        0_{m,n} & 0_{m,m} 
      \end{bmatrix} \nonumber \\
      &= 
      \begin{bmatrix} 
        (E_{11}A_{11}+ E_{12}A_{21})B & 0_{n,m} \\
        0_{m,n} & 0_{m,m} 
      \end{bmatrix} 
      \begin{bmatrix} 
        E_{11} & E_{12} \\ 
        0_{m,n} & 0_{m,m} 
      \end{bmatrix} \nonumber \\
      &= 
      \begin{bmatrix} 
        \mathbf{I}_{n} & 0_{n,m} \\ 
        0_{m,n} & 0_{m,m} 
      \end{bmatrix} 
      \begin{bmatrix} 
        E_{11} & E_{12} \\ 
        0_{m,n} & 0_{m,m} 
      \end{bmatrix} \nonumber \\
      &= 
      \begin{bmatrix}
         E_{11} & E_{12} \\ 
         0_{m,n} & 0_{m,m} 
      \end{bmatrix} = E^{+}.
    \end{align}
    We also need to verify that $EE^{+}$ and $E^{+}E$ are Hermitian. We have
    \begin{equation}
      EE^{+} = \begin{bmatrix} A_{11} \left(A_{11}^2 + A_{21}^{\dagger}A_{21}\right)^{-1} A_{11} & A_{11} \left(A_{11}^2 + A_{21}^{\dagger}A_{21}\right)^{-1} A_{21}^{\dagger}  \\ A_{21} \left(A_{11}^2 + A_{21}^{\dagger}A_{21}\right)^{-1} A_{11}  & A_{21}\left(A_{11}^2 + A_{21}^{\dagger}A_{21}\right)^{-1} A_{21}^{\dagger}  \end{bmatrix},
    \end{equation}
    and 
    \begin{equation}
      E^{+} E = \begin{bmatrix} \mathbf{I}_{n} & 0_{n,m} \\ 0_{m,n} & 0_{m,m} \end{bmatrix}.
    \end{equation}
    It is clear that both $EE^{+}$ and $E^{+}E$ are Hermitian. Therefore, we have shown that $E^{+}$ is indeed the Moore-Penrose pseudoinverse of $E$.
    \end{proof}

    \begin{lemma} \label{lemma: SVD_half}
      Let $Z \in \mathbb{C}^{m \times n}$ be a full-rank matrix. $Q \in \mathbb{C}^{n \times n}$ is Hermitian and positive semi-definite, and $P \in \mathbb{C}^{n \times n}$ satisfying $Q^2 = PP^{\dagger}$. Given $r < \rank(Q)$, let $Z_1$ and $Z_2$ be the best rank-$r$ approximation of $ZQ$ and $ZP$ with respect to the Frobenius norm, respectively, then $Z_1 Q = Z_2 P^{\dagger}$.
    \end{lemma}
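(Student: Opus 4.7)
The plan is to exploit the identity $Q^{2}=PP^{\dagger}$ to show that $ZQ$ and $ZP$ share the same left-singular structure, and then to express both best rank-$r$ approximations as a single left projection applied to the original matrix.

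First I would compute
\[
(ZQ)(ZQ)^{\dagger} \;=\; ZQQ^{\dagger}Z^{\dagger} \;=\; ZQ^{2}Z^{\dagger} \;=\; ZPP^{\dagger}Z^{\dagger} \;=\; (ZP)(ZP)^{\dagger},
\]
using that $Q$ is Hermitian. This Gram identity shows that $ZQ$ and $ZP$ have identical nonzero singular values and the same top-$r$ left-singular subspace, read off as the top-$r$ eigenspace of this common Hermitian positive semidefinite matrix.

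Next I would recall the projection form of low-rank truncation: if $A=U\Sigma V^{\dagger}$ is an SVD with $\sigma_r>\sigma_{r+1}$, then the best Frobenius rank-$r$ approximation equals $A_r=\Pi_r A$, where $\Pi_r = U_r U_r^{\dagger}$ is the orthogonal projector onto the span of the top $r$ left singular vectors; this is just the observation that $U_r U_r^{\dagger} U\Sigma V^{\dagger}=\sum_{i=1}^{r}\sigma_i u_i v_i^{\dagger}$. Since $r<\rank(Q)$ and $Z$ has full rank, both $ZQ$ and $ZP$ have rank at least $r$, so this form applies to both; and by the Gram identity the same projector $\Pi_r$ serves for both, giving $Z_1=\Pi_r Z Q$ and $Z_2=\Pi_r Z P$. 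The conclusion then reduces to the one-line computation
\[
Z_1 Q \;=\; \Pi_r Z Q\cdot Q \;=\; \Pi_r Z Q^{2} \;=\; \Pi_r Z P P^{\dagger} \;=\; Z_2 P^{\dagger}.
\]

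The main subtlety I expect is ensuring that a single projector $\Pi_r$ can be unambiguously assigned to both SVDs. Individual left singular vectors of $ZQ$ and $ZP$ are only defined up to a unitary on each singular-value eigenspace, so one cannot hope to match $U_r$'s column by column; what matters is that the orthogonal projector onto the top-$r$ left-singular subspace is canonical as long as there is a spectral gap $\sigma_r>\sigma_{r+1}$ in the common Gram matrix. That gap is already implicit in the uniqueness of the best rank-$r$ approximation assumed by the statement, so no additional hypothesis is needed.
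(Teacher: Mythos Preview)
Your argument is correct. The paper's proof and yours both rest on the same underlying fact---that $ZQ$ and $ZP$ share the same left singular structure---but reach it and use it differently. The paper takes the SVD $P=USV^{\dagger}$, observes that $Q=USU^{\dagger}$ is the unique PSD square root of $PP^{\dagger}$, and hence that $ZQ=ZP\,(VU^{\dagger})$ differ by an explicit right unitary; rotational invariance of the SVD then gives $Z_1=Z_2\,(VU^{\dagger})$, after which $Z_1Q=Z_2VSU^{\dagger}=Z_2P^{\dagger}$ is a direct check. You instead bypass the explicit unitary entirely, arguing via the Gram identity $(ZQ)(ZQ)^{\dagger}=(ZP)(ZP)^{\dagger}$ and the left-projection form $A_r=\Pi_rA$ of Eckart--Young, so that the whole claim reduces to $\Pi_rZQ^2=\Pi_rZPP^{\dagger}$. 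Your route is a bit more invariant (it never names the unitary $VU^{\dagger}$ or the intermediate SVD factors) and makes the role of the spectral-gap/uniqueness assumption explicit; the paper's route is more constructive, giving an explicit formula relating the two truncated SVDs. Both are short and valid.
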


    \begin{proof}
      Let $P = USV^{\dagger}$ be the SVD of $P$, then we have $Q = U S U^{\dagger}$. Since $ZQ^2 = ZPP^{\dagger}$, we can see that $ZQUSU^{\dagger} = ZPVSU^{\dagger}$. Therefore, we have $ZQ = ZP(VU^{\dagger})$. $VU^{\dagger}$ is a unitary matrix, and according to the rotational invariance of SVD, we can say that $Z_1 = Z_2(VU^{\dagger})$, i.e., if $ZP = \widetilde{U} \widetilde{S} \widetilde{V}^{\dagger}$, then $ZQ = \widetilde{U} \widetilde{S} (UV^{\dagger}\widetilde{V})^{\dagger}$, $Z_2 = \widetilde{U}_r \widetilde{S}_r \widetilde{V}_r^{\dagger}$, and $Z_1 = \widetilde{U}_r \widetilde{S}_r (UV^{\dagger}\widetilde{V})_r^{\dagger} = \widetilde{U}_r \widetilde{S}_r \widetilde{V}_r^{\dagger} (UV^{\dagger})^{\dagger}$. It is easy to check that $Z_1 Q = Z_2 P^{\dagger}$.
    \end{proof}

  \globaloptimadainv*
  \begin{proof}
    First, we want to prove that 
    \begin{equation} \label{eq: key_eq_da_inv}
      |\G| \widebar{G} \left(\sum_{g \in \G} \rho_{\mathcal{X}}(g)XX^{\T} {\rho_{\mathcal{X}}(g)}^{\T}\right)^{-1} = P^{-1} \left(\mathbf{I}_{d_0} - \left(P^{-1} G\right) \left(P^{-1} G \right)^{+} \right) P^{-1}
    \end{equation}
    Similar to the proof of \autoref{global_optima_da}, we know that $\left(\sum_{g \in \G} \rho_{\mathcal{X}}(g)XX^{\T} {\rho_{\mathcal{X}}(g)}^{\T}\right)^{-1}$ commutes with $\rho_{\X}(g)$ for all $g \in \G$. Then, $\left(\sum_{g \in \G} \rho_{\mathcal{X}}(g)XX^{\T} {\rho_{\mathcal{X}}(g)}^{\T}\right)^{-1}$ commutes with $\widebar{G}$ as well. According to \autoref{lemma: commute_pd_matrix}, $Q^{-1} = \left(\sum_{g \in \G} \rho_{\mathcal{X}}(g)XX^{\T} {\rho_{\mathcal{X}}(g)}^{\T}\right)^{-\frac{1}{2}}$ commutes with $\widebar{G}$. We also know that $|\G| \widebar{G} \left(\sum_{g \in \G} \rho_{\mathcal{X}}(g)XX^{\T} {\rho_{\mathcal{X}}(g)}^{\T}\right)^{-1}$ is a $\G$-fixed point. Therefore, we have
    \begin{align*}
      & \quad |\G| \widebar{G} \left(\sum_{g \in \G} \rho_{\mathcal{X}}(g)XX^{\T} {\rho_{\mathcal{X}}(g)}^{\T}\right)^{-1} 
      = |\G| \widebar{G} \left(\sum_{g \in \G} \rho_{\mathcal{X}}(g)XX^{\T} {\rho_{\mathcal{X}}(g)}^{\T}\right)^{-1} \widebar{G} \\
      &= |\G| \widebar{G} Q^{-1} Q^{-1} \widebar{G} = |\G| \widebar{G} Q^{-1} \widebar{G} Q^{-1} =({|\G|}^{\frac{1}{2}}  \widebar{G} Q^{-1})^2.
    \end{align*}
    On the other hand, $\mathbf{I}_{d_0} - (P^{-1} G) (P^{-1} G)^{+}$ is an idempotent projection matrix. Therefore, we have
    \begin{align*}
      &\quad P^{-1} \left(\mathbf{I}_{d_0} - (P^{-1} G) (P^{-1} G)^{+} \right) P^{-1} \\
      &= P^{-1}\left(\mathbf{I}_{d_0} - (P^{-1} G) (P^{-1} G)^{+} \right) P^{-1} = P^{-1} \left(\mathbf{I}_{d_0} - (P^{-1} G) (P^{-1} G)^{+} \right)^2 P^{-1} \\
      &= P^{-1} \left(\mathbf{I}_{d_0} - (P^{-1} G) (P^{-1} G)^{+} \right) \left(P^{-1} \left(\mathbf{I}_{d_0} - (P^{-1} G) (P^{-1} G)^{+} \right)\right)^{\dagger}
    \end{align*}

    If \autoref{eq: key_eq_da_inv} holds, then we can apply \autoref{lemma: SVD_half} directly to get the result. Therefore, we only need to prove \autoref{eq: key_eq_da_inv}. 

    Let $\rho_{\X}(g) = V \Lambda_g V^{-1}$ be the eigen-decomposition of $\rho_{\X}(g)$, where $g$ is the generator of $\G$ and $\Lambda_g$ is a diagonal matrix with the eigenvalues of $\rho_{\X}(g)$ on the diagonal. This can be done according to \autoref{lem:diagonalization_finite}. Furthermore, under the assumption that $\rho_{\X}$ is unitary, we have $V^{-1} = V^{\dagger}$.
    It is worth noting that $\Lambda_g$ is a diagonal matrix with $|\G|$-th roots of unity on the diagonal, and among the $|\G|$-th roots of unity, $d$ of them are $1$. Without loss of generality, we assume that the first $d$ eigenvalues are $1$. Define $\widetilde{X} = V^{-1} X$, and let $\widetilde{X}_{1:d}$ be the first $d$ rows of $\widetilde{X}$, and $\widetilde{X}_{(d+1):d_0}$ be the last $d_0-d$ rows of $\widetilde{X}$. Now, let's simplify the LHS of \autoref{eq: key_eq_da_inv}:
    \begin{align} \label{eq: avg_change_basis}
      & \quad \widebar{G} = \frac{1}{|\G|} \sum_{h \in \mathcal{G}} \rho_{\X}(h) = \frac{1}{|\G|} V \left(\sum_{h \in \mathcal{G}} \Lambda_h\right) V^{-1} \nonumber \\
      &= V \left(\frac{1}{|\G|} \sum_{i \in [\mathcal{G}]} \Lambda_g^{i}\right) V^{-1} = V \begin{bmatrix} \mathbf{I}_d & 0_{d, d_0-d} \\ 0_{d_0-d, d} & 0_{d_0-d, d_0-d} \end{bmatrix} V^{-1},
    \end{align}
    The last equality in \autoref{eq: avg_change_basis} holds because the partial geometric series to order $|\G|$ is $0$ for any root of unity other than $1$, i.e., $\sum_{j=1}^{|\G|} (e^{\frac{2\pi k i}{|\G|}})^{j} = 0$ for any $k \neq 0$. On the other hand,
    \begin{align}
      &\quad \sum_{g \in \G} \frac{1}{|\G|} \rho_{\mathcal{X}}(g)XX^{\T} {\rho_{\mathcal{X}}(g)}^{\T} \\
      &= \sum_{g \in \G} \frac{1}{|\G|} \rho_{\mathcal{X}}(g)XX^{\dagger} {\rho_{\mathcal{X}}(g)}^{\dagger} \nonumber \\
      &= V \left(\sum_{g \in \G} \frac{1}{|\G|} \Lambda_g \widetilde{X} \widetilde{X}^{\dagger} \Lambda_g^{\dagger}\right) V^{-1} \nonumber \\
      &= V \left(\left(\sum_{g \in \G}\frac{1}{|\G|} \text{diag}(\Lambda_g)\text{diag}(\Lambda_g)^{\dagger}\right)\odot \widetilde{X} \widetilde{X}^{\dagger}\right) V^{-1} \nonumber \\
      &= V \left(\begin{bmatrix} 1_d & 0_{d, d_0-d} \\ 0_{d_0-d, d} & \cdots \end{bmatrix} \odot \widetilde{X} \widetilde{X}^{\dagger}\right) V^{-1} \nonumber \\
      &= V \begin{bmatrix} \widetilde{X}_{1:d} \widetilde{X}_{1:d}^{\dagger} & 0_{d, d_0-d} \\ 0_{d_0-d, d} & \cdots \end{bmatrix} V^{-1}.
    \end{align}

    Therefore, the LHS of \autoref{eq: key_eq_da_inv} is
    \begin{align}
      &\quad \widebar{G} \left(\sum_{g \in \G} \frac{1}{|\G|} \rho_{\mathcal{X}}(g)XX^{\T} {\rho_{\mathcal{X}}(g)}^{\T}\right)^{-1} \\
      &= V \begin{bmatrix} \mathbf{I}_d & 0_{d, d_0-d} \\ 0_{d_0-d, d} & 0_{d_0-d, d_0-d} \end{bmatrix} \begin{bmatrix} \widetilde{X}_{1:d} \widetilde{X}_{1:d}^{\dagger} & 0_{d, d_0-d} \\ 0_{d_0-d, d} & \cdots \end{bmatrix}^{-1} V^{-1} \nonumber \\
      &= V \begin{bmatrix}\left(\widetilde{X}_{1:d} \widetilde{X}_{1:d}^{\dagger}\right)^{-1} & 0_{d, d_0-d} \\ 0_{d_0-d, d} & 0_{d_0-d, d_0-d} \end{bmatrix} V^{-1}.
    \end{align}

    The RHS of \autoref{eq: key_eq_da_inv} is
    \begin{align}
      &\quad P^{-1} \left(\mathbf{I}_{d_0} - (P^{-1} G) (P^{-1} G)^{+} \right) P^{-1} \nonumber \\
      &= V \widetilde{P}^{-1} V^{-1} \left(\mathbf{I}_{d_0} -  \left(V \widetilde{P}^{-1} (\Lambda_g-\mathbf{I}_{d_0}) V^{-1}\right) \left(V \widetilde{P}^{-1} (\Lambda_g-\mathbf{I}_{d_0}) V^{-1}\right)^{+} \right) V \widetilde{P}^{-1} V^{-1} \nonumber \\
      &= V \widetilde{P}^{-1} \left(\mathbf{I}_{d_0} -  \left(\widetilde{P}^{-1} (\Lambda_g-\mathbf{I}_{d_0})\right) \left(\widetilde{P}^{-1} (\Lambda_g-\mathbf{I}_{d_0}) \right)^{+} \right) \widetilde{P}^{-1} V^{-1},
    \end{align}
    where $\widetilde{P}^2 = \widetilde{X} \widetilde{X}^{\dagger}$.

    To prove that the LHS equals the RHS, we need to show that
    \begin{align}
      \begin{bmatrix}\left(\widetilde{X}_{1:d} \widetilde{X}_{1:d}^{\dagger}\right)^{-1} & 0_{d, d_0-d} \\ 0_{d_0-d, d} & 0_{d_0-d, d_0-d} \end{bmatrix} = \widetilde{P}^{-1} \left(\mathbf{I}_{d_0} -  \left(\widetilde{P}^{-1} (\Lambda_g-\mathbf{I}_{d_0})\right) \left(\widetilde{P}^{-1} (\Lambda_g-\mathbf{I}_{d_0}) \right)^{+} \right) \widetilde{P}^{-1}.
    \end{align}

    We can see that 
    \begin{align}
      & \quad \widetilde{P}^2 \left(\widetilde{P}^{-2} - \begin{bmatrix}\left(\widetilde{X}_{1:d} \widetilde{X}_{1:d}^{\dagger}\right)^{-1} & 0_{d, d_0-d} \\ 0_{d_0-d, d} & 0_{d_0-d, d_0-d} \end{bmatrix} \right) \\
      &= \mathbf{I}_{d_0}-\widetilde{P}^2 \begin{bmatrix}\left(\widetilde{X}_{1:d} \widetilde{X}_{1:d}^{\dagger}\right)^{-1} & 0_{d, d_0-d} \\ 0_{d_0-d, d} & 0_{d_0-d, d_0-d} \end{bmatrix} \nonumber \\ 
      &= \mathbf{I}_{d_0} - \begin{bmatrix} \widetilde{X}_{1:d} \widetilde{X}_{1:d}^{\dagger} & \widetilde{X}_{1:d} \widetilde{X}_{(d+1):d_0}^{\dagger} \\ \widetilde{X}_{(d+1):d_0} \widetilde{X}_{1:d}^{\dagger} & \widetilde{X}_{(d+1):d_0} \widetilde{X}_{(d+1):d_0}^{\dagger} \end{bmatrix} \begin{bmatrix}\left(\widetilde{X}_{1:d} \widetilde{X}_{1:d}^{\dagger}\right)^{-1} & 0_{d, d_0-d} \\ 0_{d_0-d, d} & 0_{d_0-d, d_0-d} \end{bmatrix} \nonumber \\
      &= \mathbf{I}_{d_0} - \begin{bmatrix} \mathbf{I}_d & 0_{d, d_0-d} \\ \widetilde{X}_{(d+1):d_0} \widetilde{X}_{1:d}^{\dagger}\left(\widetilde{X}_{1:d} \widetilde{X}_{1:d}^{\dagger}\right)^{-1} & 0_{d_0-d, d_0-d} \end{bmatrix} \nonumber \\
      &= \begin{bmatrix} 0_{d, d} & 0_{d, d_0-d} \\ -\widetilde{X}_{(d+1):d_0} \widetilde{X}_{1:d}^{\dagger}\left(\widetilde{X}_{1:d} \widetilde{X}_{1:d}^{\dagger}\right)^{-1} & \mathbf{I}_{d_0-d} \end{bmatrix}.
    \end{align}

    On the other hand, we rewrite $\widetilde{P}^{-1}$ block-wisely, i.e., $\widetilde{P}^{-1} = \begin{bmatrix} \widetilde{P}_{11} & \widetilde{P}_{12} \\ \widetilde{P}_{12}^{\dagger} & \widetilde{P}_{22} \end{bmatrix}$. By \autoref{lemma:pseudoinverse_product}, we have
    \begin{align}
      &\quad (\Lambda_g - \mathbf{I}_{d_0}) \left(\widetilde{P}^{-1} (\Lambda_g - \mathbf{I}_{d_0}) \right)^{+} \widetilde{P}^{-1} \\
      &= \begin{bmatrix} 
        0_{d, d} & 0_{d, d_0-d} \\ 
        (\widetilde{P}_{22}^2 + \widetilde{P}_{12}^{\dagger} \widetilde{P}_{12})^{-1} \widetilde{P}_{12}^{\dagger} & (\widetilde{P}_{22}^2 + \widetilde{P}_{12}^{\dagger} \widetilde{P}_{12})^{-1} \widetilde{P}_{22} 
      \end{bmatrix} 
      \begin{bmatrix} 
        \widetilde{P}_{11} & \widetilde{P}_{12} \\ 
        \widetilde{P}_{12}^{\dagger} & \widetilde{P}_{22} 
      \end{bmatrix} \nonumber \\ 
      &=\begin{bmatrix} 
        0_{d, d} & 0_{d, d_0-d} \\ 
        (\widetilde{P}_{22}^2 + \widetilde{P}_{12}^{\dagger} \widetilde{P}_{12})^{-1}(\widetilde{P}_{12}^{\dagger}\widetilde{P}_{11}+\widetilde{P}_{22} \widetilde{P}_{12}^{\dagger}) & \mathbf{I}_{d_0-d}
        \end{bmatrix}  
    \end{align}

    By definition, we know that $\widetilde{P}^{-2} \widetilde{X} \widetilde{X}^{\dagger}  = \mathbf{I}_{d_0}$. Therefore,
    \begin{align}
      \begin{bmatrix} 
        \widetilde{P}_{11} & \widetilde{P}_{12} \\
        \widetilde{P}_{12}^{\dagger} & \widetilde{P}_{22} 
      \end{bmatrix}^2 
      \begin{bmatrix} 
        \widetilde{X}_{1:d} \widetilde{X}_{1:d}^{\dagger} & \widetilde{X}_{1:d} \widetilde{X}_{(d+1):d_0}^{\dagger} \\ 
        \widetilde{X}_{(d+1):d_0} \widetilde{X}_{1:d}^{\dagger} & \widetilde{X}_{(d+1):d_0} \widetilde{X}_{(d+1):d_0}^{\dagger} 
      \end{bmatrix} &= \mathbf{I}_{d_0}, \\
      \begin{bmatrix} 
        \widetilde{P}_{11}^2 + \widetilde{P}_{12} \widetilde{P}_{12}^{\dagger} & \widetilde{P}_{11}\widetilde{P}_{12} + \widetilde{P}_{12} \widetilde{P}_{22} \\ 
        \widetilde{P}_{12}^{\dagger} \widetilde{P}_{11} + \widetilde{P}_{22} \widetilde{P}_{12}^{\dagger} & \widetilde{P}_{22}^2 + \widetilde{P}_{12}^{\dagger} \widetilde{P}_{12} 
      \end{bmatrix} 
      \begin{bmatrix} 
        \widetilde{X}_{1:d} \widetilde{X}_{1:d}^{\dagger} & \widetilde{X}_{1:d} \widetilde{X}_{(d+1):d_0}^{\dagger} \\ 
        \widetilde{X}_{(d+1):d_0} \widetilde{X}_{1:d}^{\dagger} & \widetilde{X}_{(d+1):d_0} \widetilde{X}_{(d+1):d_0}^{\dagger} 
      \end{bmatrix} &= \mathbf{I}_{d_0}.
    \end{align}

    By equating the LHS and RHS of the above equation, we can get that
    \begin{align}
      (\widetilde{P}_{12}^{\dagger} \widetilde{P}_{11} + \widetilde{P}_{22} \widetilde{P}_{12}^{\dagger}) \widetilde{X}_{1:d} \widetilde{X}_{1:d}^{\dagger} + (\widetilde{P}_{22}^2 + \widetilde{P}_{12}^{\dagger} \widetilde{P}_{12})  \widetilde{X}_{(d+1):d_0} \widetilde{X}_{1:d}^{\dagger} = 0_{d_0-d, d}, \nonumber \\
      -\widetilde{X}_{(d+1):d_0} \widetilde{X}_{1:d}^{\dagger}\left(\widetilde{X}_{1:d} \widetilde{X}_{1:d}^{\dagger}\right)^{-1} = (\widetilde{P}_{22}^2 + \widetilde{P}_{12}^{\dagger} \widetilde{P}_{12})^{-1}(\widetilde{P}_{12}^{\dagger}\widetilde{P}_{11}+\widetilde{P}_{22} \widetilde{P}_{12}^{\dagger})
    \end{align}

    We have shown that the LHS equals the RHS in \autoref{eq: key_eq_da_inv}. The theorem is proved.
  \end{proof}

\begin{figure}[htbp]
  \centering
  \includegraphics[clip=true, trim=1cm .3cm .3cm .3cm, width=0.6\linewidth]{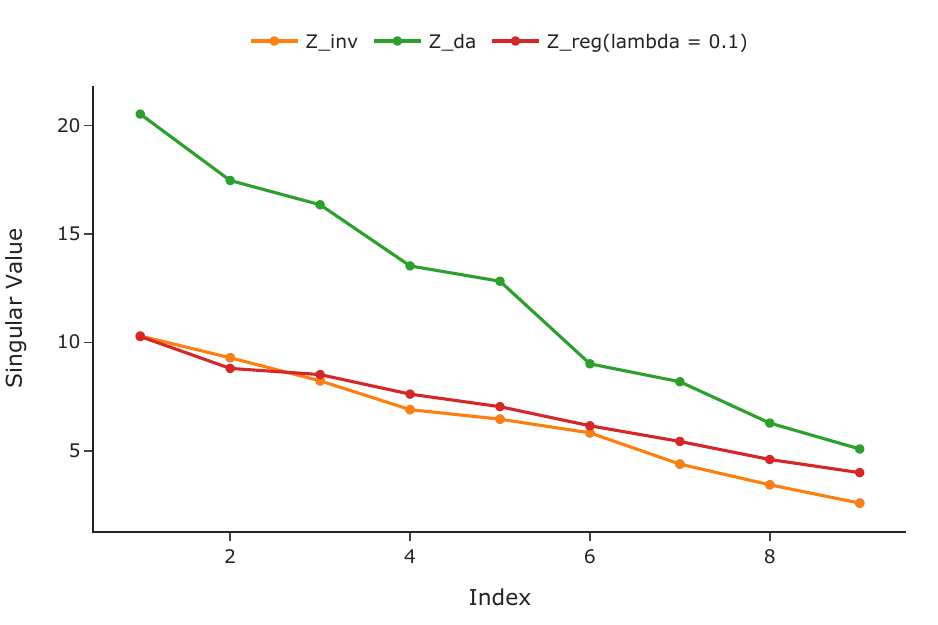}
  \caption{The spectrum of target matrices in the MNIST dataset.} 
  \label{fig:MNIST_spectrum}
\end{figure}
    
\subsection{Proof of \autoref{cor: critical_points}}
\label{app: Proposition 4}
\begin{restatable}{proposition}{thmcritcalpoints}
  \label{prop: critical_points}
  Suppose the target matrix $Z\in \mathbb{R}^{d_L \times d_0}$ has rank $m > d > r$. The critical points of $\ell_{Z}$ restricted to the function space $\mathcal{M}_r$ are all matrices of the form $U \Sigma_{\mathcal{I}} V^{\T}$ where $\mathcal{I} \in[d]_r$. If $0 < \sigma_{r+1} < \sigma_r$, then the local minimum is the critical point with $\mathcal{I}=[r]$. It is the global minimum. 
\end{restatable}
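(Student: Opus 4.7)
The plan is to characterize the critical set of $\ell_Z$ on the determinantal variety $\mathcal{M}_r$ by combining a local analysis on the smooth stratum with a limit argument on the singular stratum, and then to pinpoint the global minimizer via a direct loss evaluation. This is essentially the content of Theorem 28 of \citet{Trager2020Pure}, whose outline I would follow.

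First, I would describe the tangent and normal geometry of the smooth stratum $\mathcal{M}_r^{\mathrm{sm}} = \{W : \rank(W) = r\}$. At a smooth point with thin SVD $W = U_W \Sigma_W V_W^T$, differentiating the parameterization $(U,\Sigma,V) \mapsto U\Sigma V^T$ gives
\begin{equation*}
T_W \mathcal{M}_r = U_W\,\mathbb{R}^{r \times d_0} + \mathbb{R}^{d_L \times r}\,V_W^T,
\end{equation*}
and its orthogonal complement under the Frobenius inner product is
\begin{equation*}
N_W \mathcal{M}_r = \{ N : U_W^T N = 0,\ N V_W = 0 \}.
\end{equation*}

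Next, I would impose the criticality condition $Z - W \in N_W \mathcal{M}_r$, which yields the two matrix equations $U_W^T Z = \Sigma_W V_W^T$ and $Z V_W = U_W \Sigma_W$. Substituting the full SVD $Z = U \Sigma V^T$ and using that $\Sigma_W$ is nonsingular, I would argue that the columns of $U_W$ and $V_W$ must span the left and right singular subspaces of $Z$ indexed by a common set $\mathcal{I}$ with $|\mathcal{I}| = r$, up to a shared orthogonal change of basis; eliminating this gauge yields the canonical form $W = U \Sigma_{\mathcal{I}} V^T$. Evaluating the loss then gives $\ell_Z(W) = \sum_{i \notin \mathcal{I}} \sigma_i^2$, so under the gap assumption $\sigma_r > \sigma_{r+1} > 0$ the unique minimizer is $\mathcal{I} = [r]$, recovering Eckart-Young; a two-dimensional perturbation in the plane spanned by swapping an index inside $\mathcal{I}$ with one outside shows that every other critical point is a strict saddle.

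The main technical obstacle is handling the singular stratum of $\mathcal{M}_r$, i.e., points of rank strictly less than $r$, where the tangent-space description above breaks down. Here the standard remedy is to work with the tangent cone (or equivalently, with the normal cone of the variety in the sense of non-smooth analysis) and to show that any such critical point arises as a limit of rank-$r$ partial SVDs, so that it is captured by the same formula $U \Sigma_{\mathcal{I}} V^T$ allowing $\sigma_i = 0$ for some $i \in \mathcal{I}$. Since $\rank(Z) = m > r$, the loss values at such boundary critical points are dominated by those of smooth critical points with index sets supported on the nonzero singular values, so no new minima appear. The cleanest way to execute this step is to directly invoke \citet[Theorem 28]{Trager2020Pure}, which rigorously settles the singular-point case.
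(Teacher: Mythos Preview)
Your proposal is correct and follows essentially the same route as the paper: both characterize critical points via the normal-cone condition $Z - W \in N_W\mathcal{M}_r = \operatorname{Col}(W)^\perp \otimes \operatorname{Row}(W)^\perp$, identify them as partial SVDs $U\Sigma_{\mathcal{I}}V^\T$ of $Z$, evaluate the loss to pick out $\mathcal{I}=[r]$, and defer the saddle and singular-stratum details to \citet[Theorem~28]{Trager2020Pure}. The only cosmetic difference is that the paper, rather than solving your matrix equations $U_W^\T Z = \Sigma_W V_W^\T$ and $ZV_W = U_W\Sigma_W$, writes separate SVDs of $P$ and $Z-P$ and observes that the orthogonality forced by the normal-cone condition makes their concatenation an SVD of $Z$; this bypasses the computation but is the same idea.
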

 The proof is adapted from the proof of \citep[Theorem 28]{Trager2020Pure}.
 \begin{proof}
  A matrix $P \in \mathcal{M}_r$ is a critical point if and only if $Z-P \in N_P \mathcal{M}_r=\operatorname{Col}(P)^{\perp} \otimes$ $\operatorname{Row}(P)^{\perp}$, where $N_P \mathcal{M}_r$ denotes the normal space of $\mathcal{M}_r$ at point $P$. 
  If $P=\sum_{i=1}^r \sigma_i^{\prime}\left(u_i^{\prime} \otimes v_i^{\prime}\right)$ and $Z-P=\sum_{j=1}^e \sigma_j^{\prime \prime}\left(u_j^{\prime \prime} \otimes v_j^{\prime \prime}\right)$ are SVD with $\sigma_i^{\prime} \neq 0$ and $\sigma_j^{\prime \prime} \neq 0$, the column spaces of $P$ and $Z-P$ are spanned by the $u_i^{\prime}$ and $u_j^{\prime \prime}$, respectively. Similarly, the row spaces of $P$ and $Z-P$ are spanned by the $v_i^{\prime}$ and $v_j^{\prime \prime}$, respectively. So $P$ is a critical point if and only if the vectors $u_i^{\prime}, u_j^{\prime \prime}$ and $v_i^{\prime}, v_j^{\prime \prime}$ are orthonormal, i.e., if
  \begin{equation*}
    Z=P+\left(Z-P\right)=\sum_{i=1}^r \sigma_i^{\prime}\left(u_i^{\prime} \otimes v_i^{\prime}\right)+\sum_{j=1}^e \sigma_j^{\prime \prime}\left(u_j^{\prime \prime} \otimes v_j^{\prime \prime}\right)
  \end{equation*}
  is a SVD of $Z$. This proves that the critical points are of the form $U \Sigma_{\mathcal{I}} V^\T$ where $Z=U \Sigma V^\T$ is a SVD and $\mathcal{I} \in[d]_r$. Since $\ell_{Z}\left(U \Sigma_{\mathcal{I}} V^\T\right)=\left\|U \Sigma_{[d] \backslash \mathcal{I}} V^\T\right\|^2=\left\|\Sigma_{[d] \backslash \mathcal{I}}\right\|^2=\sum_{i \notin \mathcal{I}} \sigma_i^2$, we see that the global minima are exactly the critical points selecting $r$ of the largest singular values of $Z$, i.e., with $\mathcal{I}=[r]$. It is left to show that there are no other local minima. For this, we consider a critical point $P=U \Sigma_{\mathcal{I}} V^\T$ such that at least one selected singular value $\sigma_i$ for $i \in \mathcal{I}$ is strictly smaller than $\sigma_r$. This is possible since $0 < \sigma_{r+1} < \sigma_r$. To see that $P$ cannot be a local minimum, one can follow the proofs in \citep[Theorem 28]{Trager2020Pure}.
 \end{proof}

 \corcritcalpoints*
 \begin{proof}
  This follows directly from \autoref{prop: critical_points} and the fact that $\widebar{Z}^{da}$ and $\widebar{Z}^{inv}$ are both rank $d$ matrices while $\widebar{Z}^{reg}$ has rank $m$.
 \end{proof}

\subsection{Extension to Shallow Nonlinear Networks}
\label{app: shallow_nonlinear}
Consider a two-layer homogeneous nonlinear network with $d_0$ input units, $d_1$ hidden units, and $d_L = 1$ output units for simplicity.
The output of the network is given by
\begin{equation} \label{eq: shallow_nonlinear}
  f(\mathbf{x}; \Theta) = \frac{1}{\sqrt{d_1}} \sum_{d=1}^{d_1} a_d \sigma\left(\mathbf{w}_d^{\T} \mathbf{x}\right),
\end{equation}
where the parameters are initialized as
$a_d \overset{\mathrm{i.i.d.}}{\sim} \mathcal{N}(0, 1)$, $\mathbf{w}_d \overset{\mathrm{i.i.d.}}{\sim} \mathcal{N}(0, \mathbf{I}_{d_0})$, and $\sigma: \R \to \R$ is an element-wise nonlinear activation function. 
Denote the collection of all parameters as $\Theta = \{a_d, \mathbf{w}_d\}_{d=1}^{d_1}$.

The seminal work of \citet{jacot_2018_ntk} shows that the dynamics of the training process of an infinite-width two-layer neural network can be captured by the neural tangent kernel (NTK) $\mathcal{K}(\mathbf{x}, \mathbf{x}^\prime)$, which is defined as
\begin{equation} \label{eq: ntk}
  \mathcal{K}(\mathbf{x}, \mathbf{x}^\prime) = {\nabla_{\Theta} f(\mathbf{x}; \Theta)}^{\T} \nabla_{\Theta} f(\mathbf{x}^\prime; \Theta),
\end{equation}

In the case of a two-layer homogeneous nonlinear network \cref{eq: shallow_nonlinear}, the NTK can be expressed as
\begin{equation} \label{eq: ntk_shallow}
  \mathcal{K}(\mathbf{x}, \mathbf{x}^\prime) = \frac{1}{d_1} \sum_{d=1}^{d_1} \left[a_d^2 \sigma^\prime\left(\mathbf{w}_d^{\T} \mathbf{x}\right) \sigma^\prime\left(\mathbf{w}_d^{\T} \mathbf{x}^\prime\right) \mathbf{x}^{\T} \mathbf{x}^{\prime} + \sigma(\mathbf{w}_d^{\T} \mathbf{x}) \sigma\left(\mathbf{w}_d^{\T} \mathbf{x}^\prime\right)\right]
\end{equation}

According to the law of large numbers, as $d_1 \to \infty$, the NTK converges to a deterministic kernel $\mathcal{K}_{\infty}(\mathbf{x}, \mathbf{x}^\prime)$, also known as the limiting NTK, which can be expressed as,
\begin{align} \label{eq: ntk_infty}
  \mathcal{K}_{\infty}(\mathbf{x}, \mathbf{x}^\prime) 
  & = \mathbb{E}_{a \sim \mathcal{N}(0,1), \mathbf{w} \sim \mathcal{N}(0, \mathbf{I}_{d_0})}\left[a^2 \sigma^\prime\left(\mathbf{w}^{\T} \mathbf{x}\right) \sigma^\prime\left(\mathbf{w}^{\T} \mathbf{x}^\prime\right) \mathbf{x}^{\T} \mathbf{x}^{\prime} + \sigma(\mathbf{w}^{\T} \mathbf{x}) \sigma\left(\mathbf{w}^{\T} \mathbf{x}^\prime\right)\right] \nonumber \\
  & = \mathbb{E}_{\mathbf{w} \sim \mathcal{N}(0, \mathbf{I}_{d_0})}\left[ \sigma^\prime\left(\mathbf{w}^{\T} \mathbf{x}\right) \sigma^\prime\left(\mathbf{w}^{\T} \mathbf{x}^\prime\right) \mathbf{x}^{\T} \mathbf{x}^{\prime} + \sigma(\mathbf{w}^{\T} \mathbf{x}) \sigma\left(\mathbf{w}^{\T} \mathbf{x}^\prime\right)\right]
\end{align}

\begin{proposition} \label{prop: equivariant_ntk}
  Let $\G$ be a finite group, and $\rho_{\X}$ be a unitary representation of $\G$ on $\mathbb{R}^{d_0}$. 
  If the activation function $\sigma$ and its derivative are both integrable with respect to Gaussian measure, i.e., $\sigma(x), \sigma^\prime(x) \in \mathcal{L}^1(\mathbb{R}, \gamma)$, where $\gamma$ is the standard Gaussian measure, then the limiting NTK $\mathcal{K}_{\infty}(\mathbf{x}, \mathbf{x}^\prime)$ is equivariant, i.e., $\mathcal{K}_{\infty}(\rho_{\X}(g) \mathbf{x}, \rho_{\X}(g) \mathbf{x}^\prime) = \mathcal{K}_{\infty}(\mathbf{x}, \mathbf{x}^\prime)$ for all $g \in \G$.
\end{proposition}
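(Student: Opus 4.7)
The plan is to exploit the fact that a unitary (hence orthogonal, since we are working over $\mathbb{R}^{d_0}$) representation preserves both Euclidean inner products and the standard Gaussian measure. These two invariances are exactly what is needed to show that the two summands in the definition of $\mathcal{K}_\infty$ in \cref{eq: ntk_infty} return to their original form after substituting $\rho_{\X}(g)\mathbf{x}$ and $\rho_{\X}(g)\mathbf{x}^\prime$, via a change of variables in $\mathbf{w}$.

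Concretely, I would first plug $\rho_{\X}(g)\mathbf{x}$ and $\rho_{\X}(g)\mathbf{x}^\prime$ into \cref{eq: ntk_infty}. The geometric prefactor simplifies immediately, since $(\rho_{\X}(g)\mathbf{x})^\T \rho_{\X}(g)\mathbf{x}^\prime = \mathbf{x}^\T \rho_{\X}(g)^\T \rho_{\X}(g)\mathbf{x}^\prime = \mathbf{x}^\T \mathbf{x}^\prime$ using $\rho_{\X}(g)^\T \rho_{\X}(g) = \mathbf{I}_{d_0}$. For each activation argument I would use the adjoint identity $\mathbf{w}^\T \rho_{\X}(g)\mathbf{x} = (\rho_{\X}(g)^\T \mathbf{w})^\T \mathbf{x}$ and introduce the change of variables $\mathbf{w}^\prime := \rho_{\X}(g)^\T \mathbf{w}$. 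Because $\rho_{\X}(g)^\T$ is orthogonal, its Jacobian determinant has absolute value $1$ and it pushes the standard Gaussian measure forward to itself, so $\mathbf{w}^\prime \sim \mathcal{N}(0, \mathbf{I}_{d_0})$. Applying this change of variables termwise to both $\sigma^\prime\sigma^\prime$ and $\sigma\sigma$ pieces yields $\mathcal{K}_\infty(\rho_{\X}(g)\mathbf{x}, \rho_{\X}(g)\mathbf{x}^\prime) = \mathcal{K}_\infty(\mathbf{x}, \mathbf{x}^\prime)$ by direct comparison with the definition.

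The main subtlety, rather than a genuine obstacle, is ensuring the expectation is well-defined so that the change of variables is rigorous and so that the expectation of the sum equals the sum of the expectations. Here the hypothesis $\sigma, \sigma^\prime \in \mathcal{L}^1(\mathbb{R}, \gamma)$ enters: the pushforwards of $\mathcal{N}(0, \mathbf{I}_{d_0})$ under the linear maps $\mathbf{w} \mapsto \mathbf{w}^\T \mathbf{x}$ and $\mathbf{w} \mapsto \mathbf{w}^\T \mathbf{x}^\prime$ are univariate centered Gaussians (with variances $\|\mathbf{x}\|^2$ and $\|\mathbf{x}^\prime\|^2$, which are just rescalings of $\gamma$), so integrability against $\gamma$ transfers to integrability of each factor under $\mathcal{N}(0, \mathbf{I}_{d_0})$. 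One then handles the products by Cauchy--Schwarz, promoting the assumption to the joint integrability needed for Fubini to apply. Once this bookkeeping is in place, the substitution argument is essentially a one-liner, and the proof extends verbatim to any compact group acting unitarily by replacing the finite-group averaging with integration against the Haar measure.
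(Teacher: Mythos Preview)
Your proposal is correct and takes essentially the same approach as the paper: both arguments use that $\rho_{\X}(g)$ preserves the inner product $\mathbf{x}^\T\mathbf{x}'$ and perform the change of variables $\mathbf{w}\mapsto\rho_{\X}(g)^{\T}\mathbf{w}$, exploiting the orthogonal invariance of $\mathcal{N}(0,\mathbf{I}_{d_0})$. Your integrability discussion via Cauchy--Schwarz mirrors the paper's own well-definedness check.
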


\begin{proof}
  The proof follows from the fact that isotropic Gaussian random variables are invariant under orthogonal transformations.
  According to Cauchy's inequality, we have 
  \begin{align*}
    \mathbb{E}_{\mathbf{w} \sim \mathcal{N}(0, \mathbf{I}_{d_0})}{\left[\sigma\left(\mathbf{w}^{\T} \mathbf{x}\right)\sigma\left(\mathbf{w}^{\T} \mathbf{x}^\prime\right)\right]} \leq \mathbb{E}_{\mathbf{w} \sim \mathcal{N}(0, \mathbf{I}_{d_0})}{\left[\sigma\left(\mathbf{w}^{\T} \mathbf{x}\right)\right]} \mathbb{E}_{\mathbf{w} \sim \mathcal{N}(0, \mathbf{I}_{d_0})}{\left[\sigma\left(\mathbf{w}^{\T} \mathbf{x}^\prime\right)\right]} < \infty.
  \end{align*}
  Therefore, the kernel $\mathcal{K}_{\infty}(\mathbf{x}, \mathbf{x}^\prime)$ is well-defined.
  The proof of the equivariance property is as follows:
  \begin{align*}
    \mathcal{K}_{\infty}(\rho_{\X}(g) \mathbf{x}, \rho_{\X}(g) \mathbf{x}^\prime) 
    & = \mathbb{E}_{\mathbf{w} \sim \mathcal{N}(0, \mathbf{I}_{d_0})}\left[ \sigma^\prime\left(\mathbf{w}^{\T} \rho_{\X}(g) \mathbf{x}\right) \sigma^\prime\left(\mathbf{w}^{\T} \rho_{\X}(g) \mathbf{x}^\prime\right) (\rho_{\X}(g) \mathbf{x})^{\T} (\rho_{\X}(g) \mathbf{x}^\prime)\right] + \\ 
    &\quad  \mathbb{E}_{\mathbf{w} \sim \mathcal{N}(0, \mathbf{I}_{d_0})}\left[  \sigma(\mathbf{w}^{\T} \rho_{\X}(g) \mathbf{x}) \sigma\left(\mathbf{w}^{\T} \rho_{\X}(g) \mathbf{x}^\prime\right)\right] \\ 
    & = \mathbb{E}_{\mathbf{v} \sim \mathcal{N}(0, \mathbf{I}_{d_0})}\left[ \sigma^\prime\left(\mathbf{v}^{\T} \mathbf{x}\right) \sigma^\prime\left(\mathbf{v}^{\T} \mathbf{x}^\prime\right) \mathbf{x}^{\T} \mathbf{x}^{\prime} + \sigma(\mathbf{v}^{\T} \mathbf{x}) \sigma\left(\mathbf{v}^{\T} \mathbf{x}^\prime\right)\right] \\
    & = \mathcal{K}_{\infty}(\mathbf{x}, \mathbf{x}^\prime),
  \end{align*}
  where $\mathbf{v} = \rho_{\X}(g) \mathbf{w}$ is also isotropic Gaussian, and the second equality follows from the fact that $\rho_{\X}(g)$ is an orthogonal transformation, i.e., ${\rho_{\X}(g)}^{\T} ={\rho_{\X}(g)}^{-1} $.
\end{proof}

\begin{remark}
  Most of the commonly used activation functions, such as ReLU, leaky ReLU, sigmoid, and tanh, satisfy the integrability condition. For example \citep{golikov_2022_ntk}, when 
  the activation function is ReLU, the limiting NTK can be expressed as
  \begin{equation}
    \mathcal{K}_{\infty}(\mathbf{x}, \mathbf{x}^\prime) = \frac{\mathbf{x}^{\T}\mathbf{x}^{\prime}}{2\pi} (\pi - \theta) + \frac{\norm{\mathbf{x}}\norm{\mathbf{x}^{\prime}}}{2 \pi} ((\pi - \theta)\cos\theta + \sin\theta),
  \end{equation}
  where $\theta = \arccos\left(\frac{\mathbf{x}^{\T}\mathbf{x}^{\prime}}{\norm{\mathbf{x}}\norm{\mathbf{x}^{\prime}}}\right)$.
\end{remark}

\begin{proposition} [Theorem 4.1 in \citet{li_2019_cntk}] \label{prop: data_augmentation_ntk}
  Let $\G$ be a finite group, and $\rho_{\X}$ be a unitary representation of $\G$ on $\mathbb{R}^{d_0}$. Let $\mathcal{K}$ be an equivariant kernel.
  Define the augmented kernel $\mathcal{K}^{\G}(\mathbf{x}, \mathbf{x}^\prime) := \mathbb{E}_{g \in \G}[\mathcal{K}(\rho_{\X}(g) \mathbf{x}, \mathbf{x}^\prime)]$. Let $\mathcal{D} = \{\mathbf{x}_i, y_i\}_{i=1}^{n}$ be a dataset, and $\mathcal{D}^{\G} = \{\rho_{\X}(g) \mathbf{x}_i, y_i\}_{i \in [n], g \in \G}$ be the augmented dataset. 
  We use $X$ and $\mathbf{y}$ to denote the data matrix and target vector of $\mathcal{D}$, and $X^{\G}$ and $\mathbf{y}^{\G}$ to denote the data matrix and target vector of $\mathcal{D}^{\G}$.
  We also use $\mathcal{K}_{X}$ to denote the kernel matrix of $\mathcal{K}$ on data matrix $X$.
  Then the prediction of $\mathcal{K}^{\G}$ on $\mathcal{D}$ is equivalent to the prediction of $\mathcal{K}$ on $\mathcal{D}^{\G}$, i.e., 
  \[
  \sum_{i=1}^{n} \alpha_i \mathcal{K}^{\G} (\mathbf{x}, \mathbf{x}_i) = \sum_{i \in [n], g \in \G} \beta_{i,g} \mathcal{K}(\mathbf{x}, \rho_{\X}(g) \mathbf{x}_i), \quad \forall \mathbf{x} \in \mathbb{R}^{d_0}
  \]
  where $\boldsymbol{\alpha} = \{\alpha_i\}_{i=1}^{n} = {\mathcal{K}_{X}^{\G}}^{-1} \mathbf{y}$, and $\boldsymbol{\beta} = \{\beta_i\}_{i=1}^{n} = {\mathcal{K}_{X^{\G}}}^{-1} \mathbf{y}^{\G}$.
\end{proposition}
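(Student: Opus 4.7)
The plan is to exploit the symmetry of the augmented kernel system under the natural action of $\G$ on the augmented index set $[n]\times \G$, and show it forces the dual coefficients to be constant along each orbit; the two predictions then match by a direct rewriting.

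\textbf{Step 1 (set up the group action on indices).} I would define, for each $h\in\G$, the permutation $\pi_h\colon [n]\times\G\to[n]\times\G,\ (i,g)\mapsto (i,hg)$, and let $P_h\in\R^{n|\G|\times n|\G|}$ be its permutation matrix. Two invariances are then the key ingredients. First, since the augmented label $y_{i,g}$ depends only on $i$, we have $P_h \mathbf{y}^{\G}=\mathbf{y}^{\G}$ for every $h\in\G$. Second, using equivariance of $\mathcal{K}$ (and unitarity of $\rho_{\X}$), $\mathcal{K}(\rho_{\X}(hg)\mathbf{x}_i,\rho_{\X}(hg')\mathbf{x}_j)=\mathcal{K}(\rho_{\X}(g)\mathbf{x}_i,\rho_{\X}(g')\mathbf{x}_j)$, so $P_h \mathcal{K}_{X^{\G}} P_h^{\T}=\mathcal{K}_{X^{\G}}$.

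\textbf{Step 2 (coefficients are constant along orbits).} Apply $P_h$ to both sides of $\mathcal{K}_{X^{\G}}\boldsymbol{\beta}=\mathbf{y}^{\G}$: by the two invariances above, $\mathcal{K}_{X^{\G}}(P_h\boldsymbol{\beta})=\mathbf{y}^{\G}$. Invertibility of $\mathcal{K}_{X^{\G}}$ (implicit in the proposition) then gives $P_h\boldsymbol{\beta}=\boldsymbol{\beta}$ for all $h\in\G$, i.e.\ $\beta_{i,g}$ is independent of $g$. Write $\beta_{i,g}=\gamma_i/|\G|$ for some $\gamma_i$.

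\textbf{Step 3 (rewrite the prediction as a $\mathcal{K}^{\G}$-prediction).} Using $\rho_{\X}(g)^{\T}=\rho_{\X}(g^{-1})$ and equivariance of $\mathcal{K}$, we have $\mathcal{K}(\mathbf{x},\rho_{\X}(g)\mathbf{x}_i)=\mathcal{K}(\rho_{\X}(g^{-1})\mathbf{x},\mathbf{x}_i)$, so after reindexing $g\mapsto g^{-1}$ inside the group average,
\begin{equation*}
\sum_{i,g}\beta_{i,g}\mathcal{K}(\mathbf{x},\rho_{\X}(g)\mathbf{x}_i)=\sum_{i}\gamma_i\cdot\frac{1}{|\G|}\sum_{g}\mathcal{K}(\rho_{\X}(g)\mathbf{x},\mathbf{x}_i)=\sum_{i}\gamma_i\mathcal{K}^{\G}(\mathbf{x},\mathbf{x}_i).
\end{equation*}

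\textbf{Step 4 (match $\gamma$ with $\alpha$).} The augmented-kernel solution interpolates on the augmented dataset; specialising at $(j,e)$ for each $j\in[n]$ gives $y_j=\sum_i\gamma_i\mathcal{K}^{\G}(\mathbf{x}_j,\mathbf{x}_i)$, i.e.\ $\mathcal{K}^{\G}_X\boldsymbol{\gamma}=\mathbf{y}$, which is the defining equation for $\boldsymbol{\alpha}$. Invertibility of $\mathcal{K}^{\G}_X$ yields $\boldsymbol{\gamma}=\boldsymbol{\alpha}$, and the two predictions agree at every $\mathbf{x}\in\R^{d_0}$.

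The only real obstacle is the invertibility issue: because one augments a dataset with many group copies, $\mathcal{K}_{X^{\G}}$ is very frequently singular, and $\boldsymbol{\beta}$ is only determined up to the kernel of $\mathcal{K}_{X^{\G}}$. A clean remedy is to replace the inverse by the Moore--Penrose pseudoinverse and to observe that, since $P_h$ commutes with $\mathcal{K}_{X^{\G}}$ and fixes $\mathbf{y}^{\G}$, the minimum-norm solution is still $P_h$-invariant; the rest of the argument is unchanged. Apart from this bookkeeping, every step is a routine algebraic manipulation based on equivariance and unitarity.
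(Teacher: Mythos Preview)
The paper does not supply its own proof of this proposition: it is stated as Theorem~4.1 of another reference and used as a black box in the subsequent corollary. So there is nothing in the paper to compare your argument against.

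On its merits, your proof is correct and clean. The key observation---that each $P_h$ commutes with $\mathcal{K}_{X^{\G}}$ and fixes $\mathbf{y}^{\G}$, forcing the dual vector $\boldsymbol{\beta}$ to be constant along $\G$-orbits---is exactly the right structural fact, and the remaining Steps~3--4 are then routine rewritings using equivariance and the change of variable $g\mapsto g^{-1}$. Your caveat about singularity is also well taken: as soon as two distinct augmented points $\rho_{\X}(g)\mathbf{x}_i$ and $\rho_{\X}(g')\mathbf{x}_i$ coincide (e.g., if some $\mathbf{x}_i$ has nontrivial stabiliser), $\mathcal{K}_{X^{\G}}$ has repeated rows and is singular, so the literal inverse in the statement need not exist. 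Your pseudoinverse fix is the standard remedy, and the commutation $P_h\mathcal{K}_{X^{\G}}=\mathcal{K}_{X^{\G}}P_h$ indeed implies $P_h\mathcal{K}_{X^{\G}}^{+}=\mathcal{K}_{X^{\G}}^{+}P_h$, so the minimum-norm solution is still $P_h$-invariant and the argument goes through unchanged.
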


\begin{corollary}
  Let $\G$ be a finite group, and $\rho_{\X}$ be a unitary representation of $\G$ on $\mathbb{R}^{d_0}$. 
  Let $\mathcal{D} = \{\mathbf{x}_i, y_i\}_{i=1}^{n}$ be a dataset, and $\mathcal{D}^{\G} = \{\rho_{\X}(g) \mathbf{x}_i, y_i\}_{i \in [n], g \in \G}$ be its augmented dataset. 
  For a univariate two-layer homogeneous ReLU network, the prediction of its limiting NTK $\mathcal{K}_{\infty}$ on the augmented dataset $\mathcal{D}^{\G}$ is invariant.
\end{corollary}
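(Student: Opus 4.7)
The plan is to combine the two propositions immediately preceding the corollary. The key observation is that once we know (i) the limiting NTK $\mathcal{K}_\infty$ is equivariant and (ii) kernel regression on the augmented data is equivalent to regression with the group-averaged kernel on the original data, the invariance of the prediction drops out by a short group-averaging argument.

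First I would verify that ReLU satisfies the hypotheses of \autoref{prop: equivariant_ntk}. The activation $\sigma(t) = \max(t,0)$ grows linearly and its weak derivative is the Heaviside step function, which is bounded. Hence both $\sigma$ and $\sigma'$ lie in $\mathcal{L}^1(\mathbb{R},\gamma)$ (in fact in every $\mathcal{L}^p(\mathbb{R},\gamma)$). Applying \autoref{prop: equivariant_ntk} then yields
\begin{equation*}
  \mathcal{K}_\infty(\rho_{\mathcal{X}}(g)\mathbf{x},\rho_{\mathcal{X}}(g)\mathbf{x}') = \mathcal{K}_\infty(\mathbf{x},\mathbf{x}'), \qquad \forall g \in \mathcal{G}.
\end{equation*}

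Next I would invoke \autoref{prop: data_augmentation_ntk} to rewrite the prediction of $\mathcal{K}_\infty$ on the augmented dataset $\mathcal{D}^{\mathcal{G}}$ in terms of the averaged kernel $\mathcal{K}_\infty^{\mathcal{G}}$ applied to the original dataset $\mathcal{D}$, namely as $f(\mathbf{x}) = \sum_{i=1}^n \alpha_i\, \mathcal{K}_\infty^{\mathcal{G}}(\mathbf{x},\mathbf{x}_i)$ for coefficients $\alpha_i$ that depend only on $\mathcal{D}$ and on $\mathcal{K}_\infty^{\mathcal{G}}$ restricted to $X$. Since the $\alpha_i$ do not depend on the evaluation point $\mathbf{x}$, it suffices to show that $\mathcal{K}_\infty^{\mathcal{G}}(\cdot,\mathbf{x}_i)$ is $\mathcal{G}$-invariant in its first argument.

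The invariance in the first argument of $\mathcal{K}_\infty^{\mathcal{G}}$ then follows from left-invariance of the uniform measure on the finite group $\mathcal{G}$: for any $h \in \mathcal{G}$,
\begin{equation*}
  \mathcal{K}_\infty^{\mathcal{G}}(\rho_{\mathcal{X}}(h)\mathbf{x},\mathbf{x}')
  = \frac{1}{|\mathcal{G}|}\sum_{g \in \mathcal{G}} \mathcal{K}_\infty(\rho_{\mathcal{X}}(gh)\mathbf{x},\mathbf{x}')
  = \frac{1}{|\mathcal{G}|}\sum_{g' \in \mathcal{G}} \mathcal{K}_\infty(\rho_{\mathcal{X}}(g')\mathbf{x},\mathbf{x}')
  = \mathcal{K}_\infty^{\mathcal{G}}(\mathbf{x},\mathbf{x}'),
\end{equation*}
where I substituted $g' = gh$ and used that multiplication by $h$ permutes $\mathcal{G}$. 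Combining this with the representer expansion above gives $f(\rho_{\mathcal{X}}(h)\mathbf{x}) = f(\mathbf{x})$ for all $h \in \mathcal{G}$, which is the desired invariance. There is no real obstacle here; the only thing to be careful about is checking the integrability hypothesis needed to invoke \autoref{prop: equivariant_ntk}, and confirming that the coefficients $\alpha_i$ in \autoref{prop: data_augmentation_ntk} are independent of the test point so that invariance of the kernel in its first slot transfers directly to invariance of the predictor.
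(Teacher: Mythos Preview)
Your proposal is correct and follows essentially the same approach as the paper: invoke \autoref{prop: equivariant_ntk} to get equivariance of $\mathcal{K}_\infty$, apply \autoref{prop: data_augmentation_ntk} to pass to the averaged kernel $\mathcal{K}_\infty^{\mathcal{G}}$ on the original data, and then verify invariance of $\mathcal{K}_\infty^{\mathcal{G}}$ in its first argument via the substitution $g'=gh$. Your explicit check that ReLU satisfies the integrability hypothesis is a small addition the paper leaves implicit (it is mentioned in a remark), but otherwise the arguments are the same.
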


\begin{proof}
  According to \cref{prop: equivariant_ntk}, the limiting NTK $\mathcal{K}_{\infty}$ of the two-layer homogeneous ReLU network is equivariant. Then we can apply \cref{prop: data_augmentation_ntk} to show that the prediction of $\mathcal{K}_{\infty}$ on the augmented dataset $\mathcal{D}^{\G}$ is equivalent to the prediction of $\mathcal{K}_{\infty}^{\G}$ on the original dataset $\mathcal{D}$.
  We only need to check that the augmented kernel $\mathcal{K}^{\G}_{\infty}$ will give us an invariant predictor. To verify this, we have 
  \begin{align}
    \sum_{i=1}^{n} \alpha_i \mathcal{K}^{\G}_{\infty} (\rho_{\X}(h)\mathbf{x}, \mathbf{x}_i) 
    &= \frac{1}{|\G|}\sum_{i=1}^{n} \alpha_i \sum_{g \in \G}\mathcal{K}_{\infty}(\rho_{\X}(g)\rho_{\X}(h)\mathbf{x}, \mathbf{x}_i) \nonumber \\
    &= \frac{1}{|\G|}\sum_{i=1}^{n} \alpha_i \sum_{g \in \G}\mathcal{K}_{\infty}(\rho_{\X}(gh)\mathbf{x}, \mathbf{x}_i) \nonumber \\
    &= \frac{1}{|\G|}\sum_{i=1}^{n} \alpha_i \sum_{g \in \G} \mathcal{K}^{\G} (\rho_{\X}(g)\mathbf{x}, \mathbf{x}_i) \nonumber \\
    &= \sum_{i=1}^{n} \alpha_i \mathcal{K}^{\G}_{\infty} (\mathbf{x}, \mathbf{x}_i) \nonumber
  \end{align}
\end{proof}

Now let's consider a group invariant convolutional network \citep{pmlr-v48-cohenc16} with the same number of parameters as in \cref{eq: shallow_nonlinear}. 
It is worth noting that this convolutional network only contains one group-lifting layer and a group-average pooling layer. 
Middle group-convolutional layers are not included in this model for the sake of simplicity. 
But it is enought to illustrate the main idea.
The output of the network is given by
\begin{equation} \label{eq: shallow_nonlinear_conv}
  f^{\text{conv}}(\mathbf{x}; \Theta) = \frac{1}{\sqrt{d_1}} \sum_{d=1}^{d_1} a_d \left( \frac{1}{|\G|}\sum_{g \in \G}\sigma\left(\mathbf{w}_d^{\T} \rho_{\X}(g) \mathbf{x}\right) \right),
\end{equation}
where the parameters are initialized as
$a_d \overset{\mathrm{i.i.d.}}{\sim} \mathcal{N}(0, 1)$, $\mathbf{w}_d \overset{\mathrm{i.i.d.}}{\sim} \mathcal{N}(0, \mathbf{I}_{d_0})$, and $\sigma: \R \to \R$ is an element-wise nonlinear activation function.
One can verify that the above model \cref{eq: shallow_nonlinear_conv} is invariant with respect to $\rho_{\X}$.

\begin{proposition}
  Let $\G$ be a finite group, and $\rho_{\X}$ be a unitary representation of $\G$ on $\mathbb{R}^{d_0}$. Let $f$ be a univariate two-layer homogeneous network with activation function $\sigma$, as defined in \cref{eq: shallow_nonlinear}, and $f^{\text{conv}}$ be a univariate two-layer group-convolutional network with the same number of parameters, as $f$ as defined in \cref{eq: shallow_nonlinear_conv}.
  For any dataset $\mathcal{D} = \{\mathbf{x}_i, y_i\}_{i=1}^{n}$, let $\mathcal{D}^{\G} = \{\rho_{\X}(g) \mathbf{x}_i, y_i\}_{i \in [n], g \in \G}$ be the augmented dataset.
  We use $X$ and $\mathbf{y}$ to denote the data matrix and target vector of $\mathcal{D}$, and $X^{\G}$ and $\mathbf{y}^{\G}$ to denote the data matrix and target vector of $\mathcal{D}^{\G}$.
  We also use $\mathcal{K}_{X}$ to denote the kernel matrix of $\mathcal{K}$ on data matrix $X$.
  Then the NTK predictor of $f$ on $\mathcal{D}^{\G}$ is equivalent to the NTK predictor of $f^{\text{conv}}$ on $\mathcal{D}$, i.e.,
  \[
  \sum_{i \in [n], g \in \G} \alpha_i \mathcal{K}_{\infty}(\mathbf{x}, \rho_{\X}(g) \mathbf{x}_i) = \sum_{i=1}^{n} \beta_i \mathcal{K}_{\infty}^{\text{conv}}(\mathbf{x}, \mathbf{x}_i), \quad \forall g \in \G, \forall \mathbf{x} \in \mathbb{R}^{d_0}
  \]
  where $\boldsymbol{\alpha} = \{\alpha_i\}_{i=1}^{n} = {\mathcal{K}_{X^{\G}}}^{-1} \mathbf{y}^{\G}$, and $\boldsymbol{\beta} = \{\beta_i\}_{i=1}^{n} = {\mathcal{K}_{X}}^{-1} \mathbf{y}$.
\end{proposition}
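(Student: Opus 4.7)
The plan is to compute the limiting NTK of the group-convolutional network $f^{\text{conv}}$, identify it with the group-averaged kernel $\mathcal{K}^{\G}_{\infty}$ associated with $f$, and then invoke Proposition~\ref{prop: data_augmentation_ntk} to conclude. First, I would write down the gradients of $f^{\text{conv}}$ with respect to its parameters: since $f^{\text{conv}}$ is linear in each $a_d$ and each $\mathbf{w}_d$ enters only through the inner group-average, one gets
\begin{align*}
\frac{\partial f^{\text{conv}}}{\partial a_d}(\mathbf{x}) &= \frac{1}{\sqrt{d_1}\,|\G|}\sum_{g\in\G}\sigma\bigl(\mathbf{w}_d^{\T}\rho_{\X}(g)\mathbf{x}\bigr),\\
\frac{\partial f^{\text{conv}}}{\partial \mathbf{w}_d}(\mathbf{x}) &= \frac{a_d}{\sqrt{d_1}\,|\G|}\sum_{g\in\G}\sigma'\bigl(\mathbf{w}_d^{\T}\rho_{\X}(g)\mathbf{x}\bigr)\,\rho_{\X}(g)\mathbf{x}.
\end{align*}

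Second, I would form the empirical NTK $\mathcal{K}^{\text{conv}}(\mathbf{x},\mathbf{x}')$ by contracting these gradients as in \eqref{eq: ntk}, and take the infinite-width limit $d_1\to\infty$. Because the pairs $(a_d,\mathbf{w}_d)$ are i.i.d.\ and the integrability hypotheses on $\sigma,\sigma'$ carry over to the sums over $\G$ (the group is finite), the law of large numbers applies exactly as in the derivation of $\mathcal{K}_{\infty}$, yielding
\begin{equation*}
\mathcal{K}^{\text{conv}}_{\infty}(\mathbf{x},\mathbf{x}') = \frac{1}{|\G|^2}\sum_{g,g'\in\G}\mathcal{K}_{\infty}\bigl(\rho_{\X}(g)\mathbf{x},\rho_{\X}(g')\mathbf{x}'\bigr).
\end{equation*}

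Third, I would apply Proposition~\ref{prop: equivariant_ntk}, which gives $\mathcal{K}_{\infty}(\rho_{\X}(g)\mathbf{x},\rho_{\X}(g')\mathbf{x}')=\mathcal{K}_{\infty}(\mathbf{x},\rho_{\X}(g^{-1}g')\mathbf{x}')$. Re-indexing the double sum by $h:=g^{-1}g'$ collapses one of the two group sums and gives
\begin{equation*}
\mathcal{K}^{\text{conv}}_{\infty}(\mathbf{x},\mathbf{x}') = \frac{1}{|\G|}\sum_{h\in\G}\mathcal{K}_{\infty}(\mathbf{x},\rho_{\X}(h)\mathbf{x}') = \mathcal{K}^{\G}_{\infty}(\mathbf{x},\mathbf{x}').
\end{equation*}
Thus the limiting NTK of $f^{\text{conv}}$ is exactly the augmented kernel built from the limiting NTK of $f$. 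Substituting this identity into the kernel-regression predictors on both sides and applying Proposition~\ref{prop: data_augmentation_ntk} (which equates the $\mathcal{K}^{\G}_{\infty}$-predictor on $\mathcal{D}$ with the $\mathcal{K}_{\infty}$-predictor on $\mathcal{D}^{\G}$) delivers the claimed equality of predictors.

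The main obstacle I anticipate is the second step, namely justifying the infinite-width limit for $f^{\text{conv}}$ rigorously: one has to verify that the cross-terms between different neurons vanish in the limit and that one may exchange the limit with the expectation against the Gaussian measure on $\mathbf{w}$. The integrability of $\sigma$ and $\sigma'$ (used already in Proposition~\ref{prop: equivariant_ntk}) and finiteness of $\G$ make this a standard application of dominated convergence and the strong law, but it is the only place where genuine analysis is required; everything else is algebraic manipulation together with the two earlier propositions.
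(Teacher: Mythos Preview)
Your proposal is correct and follows essentially the same approach as the paper: both show $\mathcal{K}^{\text{conv}}_{\infty}=\mathcal{K}^{\G}_{\infty}$ and then invoke Proposition~\ref{prop: data_augmentation_ntk}. The only difference is presentational---you package the computation as the clean identity $\mathcal{K}^{\text{conv}}_{\infty}(\mathbf{x},\mathbf{x}')=\frac{1}{|\G|^2}\sum_{g,g'}\mathcal{K}_{\infty}(\rho_{\X}(g)\mathbf{x},\rho_{\X}(g')\mathbf{x}')$ and then cite Proposition~\ref{prop: equivariant_ntk}, whereas the paper expands the two NTK terms separately and performs the Gaussian change of variables $\mathbf{v}=\rho_{\X}(g')^{\T}\mathbf{w}$ inline; these are the same argument at different levels of abstraction.
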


\begin{proof}
  We need to show that the limiting NTK $\mathcal{K}^{\text{conv}}_{\infty}$ of the group-convolutional network $f^{\text{conv}}$ is equivalent to the augmented limiting NTK $\mathcal{K}_{\infty}^{\G}$ of the original network $f$.
  The limiting NTK of $f^{\text{conv}}$ can be expressed as
  \begin{align} \label{eq: ntk_infty_conv}
    \mathcal{K}_{\infty}^{\text{conv}}(\mathbf{x}, \mathbf{x}^\prime)
    & = \mathbb{E}_{\mathbf{w} \sim \mathcal{N}(0, \mathbf{I}_{d_0})}\left[ \left(\frac{1}{|\G|} \sum_{g \in \G}\sigma\left(\mathbf{w}^{\T} \rho_{\X}(g) \mathbf{x}\right) \right)\left(\frac{1}{|\G|} \sum_{g^{\prime} \in \G}\sigma\left(\mathbf{w}^{\T} \rho_{\X}(g^{\prime}) \mathbf{x}^{\prime}\right) \right)\right] \nonumber \\
    & + \mathbb{E}_{\mathbf{w} \sim \mathcal{N}(0, \mathbf{I}_{d_0})}\left[ \left(\frac{1}{|\G|} \sum_{g \in \G}\sigma^\prime\left(\mathbf{w}^{\T} \rho_{\X}(g) \mathbf{x}\right) \mathbf{x}^{\T} {\rho_{\X}(g)}^{\T} \right) \left(\frac{1}{|\G|} \sum_{g^{\prime} \in \G}\sigma^\prime\left(\mathbf{w}^{\T} \rho_{\X}(g^{\prime}) \mathbf{x}^{\prime}\right) {\rho_{\X}(g^{\prime})} \mathbf{x}^{\prime} \right) \right]
  \end{align}
  The augmented NTK of $f$ can be expressed as
  \begin{align} \label{eq: ntk_infty_aug}
    \mathcal{K}_{\infty}^{\G}(\mathbf{x}, \mathbf{x}^\prime)
    & = \mathbb{E}_{\mathbf{w} \sim \mathcal{N}(0, \mathbf{I}_{d_0})}\left[ \frac{1}{|\G|} \sum_{g \in \G} \sigma(\mathbf{w}^{\T} \rho_{\X}(g) \mathbf{x}) \sigma\left(\mathbf{w}^{\T} \mathbf{x}^\prime\right) + \frac{1}{|\G|} \sum_{g \in \G} \sigma^\prime\left(\mathbf{w}^{\T} \rho_{\X}(g) \mathbf{x}\right) \sigma^\prime\left(\mathbf{w}^{\T} \mathbf{x}^\prime\right) \mathbf{x}^{\T} {\rho_{\X}(g)}^{\T}  \mathbf{x}^{\prime}\right]
  \end{align}
  For the first term, we have 
  \begin{align*}
    &\quad  \mathbb{E}_{\mathbf{w} \sim \mathcal{N}(0, \mathbf{I}_{d_0})}\left[ \left(\frac{1}{|\G|} \sum_{g \in \G}\sigma\left(\mathbf{w}^{\T} \rho_{\X}(g) \mathbf{x}\right) \right)\left(\frac{1}{|\G|} \sum_{g^{\prime} \in \G}\sigma\left(\mathbf{w}^{\T} \rho_{\X}(g^{\prime}) \mathbf{x}^{\prime}\right) \right)\right] \nonumber \\
    & = \frac{1}{|\G|} \sum_{g^\prime \in \G} \mathbb{E}_{\mathbf{w} \sim \mathcal{N}(0, \mathbf{I}_{d_0})}\left[ \left(\frac{1}{|\G|} \sum_{g \in \G}\sigma\left(\mathbf{w}^{\T} \rho_{\X}(g) \mathbf{x}\right) \right)\sigma\left(\mathbf{w}^{\T} \rho_{\X}(g^{\prime}) \mathbf{x}^{\prime}\right) \right] \nonumber \\
    & = \frac{1}{|\G|} \sum_{g^\prime \in \G} \mathbb{E}_{\mathbf{v} \sim \mathcal{N}(0, \mathbf{I}_{d_0})}\left[ \left(\frac{1}{|\G|} \sum_{g \in \G}\sigma\left(\mathbf{v}^{\T} \rho_{\X}({g^{\prime}}^{-1}g) \mathbf{x}\right) \right)\sigma\left(\mathbf{v}^{\T} \mathbf{x}^{\prime}\right) \right] \nonumber \\
    & = \mathbb{E}_{\mathbf{v} \sim \mathcal{N}(0, \mathbf{I}_{d_0})}\left[ \frac{1}{|\G|} \sum_{g \in \G} \sigma(\mathbf{v}^{\T} \rho_{\X}(g) \mathbf{x}) \sigma\left(\mathbf{v}^{\T} \mathbf{x}^\prime\right) \right],
  \end{align*}
  where the second equality follows from the fact that isotropic Gaussian random variables are invariant under orthogonal transformations, and the last equality applies the change of variable ${g^\prime}^{-1}g \mapsto g$.
  Similarly, for the second term, we have 
  \begin{align*}
    &\quad  \mathbb{E}_{\mathbf{w} \sim \mathcal{N}(0, \mathbf{I}_{d_0})}\left[ \left(\frac{1}{|\G|} \sum_{g \in \G}\sigma^\prime\left(\mathbf{w}^{\T} \rho_{\X}(g) \mathbf{x}\right) \mathbf{x}^{\T} {\rho_{\X}(g)}^{\T} \right) \left(\frac{1}{|\G|} \sum_{g^{\prime} \in \G}\sigma^\prime\left(\mathbf{w}^{\T} \rho_{\X}(g^{\prime}) \mathbf{x}^{\prime}\right) {\rho_{\X}(g^{\prime})} \mathbf{x}^{\prime} \right) \right] \\
    &= \frac{1}{|\G|} \sum_{g^{\prime} \in \G} \mathbb{E}_{\mathbf{w} \sim \mathcal{N}(0, \mathbf{I}_{d_0})}\left[ \left(\frac{1}{|\G|} \sum_{g \in \G}\sigma^\prime\left(\mathbf{w}^{\T} \rho_{\X}(g) \mathbf{x}\right) \mathbf{x}^{\T} {\rho_{\X}(g)}^{\T} \right)\sigma^\prime\left(\mathbf{w}^{\T} \rho_{\X}(g^{\prime}) \mathbf{x}^{\prime}\right) {\rho_{\X}(g^{\prime})} \mathbf{x}^{\prime} \right]  \\
    &= \frac{1}{|\G|} \sum_{g^{\prime} \in \G} \mathbb{E}_{\mathbf{v} \sim \mathcal{N}(0, \mathbf{I}_{d_0})}\left[ \left(\frac{1}{|\G|} \sum_{g \in \G}\sigma^\prime\left(\mathbf{v}^{\T} \rho_{\X}({g^{\prime}}^{-1}g) \mathbf{x}\right) \mathbf{x}^{\T} {\rho_{\X}(g)}^{\T} \right)\sigma^\prime\left(\mathbf{v}^{\T} \rho_{\X}(g^{\prime}) \mathbf{x}^{\prime}\right) {\rho_{\X}(g^{\prime})} \mathbf{x}^{\prime} \right]  \\
    &= \mathbb{E}_{\mathbf{v} \sim \mathcal{N}(0, \mathbf{I}_{d_0})}\left[ \frac{1}{|\G|} \sum_{g \in \G} \sigma^\prime\left(\mathbf{v}^{\T} \rho_{\X}(g) \mathbf{x}\right) \sigma^\prime\left(\mathbf{v}^{\T} \mathbf{x}^{\prime}\right) \mathbf{x}^{\T} {\rho_{\X}(g)}^{\T}  {\rho_{\X}(g)} \mathbf{x}^{\prime} \right].
  \end{align*}
  Therefore, we have shown that $\mathcal{K}^{\text{conv}}_{\infty}(\mathbf{x}, \mathbf{x}^\prime) = \mathcal{K}_{\infty}^{\G}(\mathbf{x}, \mathbf{x}^\prime)$ for all $\mathbf{x}, \mathbf{x}^\prime \in \mathbb{R}^{d_0}$.
  Finally, we can apply \cref{prop: data_augmentation_ntk} to show that the NTK predictor of $f$ on $\mathcal{D}^{\G}$ is equivalent to the NTK predictor of $f^{\text{conv}}$ on $\mathcal{D}$.
\end{proof}

\subsection{Empirical Spectrum of Target Matrices in MNIST Dataset}
\label{app: MINST_spectrum}
As discussed in \autoref{remark:full_rank} and \autoref{cor: critical_points}, we have assumptions about the rank and spectrum of the target matrices we are trying to approximate. As shown in \autoref{fig:MNIST_spectrum}, we empirically computed the singular values of $\widebar{Z}^{da}$, $\widebar{Z}^{inv}$, $\widebar{Z(\lambda)}^{reg}$ for MNIST dataset. We can see that all three target matrices have full rank. The singular values are pairwise different as well. Thus, the previous assumptions in \autoref{remark:full_rank} and \autoref{cor: critical_points} are satisfied. 

\subsection{Experiments for Cross Entropy Loss}
\label{app: experiments_ce}
As mentioned in \autoref{sec: experiments}, we have observed that even under cross-entropy loss, the specific entries in end-to-end matrix $W$ converge to approximately the same values, indicating that the learned map is nearly invariant (see \autoref{fig:weights_trajectory_ce}). Furthermore, the training curves share similar patterns as those under MSE loss (see \autoref{fig:training_curve_ce} \& \autoref{fig:training_curve_mse}).
\begin{figure}[htbp]
  \centering
  \begin{minipage}{.48\textwidth}
      \centering
      \includegraphics[clip=true, trim=.5cm .25cm .5cm .5cm, width=0.85\textwidth]{./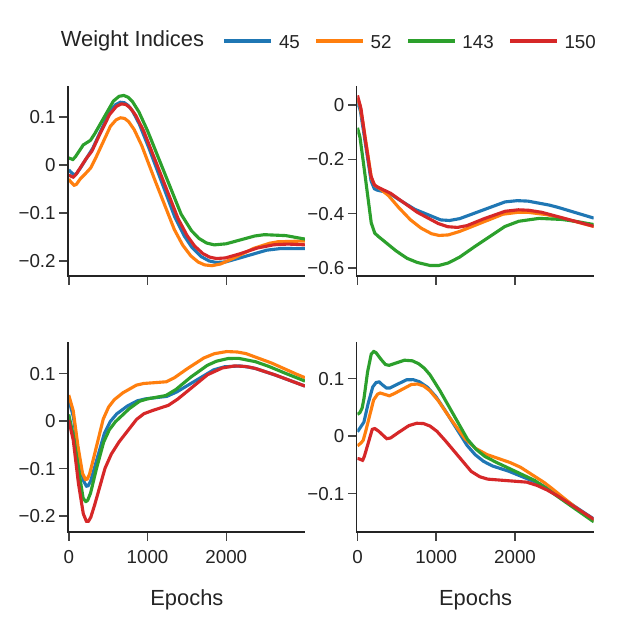}
      \caption{Weights in a two-layer linear neural network trained using data augmentation with cross-entropy loss.}
      \label{fig:weights_trajectory_ce}
  \end{minipage}%
  \begin{minipage}{.48\textwidth}
      \centering
      \includegraphics[clip=true, trim=.5cm .75cm .5cm .7cm, width=.95\textwidth]{./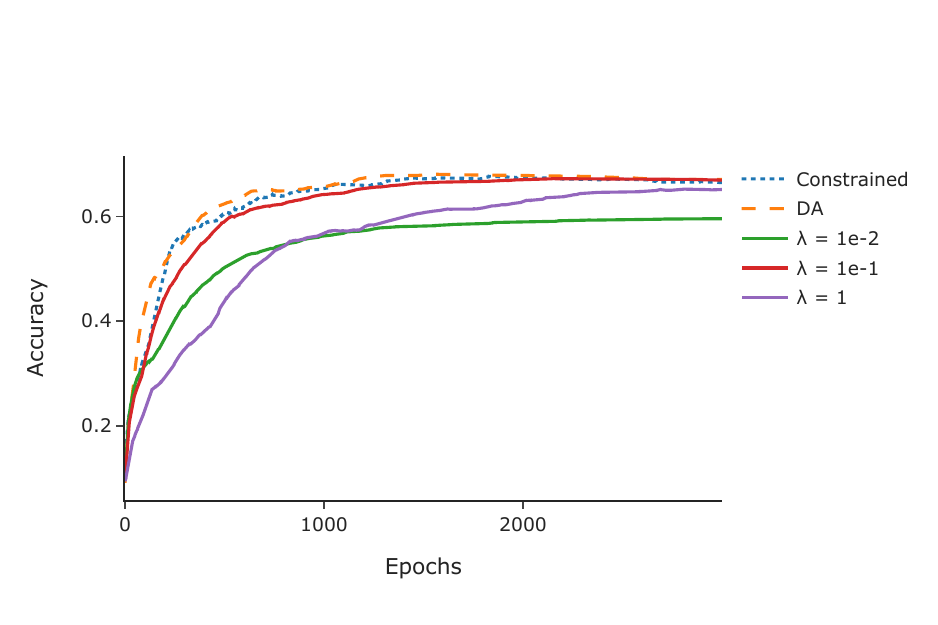}
      \caption{Training curves for data augmentation (DA), regularization ($\lambda$), and constrained model under cross-entropy loss.}
      \label{fig:training_curve_ce}
  \end{minipage}
\end{figure}

In \autoref{fig:ce_Frob} and \autoref{fig:ce_Frob_ratio}, we are still plotting $\|W^{\perp}\|_F$ for data augmentation and regularization trained on the same dataset, but with cross entropy loss. It is observed that, for larger $\lambda$, the dynamics of $\|W^{\perp}\|_F$ resemble those when trained with MSE (see \autoref{fig:mse_Frob}). On the other hand, for small $\lambda$, $\|W^{\perp}\|_F$ may increase at first, and then decrease. For data augmentation, if we allow more epochs, we can still observe that $\|W^{\perp}\|_F$ decreases after increasing. Our theoretical results only support the scenario for mean squared loss. Thus, when trained with cross entropy, we cannot say whether all the critical points are invariant or not. Future work can be done to investigate the critical points when trained with cross entropy loss.

\begin{figure}[htbp]
  \centering
  \begin{minipage}{.48\textwidth}
      \centering
      \includegraphics[clip=true, trim=.5cm .25cm .5cm .5cm, width=\textwidth]{./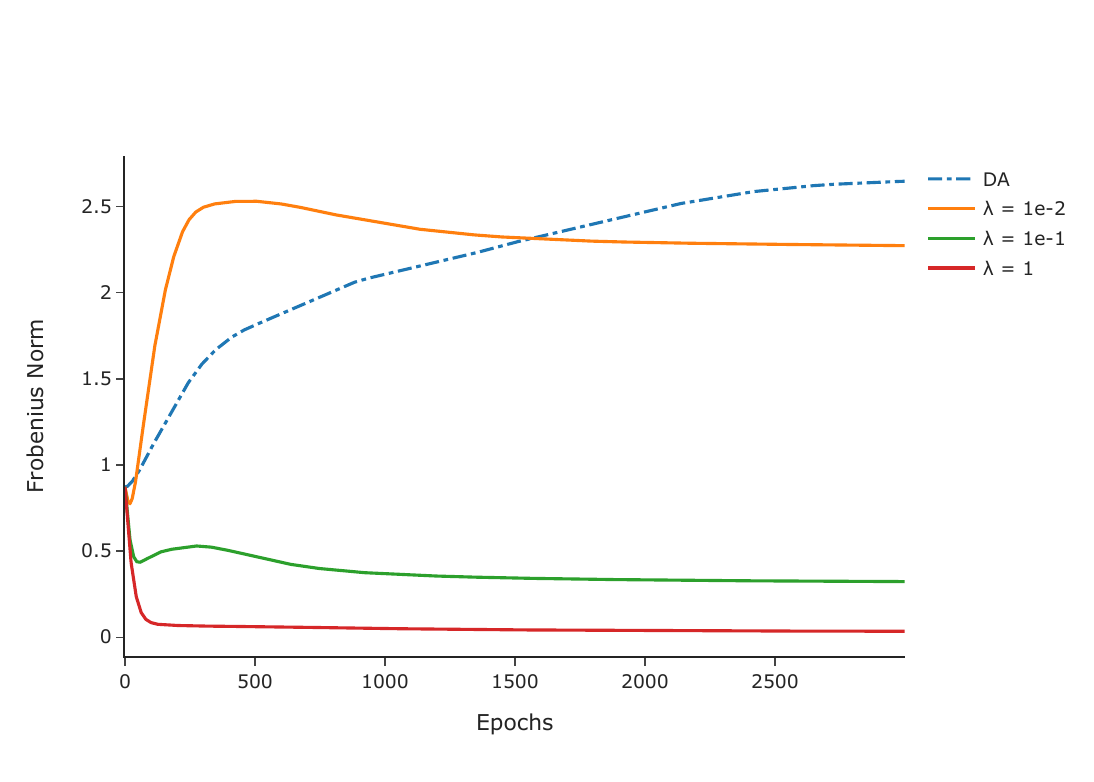}
      \caption{$\|W_{\perp}\|_F$ where $W_{\perp}$ is the non-invariant part of $W$ under cross-entropy loss.}
      \label{fig:ce_Frob}
  \end{minipage}%
  \begin{minipage}{.48\textwidth}
      \centering
      \includegraphics[clip=true, trim=.5cm .75cm .5cm .7cm, width=\textwidth]{./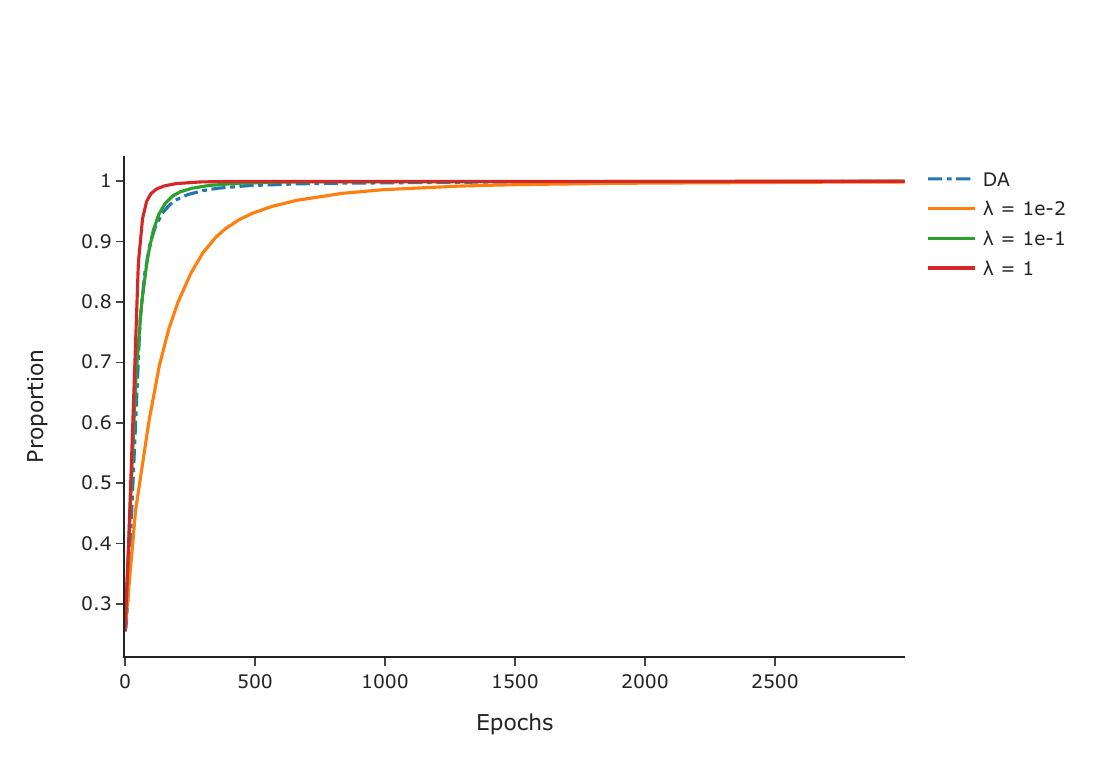}
      \caption{$\|W-W_{\perp}\|_F^2/\|W\|_F^2$ under cross-entropy loss.}
      \label{fig:ce_Frob_ratio}
  \end{minipage}
\end{figure}

\subsection{Experiments for Two-layer Nonlinear Network}
\label{app: nonlinear}
In \autoref{fig:mse_nonlinear_training}, we show the training curve for a two-layer neural network with different nonlinear activation functions trained with data augmentation and hard-wiring. The setup is the same as previous experiments in \autoref{sec: experiments}. In this experiment, we used 5000 samples from the MNIST dataset for training with mean squared loss (MSE). Meanwhile, we also test the case when there is not a bottleneck middle layer. When the middle layer has a bottleneck, we set the number of hidden units as 7; otherwise, the number of hidden units is 15. We can see that both data augmentation and constrained model have similar loss in the late phase of training for all four activation functions, especially when there is a bottleneck.

Besdies, \citet{moskalev2023ginvariance} suggests that invariance learned from data augmentation deteriorates under distribution shift in a classification setting. The architecture they choose is a 5-layer ReLU network. We would like to investigate this in a regression setting. The model we use is a 2-layer neural network with different activation functions trained with data augmentation and MSE. 
For any function $f$ and an input point $x \in \X$, to measure the amount of invariance of $f$, we evaluate the scaled variance of outputs across the group orbit, 
$$
\epsilon_{inv}(f, x) :=  \mathbb{E}_{g \sim \lambda} \left(1 - \frac{f(gx)}{\widebar{f}(x)} \right)^2,
$$
where $\widebar{f}(x):= \mathbb{E}_{g \sim \lambda}[f(gx)]$, and $\lambda$ is the Haar measure on group $\G$.
In \autoref{fig: nonlinear_inv}, we train a 2-layer neural network with different activation functions on MNIST with data augmentation using training sample sizes $N = 1000$ and $N = 5000$. After training, we calculate $\epsilon_{inv}(f, x)$ for two different datasets: MNIST and Gaussian. 
For MNIST, we use 5000 samples from the original test set in MNIST. 
For Gaussian, we sample 5000 points from an isotropic Gaussian distribution in dimension 196. 
We are showing the median of $\{\epsilon_{inv}(f, x)\}_{x \in \mathcal{D}}$ in \autoref{fig: nonlinear_inv}.

\begin{figure}[htbp]
  \centering
  \includegraphics[width = 0.78\textwidth]{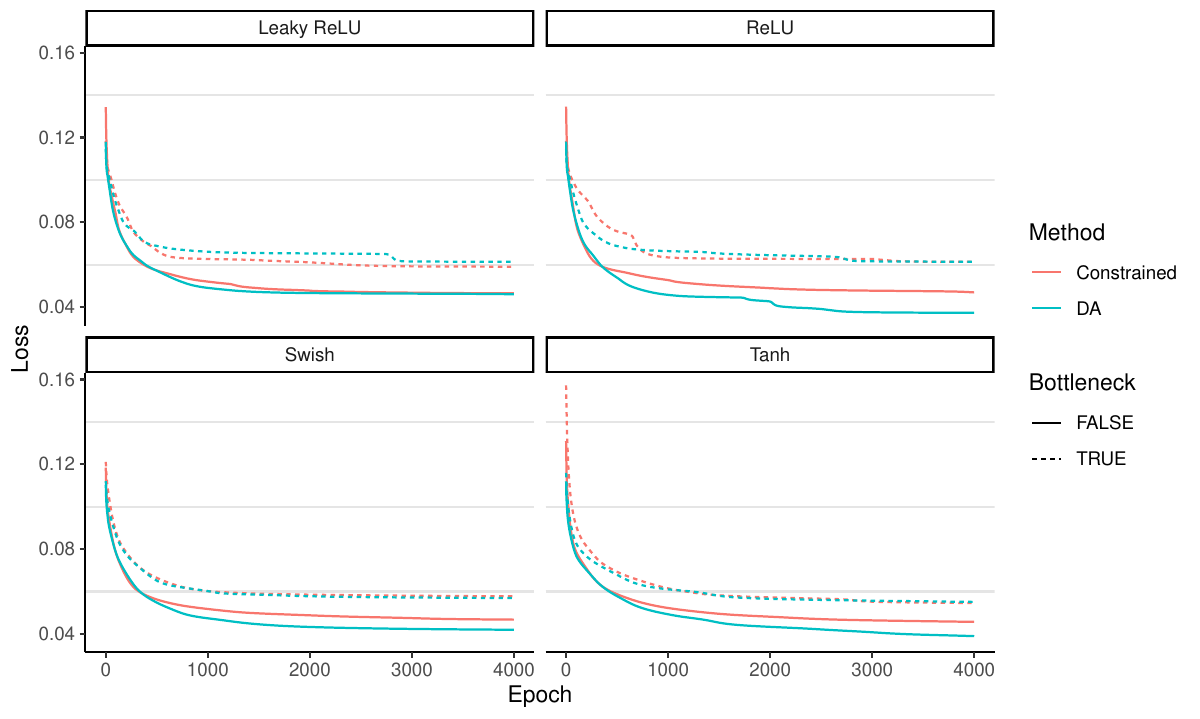}
  \caption{Training curves for a two-layer NN with different nonlinear activation functions via data augmentation and hard-wiring on MNIST.}
  \label{fig:mse_nonlinear_training}
\end{figure}

The observations can be summarized as follows:
\begin{enumerate}
    \item \textbf{Effect of the size of the model:} Compared to the case without a bottleneck middle layer, $\epsilon_{inv}(f, x)$ is significantly smaller when there is a bottleneck. This suggests that it is more difficult to learn invariance from the data when the model has more parameters.
    \item \textbf{Effect of the amount of training data:} We notice that $\epsilon_{inv}(f, x)$ is smaller when there are more training data. For underdetermined linear models, i.e., when the number of data points exceeds the input dimension, \autoref{cor: critical_points} shows that all critical points are invariant. 
    However, when the model is nonlinear, we need more data in order to learn the invariance via data augmentation. 
    \item \textbf{Robustness under distribution shift:} Though the model is trained on MNIST, $\epsilon_{inv}(f, x)$ does not increase significantly even when the model is tested on a completely different dataset. 
    This suggests that the invariance learned from the data is fairly robust. 
\end{enumerate}
\begin{figure}[htbp]
    \centering
    \includegraphics[width=0.8\linewidth]{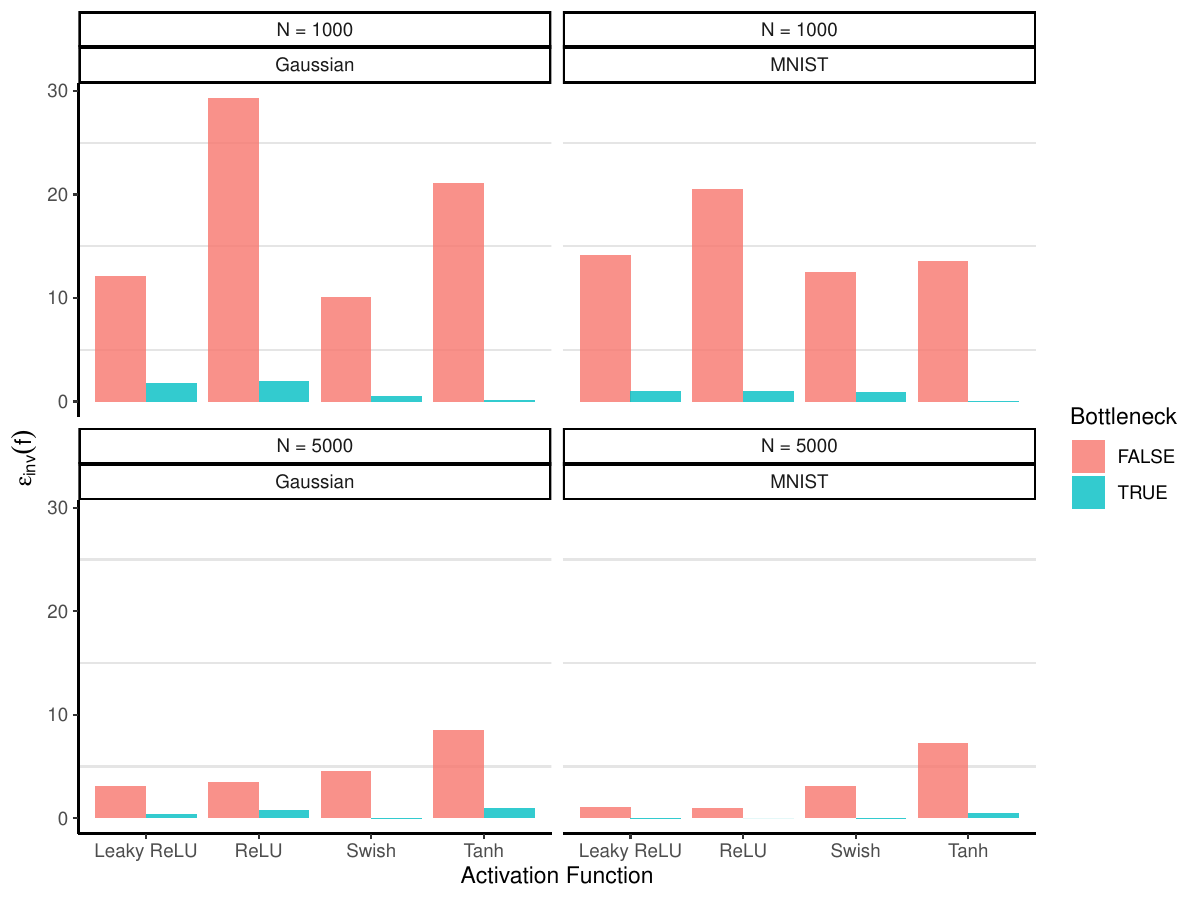}
    \caption{Median of the measure $\epsilon_{inv}(f)$ of discrepancy from invariance for 2-layer neural networks with different activation functions, trained on MNIST with data augmentation.} 
    \label{fig: nonlinear_inv}
\end{figure}


\end{document}